\documentclass{article}

\usepackage[preprint]{neurips_2026}

\usepackage[utf8]{inputenc} 
\usepackage[T1]{fontenc}    
\usepackage{hyperref}       
\usepackage{url}            
\usepackage{booktabs}       
\usepackage{amsfonts}       
\usepackage{nicefrac}       
\usepackage{microtype}      
\usepackage{xcolor}         
\usepackage{graphicx}      
\usepackage{wrapfig}       

\usepackage[most]{tcolorbox}

\tcbset{
  takeawaybox/.style={
    colback=white,      
    colframe=black,     
    boxrule=0.8pt,      
    arc=0mm,            
    left=6pt,
    right=6pt,
    top=6pt,
    bottom=6pt
  }
}

\newcommand{\RB}{\mathbb{R}}

\newcommand{\NB}{\mathbb{N}}

\newcommand{\FC}{\mathcal{F}}

\newcommand{\CC}{\mathcal{C}}

\newcommand{\GD}{\mathrm{GD}}

\newcommand{\widetildeparallel}[1]{%
  \widetilde{#1}%
  \kern0.1em 
  ^{\vphantom{#1}\parallel}%
}
\newcommand{\widetildeperp}[1]{%
  \widetilde{#1}%
  \kern0.1em 
  ^{\vphantom{#1}\perp}%
}
\newcommand{\Hom}{\mathrm{Hom}}

\newcommand{\tw}{\widetilde{w}}

\newcommand{\bw}{\overline{w}}
\newcommand{\bW}{\smash{\overline{W}}\vphantom{W}}
\newcommand{\bM}{\overline{M}}
\newcommand{\tr}{\mathrm{tr}}

\usepackage{amsmath}
\usepackage{amssymb}
\usepackage{mathtools}
\usepackage{amsthm}
\usepackage{tabularx}
\usepackage{enumitem}
\usepackage{tikz}
\usetikzlibrary{positioning}

\theoremstyle{plain}
\newtheorem{theorem}{Theorem}[section]
\newtheorem{proposition}[theorem]{Proposition}

\newtheorem{lemma}[theorem]{Lemma}

\theoremstyle{definition}

\newtheorem{assumption}[theorem]{Assumption}
\theoremstyle{remark}

\title{
Dynamics of Gradient Descent with Large Step Size Near a Manifold of Flat Minima\\
}

\author{%
  Lachlan Ewen MacDonald \quad \textbf{Ren\'{e} Vidal} \\
  Innovation in Data Engineering and Science (IDEAS)\\
  University of Pennsylvania\\
  Pennsylvania, PA 19104 \\
  \texttt{lemacdonald@protonmail.com} \\
}

\begin{document}

\maketitle

\begin{abstract}  
    An important quantity in the theory of gradient descent (GD) is the \emph{sharpness}, defined as the largest eigenvalue of the objective Hessian. Classical analyses typically require the step size to be uniformly smaller than twice the reciprocal of the sharpness, but this condition is frequently violated in the training of deep neural networks. Recent work \cite{macdonaldeos} bridges this gap in the setting of overparametrised least-squares with a \emph{single scalar output}, providing a normal form for large-step GD in a neighbourhood of an \emph{isolated} flat minimum and establishing three corresponding convergence results. In this paper, we extend this theory in two directions: (1) to overparametrised least-squares with \emph{vector-valued outputs} (including regression with arbitrarily many observations), and (2) to a neighbourhood of a \emph{manifold} of flat minima (which we show is essential for applications such as matrix factorisation). We generalise both the normal form and all three convergence theorems of \cite{macdonaldeos} to this broader setting, overcoming several technical challenges. We further show that our framework applies to deep matrix factorisation under mild assumptions, yielding several new structural results. In particular, we prove that the set of flat minima forms a fibre bundle over a product of spheres, and that the sharpness is Morse-Bott along this manifold.
        
\end{abstract}

\section{Introduction}

A natural way of finding a minimum of a smooth function $\ell:\RB^p\rightarrow\RB$ is by iterating the \emph{gradient descent} (GD) map
\begin{align}\label{eq:GD}
x\mapsto x-\eta\nabla\ell(x)
\end{align}
for some choice of step size $\eta>0$. Originally introduced by Cauchy almost two centuries ago \cite{cauchygd}, the method has more recently found success as the quintessential technique for training deep neural networks (DNNs). Unfortunately, despite its age, fame and simplicity, GD remains poorly understood in its application to deep learning (DL).

A critical quantity in the theory of GD is the \emph{sharpness}, i.e. the largest eigenvalue $\lambda_1$ of the Hessian of $\ell$. Classical theories of the convergence of GD typically assume both convexity of the objective $\ell$ and the pointwise stability condition
\begin{align}\label{eq:stability}
\eta < 2/\lambda_1(x),
\end{align}
without which GD diverges in the case of quadratic objectives. Both of these conditions are typically violated in the training of DNNs in practice, but convergence to a global minimiser is frequently observed nonetheless. The use of large step size in particular is often observed to accelerate convergence \cite{coheneos} and results in an implicit bias toward ``flat minima'' (i.e. minima with small sharpness $\lambda_1$), which have been associated with better statistical performance \cite{keskar}. Although tools from optimisation theory have shown some success in removing the stability condition \eqref{eq:stability} in convex settings \cite{grimmer1, grimmer2, grimmer3, altschuler1, altschuler2, wu2023eos, wu2024eos, wu2025eos}, convergence proofs with large step size in the \emph{non}-convex settings of DL have stubbornly resisted analysis by these means.

It is gradually becoming clear that analysis of GD in the regimes appropriate to DL instead require the tools of dynamical systems theory \cite{leegd, damianeos, macdonaldeos}. Of special significance for the present work is \cite{macdonaldeos}, which vindicates the dynamical systems approach in giving quantitative convergence theorems for gradient descent with large step size in ``codimension 1'' least squares problems, corresponding to a single scalar output and $(p-1)$-dimensional manifold $M\subset\RB^p$ of minimisers. Since extension of the results of \cite{macdonaldeos} is the central concern of the present paper, we briefly recall them here.

\textbf{The contributions of \cite{macdonaldeos}:} A \emph{normal form} is a change of coordinates in which the equations defining a dynamical system become easier to analyse. For codimension 1 least squares problems, \cite{macdonaldeos} provides conditions for such a normal form for GD with large step size in a neighbourhood of an isolated flat minimum $x_*\in M$. This normal form makes apparent that GD implicitly performs Riemannian gradient descent on the \emph{sharpness} $\lambda_1$ along the minima manifold $M$ (systematising insights of \cite{aroraeos, damianeos}) with step size controlled by the square distance of the iterates from $M$; these distances meanwhile evolve as a bifurcating dynamical system in the direction orthogonal to $M$.

Armed with this normal form, \cite{macdonaldeos} proves that the dynamics of GD bifurcate into three regimes in terms of the sharpness value $\lambda_*:=\lambda_1(x_*)$ at the flat minimum. In the \emph{subcritical regime}, when $\eta<2/\lambda_*$, exponential convergence to a suboptimally flat minimum is guaranteed following an initial period of non-monotonic iterate behaviour; in the \emph{critical regime}, when $\eta = 2/\lambda_*$, the iterates converge non-monotonically with a polynomial rate to the flat minimum; and in the \emph{supercritical regime}, when $\eta>2/\lambda_*$ is sufficiently small, the iterates converge exponentially to a stable period-2 orbit along the span of the normal vector through the flat minimum. These theorems rigorously make sense of a number of empirical observations made in prior work \cite{chen2023eos, kalraeos}, but their codimension 1 hypothesis is impractically restrictive, applying to regression of only a single datum.

\textbf{Technical challenges of analysis:} Although of limited practical interest, the codimension 1 setting considered in \cite{macdonaldeos} is theoretically far from trivial. The normal form in \cite{macdonaldeos} arises from the composite of several non-trivial coordinate transformations, one of which has a flawed proof in \cite{macdonaldeos} whose non-trivial rectification we perform in this paper. The convergence theorems that follow are also non-trivial, requiring careful of accounting of nonlinear behaviour.

Extending these results to higher codimension problems and thus obtaining theory of more practical relevance makes the technical challenges faced in \cite{macdonaldeos} vastly more difficult.
\begin{enumerate}[leftmargin=*]
    \item Increasing the codimension of the problem increases the dimension of the bifurcating component of the system, thus necessitating the introduction of a dimension-reduction technique for analysis.
    \item Natural \emph{examples} of higher codimension problems do \emph{not} admit \emph{isolated} flat minima but instead admit \emph{manifolds} of flat minima (see Subsection \ref{subsec:mf}), further increasing the degrees of freedom that must be dealt with by any theoretical analysis.
    \item Correct proof of the normal form presented in \cite{macdonaldeos} requires the solution of a partial differential equation (PDE) which is singular at flat minima. In the isolated flat minimum case considered in \cite{macdonaldeos}, existing literature can be used to solve this problem \cite{singularpde}; however, when the PDE is singular along a \emph{manifold} of flat minima, different techniques are required.
\end{enumerate}

\textbf{Paper contributions:} In this paper, we overcome all of these technical challenges and prove a vast generalisation of the theory of \cite{macdonaldeos} beyond the codimension 1 setting, encompassing underdetermined least squares problems of arbitrary codimension corresponding to overparametrised regression of arbitrarily many data. The high-level takeaway is a generalisation of that of \cite{macdonaldeos}:

\begin{tcolorbox}
\centering
For least squares problems of \emph{arbitrary codimension}, in a neighbourhood of a \emph{manifold of flat minima}, GD with a large step size implicitly performs \emph{Riemannian GD on the sharpness} along the solution manifold, and oscillates as a \emph{bifurcating dynamical system} in the directions orthogonal to the solution manifold.
\end{tcolorbox}

Specifically, we establish the following results for least squares problems of arbitrary codimension:
\begin{enumerate}[leftmargin=*]
    \item We provide a set of geometric hypotheses (see Subsection \ref{subsec:problemsetting}) sufficient to prove a normal form for gradient descent with large step size in a neighbourhood of a manifold of flat minima (see Section \ref{sec:normalform}), vastly generalising the normal form of \cite{macdonaldeos} which considers only codimension 1 problems with isolated flat minima . Our normal form reveals that GD acts as Riemannian GD on the sharpness along the solution manifold, oscillates as a bifurcating dynamical system along the top eigendirection of the Hessian orthogonal to the solution manifold, and contracts exponentially to zero along the other eigendirections of the Hessian.
    \item We prove generalisations of the subcritical, critical and supercritical convergence theorems derived in \cite{macdonaldeos} to this more general setting (see Section \ref{sec:convergencetheorems}).
    \item We prove that deep matrix factorisation problems fit into our framework (see Subsection \ref{subsec:mf}). In particular, we prove a number of novel results about the loss landscapes of matrix factorisation problems, including that the flat minima of such problems are a smooth fibre bundle over a product of spheres, and that the sharpness $\lambda_1$ is Morse-Bott (i.e., ``normally strongly convex") along this manifold. 
    \item We verify our theory with numerical experiments for matrix factorisation problems (see Section~\ref{sec:convergencetheorems}).
\end{enumerate}

\section{Related work}

\textbf{Gradient descent in DL with small step size:} 
A large body of work has analysed gradient descent (GD) for training deep neural networks under small step sizes by invoking the Polyak--Łojasiewicz (PL) inequality \cite{pl}. In overparametrised settings, this inequality can be deduced from the full-rank condition of the neural tangent kernel (NTK), i.e.\ the Gram matrix of parameter derivatives of the model \cite{jacot}, leading to numerous convergence guarantees for GD with step size satisfying the classical stability condition \eqref{eq:stability}; see, e.g., \cite{zhu, du1, du2, leewide, nguyen1, nguyen2, nguyen3, bombari}. In this context, overparametrisation has a precise technical meaning: that the NTK is full-rank. In our setting, this corresponds to Assumption~\ref{ass:regularity}, which ensures a smooth manifold structure for the solution set of least-squares problems and is equivalent to the NTK being full-rank along this set. While these approaches yield strong convergence guarantees, the small step size and initialisation regimes they require are known to limit feature learning \cite{chizat1}. Moreover, by relying on a PL inequality, they do not naturally capture implicit bias phenomena, which are central to understanding deep learning. In contrast, the present work focuses explicitly on the dynamics of GD iterates.

\textbf{Gradient descent in DL with large step size:} 
Classical stability analysis shows that \eqref{eq:stability} is necessary for convergence to a minimum of a given sharpness even in simple settings \cite{wudynamicalstability}. However, empirical studies beginning with \cite{coheneos} demonstrated that, in deep learning, GD with large step sizes often does not diverge; instead, it can converge at an accelerated rate. This behaviour is typically characterised by an initial \emph{progressive sharpening} phase, in which the sharpness increases along the iterates, followed by an \emph{edge of stability} regime in which the sharpness stabilises around $2/\eta$ and the loss decreases in a non-monotonic fashion. A substantial body of work has since sought to explain these phenomena \cite{ahneos, aroraeos, wangeos, wang2022eos, wang2023eos, damianeos, leeeos, zhueos, agarwalaeos, chen2023eos, kreislereos, wu2023eos, wu2024eos, cai2024eos, kalraeos, liueos, ghosh2025eos, yooeos, wu2025eos}. Broadly speaking, these works either aim to identify general mechanisms underlying edge-of-stability dynamics \cite{aroraeos, damianeos, cohen2025understanding}, or to obtain detailed analyses for specific model classes \cite{wang2022eos, wang2023eos, zhueos, agarwalaeos, chen2023eos, kreislereos, wu2023eos, wu2024eos, cai2024eos, kalraeos, liueos, yooeos, ghosh2025eos, wu2025eos}. While the former provide conceptual explanations, the latter often yield stronger guarantees, including convergence theorems in certain settings \cite{wu2023eos, wu2024eos, wu2025eos, liueos}. The work of \cite{macdonaldeos} bridges these perspectives by introducing geometric hypotheses that abstract from problem-specific details while remaining verifiable in concrete settings, and which are sufficiently strong to establish convergence results. The present paper continues this program by substantially extending these geometric hypotheses and their associated guarantees.

\section{Theoretical setting}

\subsection{Notation}

The Euclidean norm on Euclidean space will be denoted $\|\cdot\|$, and $I$ will denote the identity operator. Given a $C^k$ function $f:\RB^m\rightarrow\RB^n$, $D^kf$ will denote its $k^{th}$ order derivative. Beyond this, our geometric setting must be expressed in the language of differential geometry; the following notation will be used throughout. 

The tangent bundle of a manifold $M$ is denoted $TM$, with tangent space fibres $T_xM$ for $x\in M$. If $M$ is Riemannian, $d_M(x,y)$ will denote the geodesic distance (the length of the shortest curve) between $x,y\in M$. If $S\subset M$ is a submanifold, then we denote $d_M(x,S):=\inf_{y\in S}d_M(x,y)$, and denote by $\nu^MS$ the normal bundle of $S$ in $M$, whose fibre over $x\in S$ is the orthogonal complement of $T_xS\subset T_xM$; if $M$ is clear from context we will use $\nu S$ in place of $\nu^MS$.

If $f:M\rightarrow N$ is a map of manifolds and $S\subset M$, then $f|_S:S\rightarrow N$ will denote restriction. If $N=\RB$ and $f$ is $C^2$, then $\nabla_Mf$ and $\nabla_M^2f$ will denote its Riemannian gradient and Riemannian Hessian respectively; when $M$ is Euclidean, these are the ordinary gradient $\nabla f$ and Hessian $\nabla^2f$.

Given a fibre bundle $B\rightarrow M$ and a subset $S\subset M$, $B|_S\rightarrow S$ will denote the restriction of $B$ to $S$.

\subsection{Problem setting}\label{subsec:problemsetting}

In this subsection, we describe the geometric setting for our results, which generalises that of \cite{macdonaldeos}. We consider a $C^{\infty}$ model $f:\RB^p\rightarrow\RB^q$, where $p>q\geq 1$. Given $\tau\in\RB^q$, we consider the least squares objective $\ell:\RB^p\rightarrow[0,\infty)$ given by
\begin{align}\label{eq:ell}
\ell(w):=\frac{1}{2}\|\tau-f(w)\|^2,\qquad w\in\RB^p.
\end{align}
With $\ell$ assumed, the notation $\GD:\RB\times \RB^p\rightarrow\RB\times \RB^p$ will denote gradient descent, with step size $\eta>0$ included as the first parameter:
\begin{align}
\GD(\eta,w):=\big(\eta,w-\eta\nabla\ell(w)\big),\qquad w\in\RB^p.
\end{align}
This augmentation is necessary for the $\eta$-dependent normal form we provide in Theorem \ref{thm:normalformmain}.

Our first assumption gives a manifold structure to the minimisers of \eqref{eq:ell}.

\begin{assumption}\label{ass:regularity}
    The target vector $\tau\in\RB^q$ is contained in the range of $f$ and is a regular value of $f$.\footnote{$\tau$ is a regular value of $f$ if and only if $Df(x):\RB^p\rightarrow\RB^q$ is full-rank for all $x\in f^{-1}\{\tau\}$.}
\end{assumption}

By the regular value theorem \cite[Chapter 1, Theorem 3.2]{hirsch}, Assumption \ref{ass:regularity} implies that the minimisers $M:=f^{-1}\{\tau\}$ of $\ell$ form a nonempty $C^{\infty}$ submanifold of $\RB^p$; $M$ inherits from this embedding a Riemannian metric with which we assume it to be equipped in what follows. From here on, we will denote points in $M$ by $x$ to distinguish them from arbitrary points $w$ of $\RB^p$.

Observe that at any point $x\in M$, since $f(x) = \tau$ one has 
\begin{align}
\nabla^2\ell(x) = Df(x)^TDf(x) + (f(x)-\tau)^TD^2f(x) = Df(x)^TDf(x),
\end{align}
from which the following result is immediate.

\begin{proposition}\label{prop:decomposition}
At any $x\in M$:
    \begin{enumerate}[leftmargin=*]
        \item The kernels of $\nabla^2\ell(x)$ and $Df(x)$ coincide and are equal to $T_xM$.
        \item The eigenvectors (respectively, left-singular vectors) of $\nabla^2\ell(x)$ (resp. $Df(x)$) with nontrivial eigenvalue (resp. singular value) span the normal space $\nu_xM$.
    \end{enumerate}
\end{proposition}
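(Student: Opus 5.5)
The plan is to work entirely from the identity $\nabla^2\ell(x)=Df(x)^TDf(x)$ at $x\in M$ established just above, together with the regular value theorem applied under Assumption \ref{ass:regularity}. Write $J:=Df(x):\RB^p\rightarrow\RB^q$, so that $\nabla^2\ell(x)=J^TJ$. Note that in the paper's convention the relevant singular vectors of $Df(x)$ live in the domain $\RB^p$; these are the right-singular vectors in the usual $J=U\Sigma V^T$ convention, i.e. the left-singular vectors of $J^T$. I would state this identification explicitly and then proceed.

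For part 1, I will first show $\ker(J^TJ)=\ker J$. The inclusion $\ker J\subseteq\ker J^TJ$ is trivial. Conversely, if $J^TJv=0$ then $0=\langle v,J^TJv\rangle=\|Jv\|^2$, so $Jv=0$. Next I will show $\ker J=T_xM$. Since $M=f^{-1}\{\tau\}$ and $f|_M$ is constant, differentiating along curves in $M$ gives $T_xM\subseteq\ker J$. By Assumption \ref{ass:regularity}, $J$ is surjective, so $\dim\ker J=p-q$. The regular value theorem gives $\dim M=p-q$, so the inclusion is an equality.

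For part 2, $J^TJ$ is symmetric positive semidefinite, so the spectral theorem gives an orthonormal eigenbasis of $\RB^p$. Its eigenvectors with nonzero eigenvalue span the orthogonal complement of the kernel, which is $(T_xM)^\perp=\nu_xM$. For the SVD $J=U\Sigma V^T$, the columns of $V$ are exactly eigenvectors of $J^TJ$, with eigenvalues $\sigma_i^2$. Hence the domain singular vectors with nonzero singular value are precisely the eigenvectors with nonzero eigenvalue, and they span $\nu_xM$. As a side remark, full rank of $J$ gives exactly $q$ of them, matching $\dim\nu_xM=q$.

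There is no genuine obstacle here. The only point requiring care is the dimension count that upgrades $T_xM\subseteq\ker J$ to an equality, which is where regularity is used, and the bookkeeping about which singular vectors the statement means.
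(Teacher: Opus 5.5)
Your proposal is correct and follows the paper, which treats the proposition as immediate from $\nabla^2\ell(x)=Df(x)^TDf(x)$ on $M$ together with the regular value theorem; you simply write out the routine details: $\ker J^TJ=\ker J$, the dimension count identifying $\ker J$ with $T_xM$, and the spectral theorem/SVD for part 2. Your observation that the relevant singular vectors are those living in the domain $\RB^p$ (the right-singular vectors of $Df(x)$ in the usual convention) is a correct clarification of the statement's wording.
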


Proposition \ref{prop:decomposition} plays a key conceptual role: it gives a correspondence between the linear geometry of $\nabla^2\ell$ (via its kernel and cokernel) and the differential geometry of $M$ (via its tangent and normal directions). It is ultimately these directions which appear (at least to first order) as the coordinates $x$ and $y$ respectively in our normal form (Theorem \ref{thm:normalformmain}).

We will denote by $\lambda_1\geq\dots\geq\lambda_q:M\rightarrow(0,\infty)$ the nonzero eigenvalue fields of $\nabla^2\ell$ along $M$. Our normal form requires $\lambda_1$ to be differentiable, however in general this only holds where $\lambda_1\neq \lambda_2$. Our next assumption says that this set is non-empty.

\begin{assumption}\label{ass:unique}
    The closed set $S:=\{x\in M:\lambda_1(x) = \lambda_2(x)\}$, outside of which $\lambda_1$ is $C^{\infty}$, is not equal to $M$; thus $\lambda_1$ is simple, hence $C^{\infty}$, on a nonempty open subset of $M$.
\end{assumption}

We will denote by $\nu_1$ the top eigenvector field of $\nabla^2\ell$ on $M\setminus S$; by Assumption \ref{ass:unique}, $\nu_1$ is $C^{\infty}$. While Proposition \ref{prop:decomposition} says that the directions normal to $M$ are spanned by \emph{all} the non-kernel eigenvectors of $\nabla^2\ell$, as we will see in Section \ref{sec:normalform} it is the $\nu_1$ direction (corresponding, to first order, to the variable $y_1$ in Theorem \ref{thm:normalformmain}) that is the most important of these for the dynamics of GD.

While Assumption \ref{ass:unique} guarantees nice behaviour in the $\nu_1$ direction \emph{orthogonal} to $M$, our next assumption guarantees nice behaviour \emph{along} $M$ toward a manifold of flat minima.

\begin{assumption}\label{ass:morsebott}
    There is a submanifold $F\subset M\setminus S$ of local minima for $\lambda_1|_{M\setminus S}$ such that:
    \begin{enumerate}[leftmargin=1.5em]
        \item $\lambda_1$ is Morse-Bott along $F$: more precisely, denoting $\nu^MF\subset TM$ for the normal bundle of $F$ in $M$, $\nabla_M^2\lambda_1|_{\nu^MF}\succ 0$.
        \item An open neighbourhood of zero in the span of $\nu_1|_{F}$ is invariant under gradient descent on $\ell$.
    \end{enumerate}
\end{assumption}

\begin{wrapfigure}{r}{0.55\textwidth}
\vspace{-8pt}
\centering
\resizebox{0.55\textwidth}{!}{
\begin{tikzpicture}[
    bubble/.style={
        draw,
        rounded corners,
        align=center,
        text width=2.9cm,
        inner sep=5pt,
        font=\normalsize
    },
    arrow/.style={->, thick}
]

\node[bubble, text width=6.5cm] (a31) at (0,0)
{\textbf{Assumption 3.1}\\
Solution manifold $M$\\
well-defined tangent/normal directions};

\node[bubble] (a33) at (-3.8,-2.1)
{\textbf{Assumption 3.3}\\
Nicely behaved\\
normal direction};

\node[bubble] (a34) at (0,-2.1)
{\textbf{Assumption 3.4}\\
Nicely behaved\\
tangent direction};

\node[bubble] (a35) at (3.8,-2.1)
{\textbf{Assumption 3.5}\\
Regularity of \\ normal form PDE};

\node[bubble, text width=4.8cm] (thm) at (0,-4.4)
{\textbf{Theorem 4.1}\\
Normal form for GD};

\draw[arrow] (a31) -- (a33);
\draw[arrow] (a31) -- (a34);
\draw[arrow] (a31) -- (a35);

\draw[arrow] (a33) -- (thm);
\draw[arrow] (a34) -- (thm);
\draw[arrow] (a35) -- (thm);

\end{tikzpicture}
}
\caption{Roles of the assumptions in establishing the normal form for gradient descent.}\label{fig:assumptions}
\vspace{-8pt}
\end{wrapfigure}

That $\lambda_1$ is Morse-Bott along $F$ reduces to the geodesic strong convexity of $\lambda_1$ in \cite{macdonaldeos} when $F$ is a single point; thus the Morse-Bott assumption is a kind of strong convexity of $\lambda_1$ in the directions normal to $F$ in $M$. Invariance of the span of $\nu_1|_F$ is a technical assumption required for correct decay in the higher order terms of the normal form; without it, GD is not necessarily attracted to $F$.

Our next and final assumption is used in the solution of a singular partial differential equation (PDE) required for our normal form.

\begin{assumption}\label{ass:quality}
    Denote by $\nu_1M$ and $\nu_{2:q}M$ the span of $\nu_1$ and its orthogonal complement in $\nu M|_{M\setminus S}$ respectively, with $P_1:T\RB^p|_{M\setminus S}\rightarrow\nu_1M$ and $P_{2:q}:=I-\nu_1\nu_1^T:T\RB^p|_{M\setminus S}\rightarrow \nu_{2:q}M$ the orthogonal projections. For each $\eta\in\RB$, define $A_{\eta}:M\rightarrow \RB^{p\times p}$ by
    \begin{align}
    A_{\eta}:=((2\lambda_1-\eta\lambda_1^2)I-\nabla^2\ell)|_{\nu_{2:q}M}^{-1}P_{2:q} + \lambda_1^{-1}(1-\eta\lambda_1)^{-1}P_1,
    \end{align}
    and define $\alpha_{\eta}:M\rightarrow \RB$ by
    \begin{align}
    \alpha_{\eta}:=-\bigg(D^3\ell[\nu_1,\nu_1,A_{\eta}\nabla^3\ell[\nu_1,\nu_1]] + \frac{1}{3}D^4\ell[\nu_1,\nu_1,\nu_1,\nu_1]\bigg)
    \end{align}
    Then, for all $x\in F$ and all $\eta$ in a neighbourhood of $2/\lambda_1|_F$, one has $\alpha_{\eta}(x)>0$ and $D\alpha_{\eta}(x) = 0$.
\end{assumption}

As shown in Theorem \ref{thm:normalformappendix}, $\alpha_{\eta}$ of Assumption \ref{ass:quality} appears as a coefficient in the PDE that must be solved for our normal form (Theorem \ref{thm:normalformmain}); that $\alpha_{\eta}|_F>0$ guarantees that this PDE can be solved to zeroth order, while $D\alpha_{\eta}|_F =0$ guarantees that its solution is $C^1$.

Collectively, these assumptions should be understood as forming a hierarchy of geometric conditions underpinning our analysis, see Figure \ref{fig:assumptions}. Viewed in this way, the assumptions are not merely technical, but rather reflect a structured loss landscape geometry underlying the dynamics of large-step GD.

\subsection{Case study: matrix factorisation}\label{subsec:mf}

In this subsection, we tie down these abstract assumptions by illustrating their application to a class of examples of interest in deep learning. Fixing $L\in\NB\setminus\{0,1\}$, consider \emph{$L$-layer matrix factorisation}, for which $p = \sum_{l=1}^L d_l\times d_{l-1}$, $q = d_L\times d_0$ and $f:\prod_{l=1}^L\RB^{d_l\times d_{l-1}}\rightarrow\RB^{d_L\times d_0}$ is given by
\begin{align}
    f(W_1,\dots,W_L):=W_L\cdots W_1.
\end{align}
Assumption \ref{ass:regularity} then holds when $d_l\geq d_L$ for all $l\leq L$ and $\tau$ has full rank $d_L$ with simple top singular value $\sigma_1$ (Proposition \ref{prop:regularvalue}). Assumptions \ref{ass:unique}, \ref{ass:morsebott} and \ref{ass:quality} also all apply under these conditions; see Appendix \ref{sec:matrixfactorisation} for details. The flat minima manifold $F$ in this case has a particularly interesting structure, not noted in prior work \cite{mulayoff2, marion}, which we now describe.

First note that up to a linear isometry which preserves $f$, $\tau$ may be assumed to be (rectangular) diagonal, $\tau = \mathrm{diag}(\sigma_1,\dots,\sigma_{d_L})$. Proposition \ref{prop:fibrebundle} then says that the global minimisers $F$ of $\lambda_1$ in $M\setminus S$ form a fibre bundle over the product $\prod_{l=1}^{L-1}S^{d_l-1}$ of unit spheres $S^{d_l-1}\subset\RB^{d_l}$, whose typical fibre is an open subset of the solution manifold $\overline{M}\subset \prod_{l=1}^L\RB^{(d_l-1)\times (d_{l-1}-1)}$ for the lower-dimensional factorisation of $\tau_{2:d} = \mathrm{diag}(\sigma_2,\dots,\sigma_d)$ (see Figure \ref{fig:fibrebundle} for a simple example). \begin{wrapfigure}{r}{0.5\textwidth}
  \centering
  \includegraphics[width=0.5\textwidth]{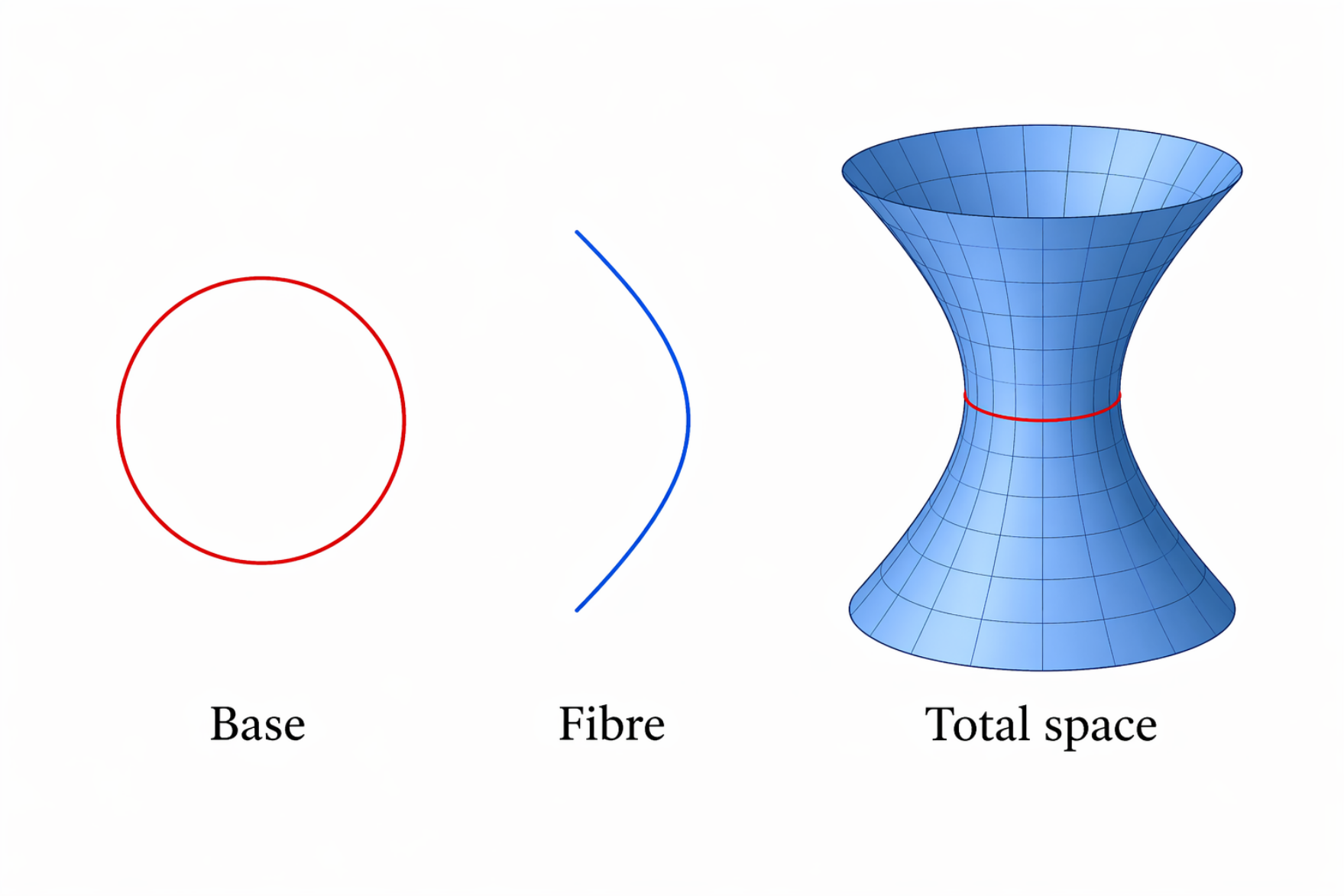}
  \caption{The base (left), connected component of fibre (centre) and connected component of total space (right) of $F$ for 2-layer matrix factorisation with $d_0=d_1=d_2=2$. The base is the circle $S^1$, while the fibre is an open subset of the solution manifold of the factorisation problem of 1 dimension lower (which, for $d_0=d_1=d_2=2$, is simply a 1-dimensional hyperbola). The total space is obtained by attaching a copy of the fibre to each point in the base. This total space is a 2-dimensional submanifold of the 4-dimensional solution manifold $M$, which is itself a submanifold of the 8-dimensional parameter space $\RB^{2\times2}\times\RB^{2\times 2}$.}
  \label{fig:fibrebundle}
  \vspace{-2em}
\end{wrapfigure}Moreover, up to a linear isometry that preserves $f$, every point of $F$ has the form
\begin{align}\label{eq:wnormalformmain}
    \Bigg(\begin{pmatrix} \sigma_1^{1/L} & 0 \\ 0 & \overline{W}_l\end{pmatrix}\Bigg)_{l=1}^L
\end{align}
for some $(\overline{W}_1,\dots,\overline{W}_L)\in \overline{M}$.

This fibre bundle structure has important consequences for the computation of the spectrum of $\nabla^2_M\lambda_1|_F$: it turns out that the directions in $M$ that are \emph{normal} to $F$ correspond to either (i) conjugations of the factors of \eqref{eq:wnormalformmain} by scalings of the top standard basis vector; (ii) conjugations of the factors of \eqref{eq:wnormalformmain} by certain linear maps sending the top standard basis vector into other subspaces. The eigenvalues of $\nabla^2_M\lambda_1|_F$ corresponding to (i) are all constant, equal to $4\sigma_1^{2-4/L}$ (and in particular equal to $4$ when $L=2$). Those corresponding to (ii) are strictly positive and blow up to infinity approaching $S$. In either case, all eigenvalues of $\nabla^2\lambda_1|_{\nu^MF}$ are strictly positive, making $\lambda_1$ Morse-Bott along $F$ (Proposition \ref{prop:morsebott}).

\section{Normal form for GD in arbitrary codimension near flat minima manifold}\label{sec:normalform}

In this section, we state our normal form for large-step GD in a neighbourhood of the flat minima manifold $F$ (Theorem \ref{thm:normalformmain}).

Like the normal form in \cite{macdonaldeos}, our more general normal form splits the dynamics of GD into directions tangent to $M$ (which we coordinatise by $x$) having the form of Riemannian GD (RGD) on the sharpness $\lambda_1$, and orthogonal to $M$ (which we coordinatise by $y = (y_1,y_{2:q})$). Unlike in \cite{macdonaldeos}, however, in which the entire orthogonal direction is only 1-dimensional and evolves as a flip bifurcation, in our more general setting there are $q$ independent orthogonal directions corresponding to our codimension $q$ setting.  The first of these orthogonal coordinates, $y_1$, aligns to first order with the top eigendirection $\nu_1$ of $\nabla^2\ell$ and has the same flip bifurcation form as in \cite{macdonaldeos}; the $q-1$ remaining directions, $y_{2:q}$, correspond to the smaller eigendirections of $\nabla^2\ell$ and are stable contracting directions.

\begin{theorem}[Informal]\label{thm:normalformmain}
    Under Assumptions \ref{ass:regularity}, \ref{ass:unique}, \ref{ass:morsebott} and \ref{ass:quality}, about any $\bar{x}\in F$ there is a $\eta$-dependent change of coordinates $(x,y_1,y_{2:q})\in M\times \RB\times\RB^{q-1}$ which is $C^4$ in $y$ and $C^1$ in $x$, in which $\GD$ takes the form $\GD(\eta,x,y_1,y_{2:q}) = \big(\eta,\GD_x(\eta,x,y_1),\GD_{y_1}(\eta,x,y_1),\GD_{y_{2:q}}(\eta,x,y_1,y_{2:q})\big)$, where
    \begin{align}
    \GD_x(\eta,x,y_1) &= x - \zeta(\eta,x)y_1^2\nabla_M\lambda_1(x) + O\big(|y_1|^3d_M(x,F)\big),
\\
    \GD_{y_1}(\eta,x,y_1) &= (1-\eta\lambda_1(x))y_1 +  y_1^3 + O\big( y_1^4\big),
  \\
    \GD_{y_{2:q}}(\eta,x,y_1,y_{2:q}) &= \big(I_{q-1}-\eta \Lambda_{2:q}(x)\big)y_{2:q} + O\big(|y_1|\|y_{2:q}\|,\|y_{2:q}\|^2\big)
    \end{align}
    uniformly over all $\eta$ in a neighbourhood of $2/\lambda_1|_F$ and over all $(x,y)$ in a neighbourhood of $\bar{x}$ as $y_1,y_{2:q}\rightarrow 0$ and $x\rightarrow F$. Here $\zeta$ is a $C^1$ function such that $\zeta(\eta,\cdot)|_F\equiv \alpha_{\eta}^{-1}|_F$, and $\Lambda_{2:q}$ is a $C^{\infty}$ field of symmetric matrices whose eigenvalues coincide at each $x\in M$ with those of $\nabla^2\ell(x)|_{\nu_{2:q}M}$.
\end{theorem}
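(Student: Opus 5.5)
The plan is to build the coordinate change as a composite of successive transformations, as in \cite{macdonaldeos}, and to arrange each so that it respects the splitting of Proposition \ref{prop:decomposition}.

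\textbf{Step 1: tubular coordinates.} Near $\bar x$, use the tubular neighbourhood map $(x,v)\mapsto x+v$ with $v\in\nu_xM$. Decompose $v=y_1\nu_1(x)+v_{2:q}$ using the projections $P_1,P_{2:q}$ of Assumption \ref{ass:quality}. Trivialise $\nu_{2:q}M$ locally by a $C^\infty$ orthonormal frame; this gives $y_{2:q}\in\RB^{q-1}$ and defines $\Lambda_{2:q}$ as the matrix of $\nabla^2\ell|_{\nu_{2:q}M}$ in that frame.

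\textbf{Step 2: Taylor expansion of GD.} Expand $\GD$ in $v$ using $\nabla\ell(x+v)=\nabla^2\ell(x)v+\tfrac12 D^3\ell[v,v]+\tfrac16D^4\ell[v,v,v]+\dots$. Project onto $T_xM$ by a base-point correction, i.e.\ reproject $x-\eta(\cdot)^{\parallel}$ onto $M$. The linear part gives $y_1\mapsto(1-\eta\lambda_1)y_1$ and $y_{2:q}\mapsto(I-\eta\Lambda_{2:q})y_{2:q}$. The tangential drift at order $y_1^2$ comes from $\tfrac{\eta}{2}D^3\ell[\nu_1,\nu_1]^{\parallel}$. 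Differentiating $\lambda_1=D^2\ell[\nu_1,\nu_1]$ along $M$ identifies this drift with a multiple of $\nabla_M\lambda_1$, which is the standard computation of \cite{aroraeos,damianeos}.

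\textbf{Step 3: invariant-manifold reduction of $y_{2:q}$.} Near $\eta=2/\lambda_1|_F$ the multiplier $1-\eta\lambda_1\approx-1$ is neutral, whereas $1-\eta\lambda_i$ for $i\ge2$ has modulus $<1$. Two consequences follow:
\begin{itemize}
\item On a neighbourhood of $F$, a (parameter-dependent) centre-manifold graph $y_{2:q}=h(\eta,x,y_1)=O(y_1^2)$ is invariant.
\item After the shift $y_{2:q}\mapsto y_{2:q}-h$, the $y_{2:q}$ component is linear-plus-$O(|y_1|\|y_{2:q}\|,\|y_{2:q}\|^2)$, and the $(x,y_1)$ components no longer depend on $y_{2:q}$.
\end{itemize}
Solving for $h$ to second order gives $h\approx\tfrac12((2\lambda_1-\eta\lambda_1^2)I-\nabla^2\ell)^{-1}P_{2:q}\nabla^3\ell[\nu_1,\nu_1]y_1^2$. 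This explains the first term of $A_\eta$. A near-identity change $y_1\mapsto y_1+c\,y_1^2$ removes the quadratic term in $\GD_{y_1}$; the $P_1$ part of $A_\eta$, with its $(1-\eta\lambda_1)^{-1}$, arises here. The resulting cubic coefficient is $\alpha_\eta$.

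\textbf{Step 4: normalising the cubic coefficient.} Rescale $y_1\mapsto \sqrt{\alpha_\eta(x)}\,y_1$ so that the cubic coefficient becomes exactly $1$; this is possible since $\alpha_\eta>0$ near $F$ by Assumption \ref{ass:quality}. The rescaling is where $\zeta=\alpha_\eta^{-1}$ enters the $x$-drift. Because $\alpha_\eta$ depends on $x$ and $x$ moves by $O(y_1^2)$, however, naive rescaling produces $x$-dependent error terms. Making the $y_1$-map exactly $(1-\eta\lambda_1)y_1+y_1^3+O(y_1^4)$ uniformly requires a conjugacy $\phi(\eta,x)$ satisfying a first-order functional/PDE equation along the flow of $-\nabla_M\lambda_1$. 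That flow has the Morse--Bott attractor $F$ (Assumption \ref{ass:morsebott}), so the equation is singular on $F$.

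\textbf{Main obstacle: solving the singular equation.} I would solve it by exploiting that the normal linearisation is hyperbolic with positive definite $\nabla^2_M\lambda_1|_{\nu^MF}$. The approach is a fixed-point or flow-line integration in Morse--Bott (Fenichel-type) normal coordinates about $F$:
\begin{itemize}
\item Show convergence of the integral along flow lines; this uses $\alpha_\eta|_F>0$.
\item Get $C^1$ regularity up to $F$ from $D\alpha_\eta|_F=0$, which kills the resonant first-order term.
\end{itemize}
The same mechanism forces the $x$-error to carry a factor $d_M(x,F)$. The invariance of $\mathrm{span}\,\nu_1|_F$ (Assumption \ref{ass:morsebott}) gives this directly: on $F$ the tangential component of the update vanishes identically in $y_1$, so the $O(|y_1|^3)$ remainder vanishes on $F$ and, being $C^1$, is $O(|y_1|^3 d_M(x,F))$.

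The remaining steps are to collect the regularity ($C^4$ in $y$ from the finite Taylor and centre-manifold smoothness, $C^1$ in $x$ from the PDE) and to check uniformity in $\eta$ near $2/\lambda_1|_F$ by compactness of a small neighbourhood of $\bar x$.
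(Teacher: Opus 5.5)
Your overall architecture matches the paper's. Both use tubular coordinates and the drift $-\tfrac{\eta}{2}y_1^2\nabla_M\lambda_1$. Both compute $D^2_{y_1}h$ on the centre manifold, which gives the $P_{2:q}$ part of $A_\eta$, and remove the quadratic term, which gives the $P_1$ part. Both need a singular PDE along $\nabla_M\lambda_1$ for the $x$-dependent rescaling.

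\textbf{Gap 1: the decoupling in Step 3 is false.} Shifting $y_{2:q}\mapsto y_{2:q}-h(\eta,x,y_1)$ only makes $\{y_{2:q}=0\}$ invariant, i.e.\ $\GD_{y_{2:q}}(\eta,x,y_1,0)=0$. It does \emph{not} make $\GD_x,\GD_{y_1}$ independent of $y_{2:q}$, which the statement requires. Off the centre manifold, the quadratic part of the $y_1$-update still has cross terms such as $-\eta D^3\ell[\nu_1,\nu_1,\nu_j]y_1y_j$ and $-\tfrac{\eta}{2}D^3\ell[\nu_1,\nu_j,\nu_k]y_jy_k$ with $j,k\geq 2$. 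The $x$-update also keeps its $O(|y_1|\|y_{2:q}\|)$ term.

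The missing idea is a second invariant object: a $C^4$ stable foliation $\FC$ transverse to $\CC$, with leaves tangent to $\nu_{2:q}$ along $M$ and $\GD(L_w)\subset L_{\GD(w)}$. These are Fenichel fibres, obtained from bunching conditions. The conditions hold because $|1-\eta\lambda_j|<1$ for $j\geq 2$, while the multipliers along $\CC$ are close to $\pm 1$. Only in the foliated chart, whose leaves are $\{(\eta,x,y_1)=\mathrm{const}\}$, does the $(x,y_1)$-dynamics decouple from $y_{2:q}$.

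\textbf{Related issue: the $d_M(x,F)$ factor.} The $O(|y_1|^3d_M(x,F))$ error needs $h=O(y_1^2d_M(x,F))$, not just $O(y_1^2)$, because the $x$-update on $\CC$ has an error $O(|y_1|\|h\|)$. Centre manifolds are not unique, so $\CC$ must be constructed to contain the invariant lines $x_*+\mathrm{span}\,\nu_1(x_*)$, $x_*\in F$. The paper does this by running the graph transform in a space of graphs vanishing on these lines. Your argument that the remainder vanishes on $F$ presupposes this choice.

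\textbf{Gap 2: the PDE step does not yet select a solution.} Write the equation as $X\phi=c\,\phi+b$, where $X=\nabla_M\lambda_1$ is normally repelling from $F$ and $c|_F>0$ is a positive multiple of $\alpha_\eta|_F$. Positivity of $c$ does not make an integral converge to a particular solution. Instead it forces \emph{every} solution along \emph{every} flow line to tend to $-b/c$ at $F$. So there is a large family of continuous solutions, differing by homogeneous solutions that behave like $d_M(x,F)^{c/\mu}$ along a normal eigendirection of $\nabla^2_M\lambda_1$ with eigenvalue $\mu$. Most of these are not $C^1$.

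Integrating backwards from $F$ after subtracting $\phi_0=-b/c$ converges only under a condition like $2\mu_{\min}>c|_F$. That is not assumed, and the resonances between $c$ and the $\mu$'s can vary along $F$. $D\alpha_\eta|_F=0$ does remove the first-order resonance, as you say. You still need a selection principle that yields a $C^1$ solution regardless of higher-order resonances.

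The paper supplies this in four steps:
\begin{enumerate}
\item It straightens the strong-stable fibration of $-X$, so that $X$ acts fibrewise as $X_p(u)\partial_u$.
\item It subtracts $\phi_0$ and recasts the remainder equation as invariance of a graph $v=h(p,u)$ under the fibrewise field $(u,v)\mapsto\big(X_p(u),\,c(p,u)v+\gamma(p,u)\big)$, where $\gamma$ vanishes to second order at $u=0$.
\item This field's linearisation $\mathrm{diag}(A(p),c(p,0))$ has positive spectrum, so its time-$(-1)$ map is a contraction and is $C^1$-linearisable (Belitskii).
\item Sternberg averaging turns this into a linearisation of the flow, and the preimage of $\{v=0\}$ is the required $C^1$ graph.
\end{enumerate}
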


At an intuitive level, our Theorem \ref{thm:normalformmain} allows us to anticipate how the dynamics will behave even without heavy analysis. For $\eta$ near $2/\lambda_1|_F$, the spectral norm $\|I_{q-1}-\eta\Lambda_{2:q}(\bar{x})\|_2<1$, so that $\GD_{2:q}$ acts as a linear contraction near $\bar{x}$. Meanwhile, $\GD_x$ acts as RGD on $\lambda_1$, with step size controlled by $y_1^2$ and $\zeta$, while $\GD_{y_1}$ makes $y_1$ evolve as a flip bifurcation from dynamical systems theory \cite{kuznetsov} with linear component differing from $-1$ by a perturbation of $\eta\lambda_1|_F-2$. In particular:
\begin{enumerate}[leftmargin=*]
    \item If $\eta\lambda_1|_F<2$, $\GD_{y_1}$ contracts \emph{linearly} to zero, so $\GD_x$ behaves as RGD with \emph{exponentially} decaying step size, leading to \emph{exponential} convergence to a global minimum of $\ell$ which is suboptimally flat unless the $x$-coordinate starts in $F$.
    \item If $\eta\lambda_1|_F=2$, $\GD_{y_1}$ contracts \emph{sub-linearly} to zero, so $\GD_x$ behaves as RGD with \emph{polynomially} decaying step size, leading to \emph{polynomial} convergence to $F$.
    \item If $\eta\lambda_1|_F>2$, $\GD_{y_1}$ tends \emph{exponentially} toward a nonzero periodic orbit, so $\GD_x$ behaves as RGD with \emph{constant} step size, leading to \emph{exponential} convergence to a periodic orbit centred on $F$.
\end{enumerate}
These behaviours are formalised in convergence theorems in the next section.

On a technical level, our Theorem \ref{thm:normalformmain} is substantially more difficult than the corresponding result in \cite{macdonaldeos}. First, a dimension reduction must be undertaken using a centre manifold and stable foliation theorem \cite{hpsinvariant, fenichel1974asymptotic,fenichel_singular} to reduce the effective dimension of the problem from $q>1$ to $q=1$, which is unnecessary in the $q=1$ setting of \cite{macdonaldeos}. Second, the dynamics along this centre manifold can be transformed into the claimed normal form only by solving a PDE which is singular along $F$ (Theorem \ref{thm:pdesolve}). Proving the existence of a sufficiently regular solution to this PDE (without which the normal form is impossible) requires the invocation of further techniques from dynamical systems theory \cite{belitskii1978,sternberg1957}; in contrast, the corresponding PDE in \cite{macdonaldeos} is singular only at a point, and has been well studied in prior literature \cite{singularpde}.

\section{Convergence theorems}\label{sec:convergencetheorems}

In this section we state our convergence theorems and provide the results of numerical simulations supporting them. As in \cite{macdonaldeos}, the dynamics of GD admit three qualitatively different convergence behaviours: subcritical, critical and supercritical, corresponding to the value of $\eta$ relative to the stability threshold $2/\lambda_1|_F$. Given a point $(x,y):=(x,y_1,y_{2:q})$ in the coordinates of Theorem \ref{thm:normalformmain}, its GD iterates will be denoted by $\GD^t(x,y_1,y_{2:q}) =: (x_t,y_{1,t},y_{2:q,t})=:(x_t,y_t)$. We assume in all of our theorems that $y_{1,0}\neq 0$; otherwise, it follows immediately from our normal form that the iterates contract exponentially rapidly to a nearby point in $M$ without any implicit bias toward $F$ in general.

\subsection{Subcritical regime}

We first state our convergence theorem in the \emph{subcritical regime}, where the step size satisfies $\eta<2/\lambda_1|_F$. In this case, after a possible transient period of initial instability (in which the iterates move further away from the solution manifold), due to the descent on $\lambda_1$ appearing in Theorem \ref{thm:normalformmain} the iterates eventually reach a point where $\eta<2/\lambda_1|_M$ and thereafter converge exponentially to a global minimum of $\ell$ which is suboptimally flat unless the initial $x$-coordinate lies in $F$, see Figure \ref{fig:subcriticalmain}. After appropriately accounting for the generalisation from $\lambda_1$ being geodesically strongly convex about a single flat minimum, as in \cite{macdonaldeos}, to $\lambda_1$ being Morse-Bott along the manifold $F$, as in this setting, the proof of Theorem \ref{thm:subcriticalmain} follows a similar argument to that of \cite[Theorem 5.1]{macdonaldeos}. See Subsection \ref{subsec:subcriticalappendix} for details.

\begin{theorem}\label{thm:subcriticalmain}
    Suppose that Assumptions \ref{ass:regularity}, \ref{ass:unique}, \ref{ass:morsebott} and \ref{ass:quality} hold, and that $\eta <2/\lambda_1|_F$. Then there is $\gamma>0$ such that for all $(x_0,y_0)$ sufficiently close to $F$ with $y_{1,0}\neq 0$, there is
    \begin{align}
    T = O\bigg(y_{1,0}^{-2}\bigg(\frac{\lambda_1(x_0)-\lambda_1|_F}{2/\eta-\lambda_1|_F}\bigg)^{\gamma}\bigg)
    \end{align}
    such that $\eta <2/\lambda_1(x_t)$ for all $t\geq T$, following which there is $\beta<1$ such that the iterates $(x_t,y_t)$ converge with rate $O(\beta^t)$ to  global minimum $(x_{\infty},0)$ of $\ell$. If, furthermore $d_M(x_0,F)>0$, this minimum is suboptimally flat:
    \begin{align}
    \lambda_1(x_{\infty})-\lambda_1|_F\geq \exp\big(-O(y_{1,T}^2(1-\beta^2)^{-1})\big)(\lambda_1(x_T)-\lambda_1|_F).
    \end{align}
\end{theorem}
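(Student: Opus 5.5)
We work entirely in the coordinates of Theorem~\ref{thm:normalformmain}, shrinking the neighbourhood of $F$ as needed, and split the argument into a transient phase (up to time $T$) and a contracting phase (after $T$).

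\textbf{Lyapunov quantity and Morse--Bott estimates.} We take the Lyapunov quantity $u_t:=\lambda_1(x_t)-\lambda_1|_F\ge 0$; note that $\lambda_1|_F$ is locally constant on $F$, since $F$ consists of local minima. From the Morse--Bott part of Assumption~\ref{ass:morsebott} we extract two local estimates, valid uniformly near a compact piece of $F$:
\begin{align}
c\, d_M(x,F)^2\le u(x)\le C\, d_M(x,F)^2,\qquad \|\nabla_M\lambda_1(x)\|^2\ge c'\, u(x).
\end{align}
The second is a Polyak--{\L}ojasiewicz inequality. Both follow from a Morse--Bott lemma in tubular coordinates of $\nu^MF$.

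\textbf{One-step inequality for $u$.} We Taylor expand $\lambda_1$ along $\GD_x$ and use $\zeta|_F=\alpha_\eta^{-1}>0$ by Assumption~\ref{ass:quality}. The error term $O(|y_1|^3 d_M(x,F))$ is dominated by $y_1^2\|\nabla_M\lambda_1\|^2$ once $|y_1|$ is small, using $\|\nabla_M\lambda_1\|\gtrsim d_M(x,F)$. This gives
\begin{align}
u_{t+1}\le (1-\kappa y_{1,t}^2)\,u_t,\qquad u_{t+1}\ge (1-K y_{1,t}^2)\,u_t .
\end{align}
The lower bound uses $\|\nabla_M\lambda_1\|^2\le C u$.

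\textbf{Transient phase ($t<T$).} While $\eta\lambda_1(x_t)\ge 2$, we have $|1-\eta\lambda_1(x_t)|=1+\eta u_t-\epsilon$ with $\epsilon:=2-\eta\lambda_1|_F>0$. Hence $\GD_{y_1}$ gives $y_{1,t+1}^2\ge y_{1,t}^2\bigl(1+2(\eta u_t-\epsilon)-O(y_{1,t}^2)\bigr)$, together with a matching upper bound, so $|y_1|$ grows at most at a controlled rate. We also need $|y_1|$ to stay small throughout, in order to remain in the chart. For this we compare $\log y_{1,t}^2$ with $\log u_t$. Each step increases $\log y_1^2$ by at most about $2\eta u_t$ and decreases $\log u$ by at least $\kappa y_1^2$, so the cumulative growth of $y_1^2$ is controlled by the initial $u_0$, which is small. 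This comparison is exactly what makes $\gamma$ appear as a ratio of rates, of the form $\gamma\approx$ (growth exponent)/(decay exponent).

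\textbf{Bounding $T$.} Since $y_{1,t}^2\gtrsim y_{1,0}^2$ during the transient phase (it cannot shrink by much while $\eta\lambda_1\ge 2-\epsilon$ is nearly critical), we have $u_t\le u_0\prod(1-\kappa y_{1,s}^2)$. The stopping condition $u_T<2/\eta-\lambda_1|_F$ is then reached after $O\bigl(y_{1,0}^{-2}\log(u_0/(2/\eta-\lambda_1|_F))\bigr)$ steps. Sharpening the logarithm to the stated power $(\cdot)^\gamma$ is done by the same comparison argument: when $u_t$ is large relative to the threshold, $y_1$ is also large and the decay of $u$ is faster. The $y_{2:q}$ components are handled separately. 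By Theorem~\ref{thm:normalformmain}, $\|I-\eta\Lambda_{2:q}\|<1$ near $F$, so $y_{2:q}$ contracts and never feeds back into $x$ or $y_1$.

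\textbf{Contracting phase ($t\ge T$).} Once $\eta\lambda_1(x_T)<2$, monotonicity of $u$ keeps this condition in force. Then $|1-\eta\lambda_1(x_t)|\le\beta_0<1$, so $|y_{1,t}|\le \beta^{t-T}|y_{1,T}|$ for some $\beta<1$, and similarly $y_{2:q}$ decays exponentially. The $x$-increments are $O(y_1^2)$, so they are summable with geometric tail, and $x_t\to x_\infty$ at rate $O(\beta^{2t})\subset O(\beta^t)$. Suboptimal flatness follows from the lower one-step inequality:
\begin{align}
u_\infty\ge u_T\prod_{t\ge T}(1-Ky_{1,t}^2)\ge u_T\exp\Bigl(-O\bigl(\textstyle\sum_{t\ge T} y_{1,t}^2\bigr)\Bigr)\ge u_T\exp\bigl(-O(y_{1,T}^2(1-\beta^2)^{-1})\bigr).
\end{align}
This is strictly positive whenever $u_T>0$. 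In turn $u_T>0$ holds whenever $d_M(x_0,F)>0$: the one-step map for $u$ has factor bounded below by $1-Ky_1^2>0$, so $u$ never reaches zero in finitely many steps.

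\textbf{Main obstacle.} The hard step is the transient phase. We must show simultaneously that $y_1$ cannot escape the neighbourhood where the normal form is valid, and that $T$ obeys the stated power-law bound. This requires the coupled comparison of $\log y_1^2$ and $\log u$, and we must keep the constants uniform using only the Morse--Bott PL inequality, which holds near $F$ rather than near a single point.
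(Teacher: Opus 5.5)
\textbf{Gap: the bound on $T$ in the transient phase.} Most of your proposal matches the paper: the one-step inequalities $u_{t+1}\le(1-\kappa y_{1,t}^2)u_t$ and $u_{t+1}\ge(1-Ky_{1,t}^2)u_t$, the contracting phase, the $y_{2:q}$ handling, and the product bound for suboptimal flatness. Your $\log y_1^2$ versus $\log u$ comparison for keeping $y_1$ in the chart is also a valid alternative to the paper's invariant set $V_{\rho,\Delta}$, which uses the cubic damping $-\tfrac12 y_1^2$ instead. The failure is in \textbf{Bounding $T$}.

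You claim $y_{1,t}^2\gtrsim y_{1,0}^2$ throughout the transient phase, and this is false. While $\eta\lambda_1(x_t)\ge 2$, the multiplier only satisfies $|1-\eta\lambda_1(x_t)|=1+\eta u_t-\epsilon\ge 1$. It is close to $1$ throughout, since $u_t$ is small near $F$, and it tends to $1$ as the phase ends. With a multiplier $\le -1$, the cubic term $+y_1^3$ opposes the linear part, so the best available estimate is $|y_{1,t+1}|\ge |y_{1,t}|(1-y_{1,t}^2-O(|y_{1,t}|^3))$. This permits $y_{1,t}^2$ to decay like $1/t$, exactly as in the critical regime. Indeed, as $2/\eta-\lambda_1|_F\to 0$, the pair $(y_1^2,u)$ follows the critical-regime dynamics of Theorem~\ref{thm:criticalappendix}, where both quantities decay only like $1/t$. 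The transient length therefore grows polynomially in the ratio $u_0/(2/\eta-\lambda_1|_F)$, and your $O\bigl(y_{1,0}^{-2}\log(\cdot)\bigr)$ bound cannot hold. Note also that passing from a logarithm to a power is a weakening, not a sharpening: if the log bound were true you would already be done.

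\textbf{The missing step} is the sublinear lower bound on $y_1$ that the paper imports as Lemma~\ref{lem:contraction2}. Inverting the one-step estimate above gives $y_{1,t+1}^{-2}\le y_{1,t}^{-2}+3$. Hence $y_{1,t}^2\ge y_{1,0}^2/(1+3y_{1,0}^2t)$ for every $t$ with $\eta\lambda_1(x_t)\ge 2$; these times form an initial segment because $u_t$ is monotone. Substituting into $u_{t+1}\le(1-\kappa y_{1,t}^2)u_t$, using $\log(1-z)\le -z$, and comparing the sum with $\int_0^t y_{1,0}^2(1+3y_{1,0}^2s)^{-1}\,ds$ gives $u_t\le u_0(1+3y_{1,0}^2t)^{-\kappa/3}$. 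Solving $u_T<2/\eta-\lambda_1|_F$ then yields $T=O\bigl(y_{1,0}^{-2}(u_0/(2/\eta-\lambda_1|_F))^{3/\kappa}\bigr)$. So $\gamma=3/\kappa$ comes from the descent constant set against the $t^{-1/2}$ contraction of $y_1$, not from a ratio of growth and decay exponents in your comparison. That comparison only bounds $y_1$ from above and cannot supply the lower bound the argument needs.
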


\begin{figure}[t]
    \centering
    \includegraphics[width=\linewidth]{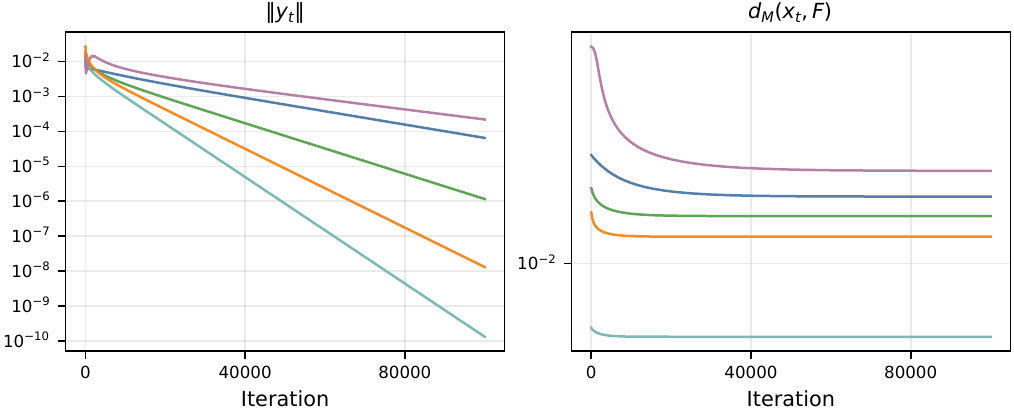}
    \caption{Log $y$-scale plots of $\|y_t\|$ (left) and $d_M(x_t,F)$ (right) for 5 independent trials  of gradient descent in the \textbf{subcritical} regime on a 3 layer, $2\times 2$ matrix factorisation problem. Initial instability (rising $\|y_t\|$) is overcome in finite time followed by exponential convergence to a suboptimally flat minimum.}
    \label{fig:subcriticalmain}
\end{figure}

\subsection{Critical regime}

The \emph{critical regime} is when $\eta=2/\lambda_1|_F$. In this regime, one observes the iterates converge non-monotonically at a $t^{-1/2}$ rate to a flat minimiser in $F$, see Figure \ref{fig:criticalmain}. The corresponding theorem in \cite{macdonaldeos} invokes an analytic parabolic invariant manifold theorem \cite{hakimhigher} to obtain the convergence rate, however since the \emph{corrected} normal form is not necessarily analytic, this proof technique does not work without modification. In contrast, our more general critical regime convergence theorem below uses a simple Lyapunov function argument to establish the convergence guarantee in the low-regularity setting supplied by the corrected normal form.

\begin{theorem}\label{thm:criticalmain}
    Suppose that Assumptions \ref{ass:regularity}, \ref{ass:unique}, \ref{ass:morsebott} and \ref{ass:quality} hold. Then the iterates $(x_t,y_t)$ of any $(x_0,y_0)$ sufficiently close to $F$ with $y_{1,0}\neq 0$ converge to a point in $F$ with rate $\Theta(t^{-1/2})$.
\end{theorem}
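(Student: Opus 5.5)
We work throughout in the coordinates of Theorem \ref{thm:normalformmain} with $\eta = 2/\lambda_1|_F$; in the critical regime $\lambda_1$ is constant on $F$ (its value there is the minimum). Write $\delta_t := \lambda_1(x_t)-\lambda_1|_F \ge 0$, so that $1-\eta\lambda_1(x_t) = -1-\eta\delta_t$. The first step is to pass to the two-step map, which removes the sign flip. Squaring $\GD_{y_1}$ gives
\begin{align}
y_{1,t+1}^2 = y_{1,t}^2\big((1+\eta\delta_t) - y_{1,t}^2 + O(y_{1,t}^3)\big)^2 .
\end{align}
The convention $+y_1^3$ combined with linear coefficient $\approx -1$ makes the cubic term contracting in magnitude. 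Writing $u_t := y_{1,t}^2$, this yields
\begin{align}
u_{t+1} = u_t\big(1 + 2\eta\delta_t - 2u_t + O(\delta_t^2 + u_t^{3/2} + \delta_t u_t)\big).
\end{align}

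Next, the Morse--Bott property (Assumption \ref{ass:morsebott}) gives, on a tubular neighbourhood of $F$ in $M$, the equivalences $\delta \asymp d_M(x,F)^2$ and $\|\nabla_M\lambda_1\|^2 \asymp \delta$. The latter is a Polyak--Łojasiewicz inequality normal to $F$. Combining it with $\GD_x$ and a Taylor expansion of $\lambda_1$, where the error $O(|y_1|^3 d_M(x,F)) = O(u^{3/2}\delta^{1/2})$ is absorbed, gives
\begin{align}
\delta_{t+1} \le \delta_t\big(1 - c\,u_t\big), \qquad \delta_{t+1} \ge \delta_t\big(1 - C u_t\big).
\end{align}
Here $c,C>0$ are uniform because $\zeta|_F = \alpha_\eta^{-1}|_F > 0$ by Assumption \ref{ass:quality}. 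The lower bound uses that the step $u\zeta$ is small relative to the inverse curvature.

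The heart of the proof is a Lyapunov function for the planar system $(u,\delta)$. I would take $V := u + \kappa\delta$ with $\kappa$ chosen large enough that the term $2\eta u\delta$ in $u_{t+1}-u_t$ is dominated by $-\kappa c u\delta$. This gives
\begin{align}
V_{t+1} \le V_t - u_t^2 - u_t\delta_t \le V_t - c' V_t\,u_t .
\end{align}
By itself this is not enough, since $u$ may be small while $\delta$ is large. I would therefore combine it with a separate argument showing that $u$ cannot stay small while $\delta$ is non-negligible: when $u\ll\delta$, $u$ grows geometrically at rate $1+2\eta\delta$, so $u$ catches up to order $\delta$ in a number of steps $O(\delta^{-1}\log(\delta/u_0))$. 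Here $y_{1,0}\ne0$ is used. After that, one shows the region $\{u \ge c''\delta\}$ is approximately invariant, since $\delta$ only decreases and $u$ decreases at relative rate at most $2u$. In that region $V_{t+1}\le V_t - c'''V_t^2$, which gives $V_t = O(1/t)$, and hence $|y_{1,t}| = O(t^{-1/2})$ and $d_M(x_t,F) = O(\delta_t^{1/2}) = O(t^{-1/2})$. For the matching lower bound, $u_{t+1}\ge u_t(1-2u_t-o(u_t))$ gives $u_t \ge (1/u_0 + (2+o(1))t)^{-1}$, so $|y_{1,t}| = \Theta(t^{-1/2})$.

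The component $y_{2:q}$ contracts geometrically, because $\|I-\eta\Lambda_{2:q}\|<1$ near $F$ and the perturbation $O(|y_1|\|y_{2:q}\|+\|y_{2:q}\|^2)$ is small. It therefore does not affect the rate.

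Convergence of $x_t$ to a single point of $F$, rather than just $d_M(x_t,F)\to0$, needs summability of the tangential increments along $F$. By $\GD_x$, $\|x_{t+1}-x_t\| \lesssim u_t\delta_t^{1/2} + u_t^{3/2}\delta_t^{1/2}$. Since $\delta_t$ decreases by the factor $(1-cu_t)$, $\sum_t u_t\delta_t^{1/2}$ is bounded by a telescoping sum of $\delta_t^{1/2}$, which is finite. So $x_t$ is Cauchy, its limit lies in $F$, and the distance to the limit is $O(\delta_t^{1/2}) = O(t^{-1/2})$. The lower bound $\Theta$ then comes from the $y_1$ component.

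I expect the main obstacle to be the transient regime $u\ll\delta$. There the two-step map is nearly neutral in both variables, and the constants must be tracked uniformly while staying inside the neighbourhood where the normal form is valid. I would first try to control that excursion using $\delta_t$ nonincreasing together with a crude bound $u_t \le u_0\exp(2\eta\sum_s\delta_s)$.
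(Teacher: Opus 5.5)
Your proposal is correct and is essentially the paper's argument. The paper uses a Lyapunov function $E\asymp y_1^2+d_M(x,F)^2$ with decrement $\gtrsim y_1^2E$, and you do the same with the excess sharpness $\delta=\lambda_1-\lambda_1|_F$ and the one-step update of $y_1^2$, where it uses the Fermi coordinate $\|u\|^2$ and $\GD^2$. Both proofs then show that $y_1^2$ eventually carries a fixed fraction of $E$ on a forward-invariant cone, deduce $E_{t+1}\le E_t-kE_t^2$, and add a separate lower bound. Your telescoping of $\delta_t^{1/2}$ replaces the paper's summation of $O(E_t^2)$ foot-point increments.

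Two steps need to be stated the way the paper does them. First, your ``approximately invariant'' step should take the paper's two-threshold form (its Claim~2). When $u\le 2c''\delta$, the ratio $u/\delta$ grows by a factor $1+\eta\delta$. When $u\ge 2c''\delta$, one step can at most halve it. Together these make $\{u\ge c''\delta\}$ exactly invariant.

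Second, for the transient, what you need is a lower bound on $\delta_t$, not your upper bound on $u_t$. While $u_t\le c''\delta_t$ one has $\delta_{t+1}\ge\delta_t(1-Cc''\delta_t)$, so $\sum_t\delta_t=\infty$. Hence $u_t/\delta_t$ must exceed $c''$ in finite time, which mirrors the paper's use of $\sum_tE_t=\infty$.
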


\begin{figure}[t]
    \centering
    \includegraphics[width=\linewidth]{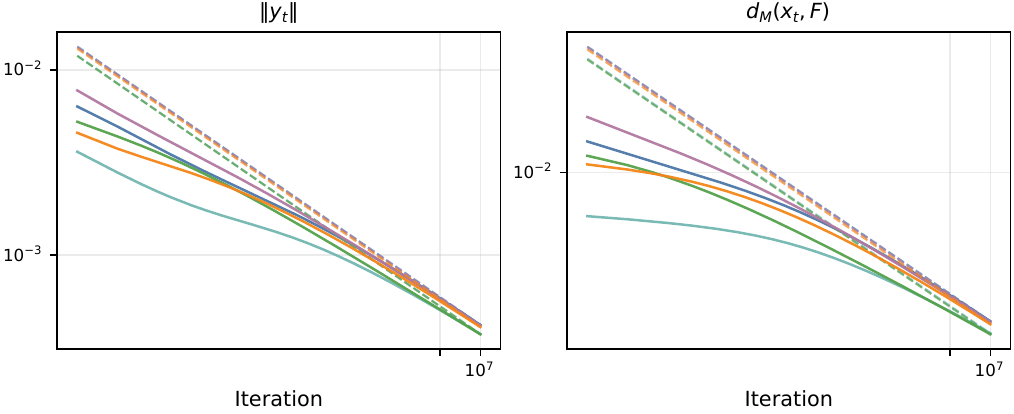}
    \caption{Log-log plots of $\|y_t\|$ (left, solid) and $d_M(x_t,F)$ (right, solid) for 5 independent trials  of gradient descent in the \textbf{critical} regime on a 3 layer, $2\times 2$ matrix factorisation problem. Dotted lines show $t^{-1/2}$ passing through the final values of each trial for reference. All trials exhibit the predicted asymptotic $t^{-1/2}$ convergence to an element of $F$.}
    \label{fig:criticalmain}
\end{figure}

\subsection{Supercritical regime}

The \emph{supercritical} regime is that in which the step size $\eta$ is (slightly) larger than the stability threshold $2/\lambda_1|_F$. In this regime, the iterates converge exponentially to a period-two cycle along the span of the vector field $\nu_1|_F$, see Figure \ref{fig:supercriticalmain}.  Our proof uses the same scalar period-doubling estimates as \cite{macdonaldeos}, together with a new invariant-neighbourhood and finite-entry argument which yields a neighbourhood whose radius remains $\Theta(1)$ as $\eta\lambda_1|_F-2\to0$, in contrast to the corresponding result in \cite{macdonaldeos}. See Subsection \ref{subsec:supercriticalappendix} for details.

\begin{theorem}\label{thm:supercriticalmain}
    Suppose that Assumptions \ref{ass:regularity}, \ref{ass:unique}, \ref{ass:morsebott} and \ref{ass:quality} hold. Then there is a constant $C>0$ such that for all sufficiently small $\eta>2/\lambda_1|_F$ and all $(x_0,y_0)$ sufficiently close to $F$ with $y_{1,0}\neq 0$, the iterates $(x_t,y_t)$ converge to a period-two orbit of amplitude $\Theta((\eta\lambda_1|_F-2)^{1/2})$ along the span of $\nu_1|_F$ with rate $O((1-C(\eta\lambda_1|_F-2))^t)$.
\end{theorem}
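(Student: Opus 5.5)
The proof works entirely in the coordinates of Theorem \ref{thm:normalformmain}. Set $\epsilon:=\eta\lambda_1|_F-2>0$. The plan is to find the period-two orbit explicitly, show that the $y_1$-dynamics are trapped near it, and then feed a bounded-below step size into the $x$-equation. That forces $x_t\to F$ and the remaining coordinates to settle.

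\textbf{Step 1: the fixed cycle.} On $F$ we have $\nabla_M\lambda_1=0$, and by Assumption \ref{ass:morsebott}(2) the span of $\nu_1|_F$ is invariant. So the slice $\{x\in F,\ y_{2:q}=0\}$ is invariant, and on it the dynamics reduce to the scalar map
\begin{align}
g(y_1)=(1-\eta\lambda_1|_F)y_1+y_1^3+O(y_1^4)=-(1+\epsilon)y_1+y_1^3+O(y_1^4).
\end{align}
The second iterate is $g^2(y_1)=(1+\epsilon)^2y_1-2(1+\epsilon)^{\cdots}y_1^3+\dots$, so $g^2(y_1)=(1+2\epsilon)y_1-2y_1^3+O(\epsilon y_1^3,y_1^4)$. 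Using the standard flip-bifurcation (period-doubling) estimates, as in \cite{macdonaldeos}, I would obtain:
\begin{enumerate}[leftmargin=*]
\item fixed points $\pm y_*$ of $g^2$ with $y_*=\Theta(\epsilon^{1/2})$, which form the period-two orbit $g(y_*)=-y_*$ along $\nu_1|_F$;
\item the multiplier $(g^2)'(y_*)=1-4\epsilon+O(\epsilon^{3/2})$, so the cycle attracts at rate $1-C\epsilon$.
\end{enumerate}

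\textbf{Step 2: trapping and finite entry.} This step needs a neighbourhood whose size does not shrink with $\epsilon$.
\begin{enumerate}[leftmargin=*]
\item For $|y_1|$ in a fixed interval $(0,r]$, use $h(y_1):=|y_1|$ with $|g(y_1)|\ge|y_1|$ near $0$ to show repulsion from zero. The ratio $|g(y_1)|/|y_1|=1+\epsilon-y_1^2+\dots$ grows $|y_1|$ until it reaches the $\Theta(\epsilon^{1/2})$ band.
\item Above the cycle the same ratio is below $1$, so $|y_1|$ decreases toward the band.
\item The entry time is finite. It is only $O(\epsilon^{-1}\log(1/|y_{1,0}|))$, but it is still finite, after which the $(1-C\epsilon)^t$ rate takes over.
\end{enumerate}
The perturbations from $x\notin F$ enter $\GD_{y_1}$ only through $\lambda_1(x)-\lambda_1|_F=O(d_M(x,F)^2)$ (Morse-Bott). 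They are therefore absorbed as long as $x$ stays close to $F$. The $y_{2:q}$ block is a uniform linear contraction, since $\|I-\eta\Lambda_{2:q}\|<1$ near $F$ for $\eta$ near $2/\lambda_1|_F$. The cross term $O(|y_1|\|y_{2:q}\|)$ is small because $|y_1|\le r$, so $\|y_{2:q,t}\|$ decays at a $\Theta(1)$ exponential rate.

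\textbf{Step 3: the $x$-equation.} Once $y_1^2\ge c\epsilon$, $\GD_x$ is Riemannian GD on $\lambda_1$ with step $\zeta y_1^2\in[c\epsilon,C r^2]$ plus an error $O(|y_1|^3d_M(x,F))$. By the Morse-Bott property (Assumption \ref{ass:morsebott}) there are tubular coordinates near $F$ in which $\lambda_1-\lambda_1|_F\asymp d_M(\cdot,F)^2$ and $\|\nabla_M\lambda_1\|\asymp d_M(\cdot,F)$. A Lyapunov computation then gives
\begin{align}
d_M(x_{t+1},F)\le\big(1-c'y_{1,t}^2+O(|y_{1,t}|^3)\big)d_M(x_t,F),
\end{align}
so $d_M(x_t,F)$ decays at rate $(1-C\epsilon)^t$. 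The displacements are summable, so $x_t$ converges to some point $x_\infty\in F$. The estimate before entry is only needed to show that $x$ does not leave the neighbourhood; the factor $(1-c'y_1^2)\le1$ gives monotonicity up to small errors.

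\textbf{Step 4: coupling, and the main obstacle.} Finally I would close the coupled estimate: $x_t\to x_\infty$ and $y_{2:q}\to0$ exponentially, and the $y_1$-map converges to the exact slice map at $x_\infty$, so $y_{1,t}$ converges to that slice's period-two orbit. A contraction or Grönwall-type argument on the product handles this. The main obstacle is uniformity as $\epsilon\to0$. The $x$-contraction and the $y_1$-contraction both degenerate like $\epsilon$, and the $O(y_1^4)$ and $d_M$ error terms must be shown not to destroy the $\Theta(1)$-radius trapping region. To handle this I would first try a weighted Lyapunov function
\begin{align}
V=(y_1^2-y_*^2)^2+K\,d_M(x,F)^2+\|y_{2:q}\|^2,
\end{align}
evaluated along the two-step map, with $K$ chosen so that the cross terms are dominated.
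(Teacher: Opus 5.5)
Your Step 1 and the closing perturbation argument in Step 4 match the paper: the scalar period-doubling estimate, and the comparison of $\GD^2_{y_1}$ with the slice map at $z_\infty$. There is, however, a genuine gap. Step 2 asserts something false on a neighbourhood of $\epsilon$-independent radius, and Step 3 depends on it.

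In Fermi coordinates $x=\exp^M_z(u)$ about $F$, the $y_1$-multiplier is $1+\epsilon+\frac{\eta}{2}D^2_M\lambda_1(z)[u,u]-y_1^2+O(\|u\|^3,|y_1|^3)$. So the effective bifurcation parameter is $\eta\lambda_1(x)-2=\epsilon+\Theta(d_M(x,F)^2)$. When $d_M(x,F)\gg\epsilon^{1/2}$, which is the generic case in such a neighbourhood, this perturbation is not ``absorbed''. It dominates $\epsilon$, the ratio $|\GD_{y_1}|/|y_1|$ stays above $1$ for $y_1^2$ up to roughly $\eta\lambda_1(x)-2$, and $|y_1|$ tracks $(\eta\lambda_1(x)-2)^{1/2}$ rather than entering the $\Theta(\epsilon^{1/2})$ band. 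Your claims that $|y_1|$ decreases toward the band from above and enters it in finite time therefore hold only once $d_M(x_t,F)=O(\epsilon^{1/2})$. That is exactly what Step 3 is meant to deliver, yet Step 3 assumes $y_1$ is already trapped. As written the argument is circular. The weighted $V$ is only suggested, so the uniformity obstacle you correctly identify is not resolved.

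The missing idea is that the $x$-perturbation has a definite sign, so you should first use only a one-sided bound. By the Morse--Bott condition, $\frac{\eta}{2}D^2_M\lambda_1(z)[u,u]-C\|u\|^3\ge0$ near $F$. Hence $|y_{1,t+1}|\ge(1+\epsilon/2)|y_{1,t}|$ whenever $y_{1,t}^2\le\epsilon/4$, whatever $u$ is. The paper proceeds as follows:
\begin{enumerate}
\item It uses this sign to show $\sum_t y_{1,t}^2=\infty$ by contradiction.
\item Hence $\|u_t\|\le\|u_0\|\exp(-c\sum_{j<t}y_{1,j}^2)\to0$ monotonically.
\item Only once $\|u_t\|\le\beta\epsilon^{1/2}$ does it show that the two-sided set $a(\eta)\le y_1^2\le4\epsilon$ is invariant and entered in finite time.
\item It then runs the $(1-C\epsilon)^t$ contraction for $u$, $z$ and $y_1$.
\end{enumerate}
Within your framework, the same sign observation shows that $\{y_1^2\ge\epsilon/2\}$ is forward-invariant independently of $u$. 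That is all your Step 3 needs, and the band argument can then be run afterwards.

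Separately, ``monotonicity up to small errors'' does not by itself keep $x$ in the chart uniformly in $\epsilon$. The tangential drift along $F$ is $O(y_1^2d_M(x,F)^2+|y_1|^3d_M(x,F))$ per step and accumulates over order $\epsilon^{-1}$ or more steps. The paper's Lemma \ref{lem:reduction1} controls this by showing that $d_F(z,\bar x)+K\|u\|+\|y_{2:q}\|$ is non-increasing, with the decrease of $K\|u\|$ paying for the drift.
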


\begin{figure}[t]
    \centering
    \includegraphics[width=\linewidth]{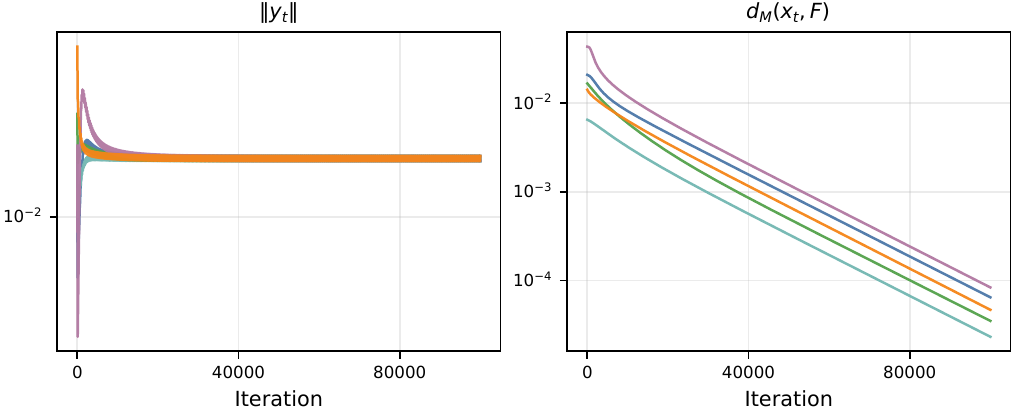}
    \caption{Log $y$-scale plots of $\|y_t\|$ (left) and $d_M(x_t,F)$ (right) for 5 independent trials  of gradient descent in the \textbf{supercritical} regime on a 3 layer, $2\times 2$ matrix factorisation problem. All trials exhibit the same exponential convergence rate to the claimed period-2 cycle along $\nu_1|_F$.}
    \label{fig:supercriticalmain}
\end{figure}

\section{Limitations, discussion and conclusion}

Our work addresses one of the open questions left by \cite{macdonaldeos}, namely the extension of the theory to higher codimension. Our extension reveals that the insights of \cite{macdonaldeos} persist in higher dimension, with GD dynamics splitting into RGD along $M$ coupled to a flip bifurcation and contracting stable directions orthogonal to $M$. However, several questions remain unaddressed by our work.

\textbf{Beyond the supercritical regime:} Prior work indicates that further increasing the step size beyond the supercritical limit considered in this work results in attractors of higher periodicity and chaos \cite{kalraeos}. Addressing this higher order behaviour remains an open problem, for which we hope our work will serve as foundation.

\textbf{Global convergence:} Although our convergence theorems hold in a neighbourhood of any point along the flat minima manifold, they remain merely \emph{local} in that they say nothing of convergence \emph{away} from this manifold. In particular, we do not believe that our results address either the ``progressive sharpening'' or ``edge of stability'' regimes identified in \cite{coheneos}, except at the tail-end after the flat minima manifold has been found. Nonetheless, we anticipate that the geometric spirit of our analysis will be useful in providing rigorous theory for these regimes in future work.

\textbf{Geometry of flat minima:} The popular hypothesis that flat minima generalise better \cite{hochreiter, keskar} has recently been challenged, with \cite{granziol, wen} showing that flatness is not sufficient for good generalisation. Our discovery of the manifold structure of flat minima in matrix factorisation raises the intriguing possibility that \emph{flat minima are not all equal} from the perspective of generalisation. A promising avenue for future research is the identification of disinguished submanifolds of flat minima which do exhibit better generalisation, and more efficient algorithms directed toward such submanifolds.

\textbf{Acknowledgements:} This work was supported by Simons Foundation Grant 81420 (Theorinet). LM thanks Glen Wheeler, Masafumi Yoshino and Johan L\"{o}fberg for helpful email exchanges.

\bibliographystyle{plain}
\bibliography{references}

\newpage

\appendix

\section{Additional notation}\label{app:notation}

Given vector bundles $E$ and $E'$ over a space $X$ denote by $\mathrm{Hom}(E,E')$ the vector bundle over $X$ whose fibre over $x\in X$ is the space of linear maps $E_x\rightarrow E'_x$. Given another vector bundle $E''$ over $X$ and sections $\sigma,\sigma'$ of $\mathrm{Hom}(E,E')$ and $\mathrm{Hom}(E',E'')$ respectively, $\sigma'\circ\sigma$ will denote the section of $\Hom(E,E'')$ given by fibrewise composition of $\sigma$ and $\sigma'$. The notation $E\otimes E'$ will be used to denote the tensor product bundle whose fibre over $x\in X$ is $E_x\otimes E_x'$. Given $k\in\NB$, $E^{\otimes k}$ will denote the $k^{th}$ tensor power of $E$, which is the vector bundle over $X$ whose fibre over $x\in X$ is the $k^{th}$ tensor power
\begin{align}
E_x^{\otimes k}:=\underbrace{E_x\otimes\cdots\otimes E_x}_{\text{$k$ times}}
\end{align}
of the vector space $E_x$. Given additional vector bundles $F,F'$ over $X$ and sections $\sigma_E,\sigma_F$ of $\mathrm{Hom}(E,E')$ and $\Hom(F,F')$ respectively, their tensor product $\sigma_E\otimes\sigma_F$ is the section of $\Hom(E\otimes F,E'\otimes F')$ defined by
\begin{align}
(\sigma_E\otimes\sigma_F)(x)[v\otimes w]:=\sigma_E(x)[v]\otimes\sigma_F(x)[w],\qquad x\in X,\quad v\in E_x,\,w\in F_x.
\end{align}
Given $k\in\NB$, $\sigma^{\otimes k}$ will denote the section of $\Hom(E^{\otimes k},(E')^{\otimes k})$ defined by
\begin{align}
\sigma^{\otimes k}(x)[v_1\otimes\cdots\otimes v_k]:=\sigma(x)[v_1]\otimes\cdots\otimes\sigma(x)[v_k],\qquad (x,v_i)\in E\quad \forall i=1,\dots,k.
\end{align}
Symmetric tensor products $E\odot E'$, $\sigma_E\odot\sigma_F$ and powers $E^{\odot k}$ and $\sigma^{\odot k}$ are defined in a formally identical fashion with $\otimes$ replaced by $\odot$.

\section{Normal form for GD}

In this section, we prove the normal form Theorem \ref{thm:normalformmain} for GD. We will in fact prove the more precise Theorem \ref{thm:normalformappendix}. We adopt the notation used throughout the main body of the paper as well as that of Appendix \ref{app:notation}. Our proof will factor through three distinct coordinate changes; the first merely rewrites $\GD$ in tubular neighbourhood coordinates; the second converts these tubular coordinates into centre-manifold and stable-foliation coordinates in a manner similar to Fenichel \cite{fenichel_singular} to distinguish an appropriately invariant orthogonal coordinate corresponding to the top eigenvalue of the Hessian to the dynamics along the solution set; the third coordinate transformation deforms this distinguished centre manifold coordinate into a form more amenable to convergence analysis by solving a partial differential equation (see Section \ref{sec:singularpde}.

Let us fix $\bar{x}\in F$ as in the statement of Theorem \ref{thm:normalformmain}. Although $\nu_{2:q}M$ need not be globally trivialisable\footnote{Consider, for instance, the model $f:\RB^3\times\RB^2\rightarrow\RB^3$ defined by $f(x,y):=(\|x\|^2-1,y)$}, it is \emph{locally} trivial, so there is an open neighbourhood $V^1_x\subset M\setminus S$ of $\bar{x}$ on which we can find a $C^{\infty}$ orthonormal frame field $\nu_{2:q}:V^1_x\rightarrow \nu_{2:q}M|_{V^1_x}$, which is completed by $\nu_1$ to a local orthonormal frame field $\nu:V^1_x\rightarrow \nu M|_{V^1_x}$ for $\nu M$. We will assume this $\nu$ to be fixed in what follows.

We must specify some notation concerning derivatives of $\nu$. The field $\nu:V^1_x\rightarrow \RB^{p\times q}$ acts on vectors $z\in\RB^q$ to give $\nu(x)z\in \nu_xM\subset T_x\RB^p$ for any $x\in V^1_x$. Its \emph{derivative} $D_M\nu(x)$ therefore acts on a pair of vectors, one corresponding to the direction in which it is differentiated in $V^1_x\subset M$, and the other corresponding to the evaluation of $\nu$ on $q$-vectors as in the previous sentence; thus, given $v_x\in T_xM$ and $z\in \RB^q$, we will denote $D_M\nu(x)[v_x]z\in T_x\RB^p$ for this dual evaluation. A similar remark holds for the higher derivatives of $\nu$; given $k$ vectors $v_1,\dots,v_k\in T_xM$ and $z\in \RB^q$, we will denote $D^k_M\nu(x)[v_1,\dots,v_k]z\in T_x\RB^p$ for the $k^{th}$ order derivative of $\nu$ evaluated on the $v_1,\dots,v_k$ in the $M$-directions.

With the local frame $\nu_{1:q}$ in hand, recall that we can find an open neighbourhood of $V^1_y$ of $0$ in $\RB^q$ such that the normal exponential map $E:V^1_x\times V^1_y\rightarrow\RB^p$ defined by
\begin{align}
E(x,y):=x + \nu(x)z,\qquad (x,y)\in V^1_x\times V^1_y
\end{align}
is a diffeomorphism onto its image. Our first result gives a leading order expression for the inverse of this map, and generalises \cite[Proposition C.3]{macdonaldeos} to the case where $q$ may be greater than 1.

\begin{lemma}\label{lem:Einv}
    Given $x\in V^1_x$ and $v\in T_x\RB^p$, write $v = v_x + \nu_{1:q}(x)v_y$, where $v_x\in T_xM$ and $v_y\in \RB^q$. Then for all $x\in V_x$ and all $v\in T_x\RB^p$ sufficiently small that $x+v\in E(V^1_x\times V^1_y)$, one has
    \begin{align}
    E^{-1}(x + v) = \begin{pmatrix} x + v_x - P_{TM}(x)D_M\nu(x)[v_x]v_y \\ v_y - \nu(x)^TD_M\nu(x)[v_x]v_y\end{pmatrix} + O\big(\|v_x\|^3,\|v_x\|^2\|v_y\|,\|v_x\|\|v_y\|^2\big)
    \end{align}
    uniformly over $E(V^1_x\times V^1_y)$ as $v\rightarrow 0$.
\end{lemma}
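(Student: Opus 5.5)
The plan is to treat the inversion of $E$ as a fixed-point problem and expand to second order. Write the unknown preimage as $E^{-1}(x+v)=(x',y')$, with $x'$ close to $x$ in $M$ and $y'\in V^1_y$. Since $M$ is an embedded submanifold, I will identify points of $M$ near $x$ with vectors of $\RB^p$ and write $x'=x+u$, where $u\in\RB^p$ is small. We have $u=u_T+u_N$ with $u_T:=P_{TM}(x)u$ and $u_N=O(\|u_T\|^2)$, because $M$ is locally a graph over $T_xM$; this is how the expression $x+v_x-\dots$ in the statement should be read. The defining equation is then
\begin{align}
x+u+\nu(x+u)y' = x+v_x+\nu(x)v_y .
\end{align}

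The first step is to show $\|u\|+\|y'\|=O(\|v\|)$. This follows because $E$ is a diffeomorphism whose differential at $(x,0)$ is $(u_T,z)\mapsto u_T+\nu(x)z$. That map is an isometric isomorphism $T_xM\times\RB^q\to\RB^p$, and its inverse is uniformly bounded on a slightly shrunk neighbourhood, so $E^{-1}$ is uniformly Lipschitz there. At leading order this already gives $u_T\approx v_x$ and $y'\approx v_y$.

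The second step is to Taylor expand $\nu(x+u)=\nu(x)+D_M\nu(x)[u_T]+O(\|u\|^2)$. I will use the fact that $u_N=O(\|u\|^2)$, so the $M$-derivative only sees $u_T$ at first order. Substituting, we get
\begin{align}
u+\nu(x)(y'-v_y)+D_M\nu(x)[u_T]y' = v_x + O(\|u\|^2\|y'\|).
\end{align}
Now project onto $T_xM$ with $P_{TM}(x)$, and onto the $\nu$-coordinates with $\nu(x)^T$ (using $\nu^T\nu=I_q$ and $\nu^TP_{TM}=0$). This yields $u_T=v_x-P_{TM}D_M\nu[u_T]y'+O(\cdot)$ and $y'-v_y=-\nu^TD_M\nu[u_T]y'-\nu^Tu_N+O(\cdot)$.

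The third step is to bootstrap. Substitute $u_T=v_x+O(\|v_x\|\|v_y\|+\|v_y\|^2)$ and $y'=v_y+O(\|v\|^2)$ back into the bilinear terms. This replaces $D_M\nu[u_T]y'$ by $D_M\nu[v_x]v_y$, at the cost of errors of the admissible orders $\|v_x\|^2\|v_y\|$, $\|v_x\|\|v_y\|^2$, and $\|v_x\|^3$.

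I expect the main obstacle to be bookkeeping in the error terms. Two issues need care. The first is showing that no pure $\|v_y\|^2$ or $\|v_y\|^3$ error appears. Indeed, when $v_x=0$ we have $E(x,v_y)=x+\nu(x)v_y$ exactly, so $E^{-1}$ is exact there, and every error term must therefore carry a factor of $v_x$; I will make this rigorous by writing each remainder via the integral form of Taylor's theorem with an explicit factor of $u_T$ (or of $u$). The second is that the curvature term $u_N$ is $O(\|v_x\|^2)$ and is absorbed into the $x$-coordinate's meaning as a point of $M$. In the $y$-equation, $\nu^Tu_N$ is second order in $v_x$, which is not among the listed errors, so I will have to check it against the intended convention. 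The plan is to interpret ``$x+v_x$'' as the point of $M$ reached by moving along $v_x$ (for instance $\exp_x(v_x)$ or the graph point), which absorbs $u_N$ and leaves $\nu(x)^Tu_N$ accounted for. Uniformity follows because all constants come from $C^3$ bounds on $\nu$ and on the local graph parametrisation over a compact shrinking of $V^1_x\times V^1_y$.
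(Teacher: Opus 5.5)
Your route differs from the paper's. The paper Taylor-expands $E^{-1}$ to second order, using $DE^{-1}(x)=DE(x,0)^{-1}$, $D^2E^{-1}=-DE(x,0)^{-1}D^2E(x,0)[DE(x,0)^{-1}\cdot,DE(x,0)^{-1}\cdot]$, and a cited remainder lemma that exploits that $E$ is affine in $y$. You instead solve $E(x',y')=x+v$ directly in a graph chart, project, and bootstrap. Your first three steps are sound. The bootstrap needs $u_T=O(\|v_x\|)$ and $y'-v_y=O(\|v_x\|\|v\|)$ rather than the cruder bounds you wrote, but your observation that $E^{-1}$ is exact when $v_x=0$ supplies exactly this. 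These steps reproduce the $x$-component as stated once ``$x+(\cdot)$'' is read as the graph point or as $\exp_x(\cdot)$; the two readings agree up to $O(\|v_x\|^3)$.

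The gap is your final step. The term $-\nu(x)^Tu_N$ sits in the $y$-component, which is a well-defined vector in $\RB^q$. No convention about how the $M$-component is written can absorb it. Since $u_N=\frac{1}{2}\mathrm{II}_x(u_T,u_T)+O(\|u_T\|^3)$, where $\mathrm{II}_x$ is the second fundamental form of $M\subset\RB^p$, your own equations give
\begin{align}
y' = v_y-\nu(x)^TD_M\nu(x)[v_x]v_y-\frac{1}{2}\nu(x)^T\mathrm{II}_x(v_x,v_x)+O\big(\|v_x\|^3,\|v_x\|^2\|v_y\|,\|v_x\|\|v_y\|^2\big),
\end{align}
and the curvature term is genuinely quadratic in $v_x$.

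A concrete example shows this. Take the unit circle in $\RB^2$ with $\nu(x)=x$, so $E(x,y)=(1+y)x$. At $x=(1,0)$ with $v_x=(0,a)$ and $v_y=b$, the $y$-component of $E^{-1}(x+v)$ is $\sqrt{(1+b)^2+a^2}-1=b+\frac{a^2}{2}+O(a^2|b|+a^4)$. The statement predicts $b$, because $\nu(x)^TD_M\nu(x)[v_x]=x^Tv_x=0$.

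So the lemma as written is false whenever $\nu(x)^T\mathrm{II}_x\neq 0$, and you cannot prove it. What your argument actually establishes is the corrected formula above, or equivalently the stated one with $\|v_x\|^2$ added to the error in the $y$-component. Your decomposition has the merit of making this term visible. The paper's proof reaches the stated formula only because its expression for $D^2E(x,0)$ omits the second derivative of the inclusion $M\hookrightarrow\RB^p$, which is exactly $\mathrm{II}_x(v_x,v_x)$.

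The discrepancy is harmless where the lemma is used. In Proposition~\ref{prop:tubularcoords} one has $\|v_x\|=O(y_1^2d_M(x,F),|y_1|\|y_{2:q}\|)$. Hence $\|v_x\|^2=O(\|y\|^4)$, which is absorbed into the $O(\|y\|^4)$ remainder of the $y$-update.
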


\begin{proof}
    Since $E^{-1}$ is $C^{\infty}$, by Taylor's theorem, one has
    \begin{align}\label{eq:Einv}
    E^{-1}(x+v) = E^{-1}(x) + DE^{-1}(x)[v] + \frac{1}{2}D^2E^{-1}(x)[v,v] + \mathrm{error}(x,v),
    \end{align}
    for some $C^{\infty}$ remainder function $\mathrm{error}(x,v)$. By \cite[Lemma C.3]{macdonaldeos}, since $E(x,y) = x + \nu(x)y$ has all $z$-derivatives of order $>1$ vanishing, this remainder function satisfies
    \begin{align}
    \mathrm{error}(x,v) = O\big(\|v_x\|^3,\|v_x\|^2\|v_y\|,\|v_x\|\|v_y\|^2\big)
    \end{align}
    uniformly over $E(V^1_x\times V^1_y)$ as $v\rightarrow 0$.
    
    Now, to compute the leading order terms of \eqref{eq:Einv}, observe first that for any $(x,y)\in V$, one has
    \begin{align}
    DE(x,y)\bigg[\begin{pmatrix} v_x\\v_y\end{pmatrix}\bigg] = v_x+ D_M\nu(x)[v_x]y + \nu(x)v_y
    \end{align}
    so that in particular
    \begin{align}
    DE(x,0)\bigg[\begin{pmatrix} v_x \\ v_y\end{pmatrix}\bigg] = v_x + \nu(x)v_y,
    \end{align}
    while
    \begin{align}
        D^2E(x,y)\bigg[\begin{pmatrix} v_x\\v_y\end{pmatrix}^{\odot 2}\bigg] = D^2_M\nu(x)[v_x^{\odot 2}]y + 2D_M\nu(x)[v_x]v_y
    \end{align}
    so that in particular
    \begin{align}
    D^2E(x,0)\bigg[\begin{pmatrix} v_x\\v_y\end{pmatrix}\bigg] = 2D_M\nu(x)[v_x]v_y.
    \end{align}
    By differentiating the identity $E^{-1}\circ E = \mathrm{identity}$, one then obtains
    \begin{align}
    DE^{-1}(x) = DE(x,0)^{-1} = \begin{pmatrix} P_{TM}(x)\\\nu(x)^T\end{pmatrix},
    \end{align}
    while
    \begin{align}
    D^2E^{-1}(x)[v,v] &= -DE(x,0)^{-1}D^2E(x,0)[DE(x,0)^{-1}v,DE(x,0)^{-1}v]\\&=-2\begin{pmatrix} P_{TM}(x)D_M\nu(x)[v_x]v_y \\ \nu(x)^TD_M\nu(x)[v_x]v_y\end{pmatrix}.
    \end{align}
    This completes the proof. 
\end{proof}

We remark that the term $\nu^TD_M\nu$ that appears in Lemma \ref{lem:Einv} is, in general, the expression of the connection on the vector bundle $\nu M\rightarrow M$ inherited from its embedding into the trivial bundle $T\RB^p|_M\rightarrow M$ (with its trivial connection) and expressed in the frame $\nu$. Since this connection is metric-compatible and $\nu$ is an orthonormal frame field, one is guaranteed that $\nu^TD_M\nu$ evaluates on any tangent vector $v_x\in T_xM$ to give an antisymmetric $q\times q$ matrix $\nu(x)^TD_M\nu(x)[v_x]$; if $q=1$, this is enough to force the term to vanish, however it need not vanish in general.

Having fixed the local orthonormal frame field $\nu_{1:q}$, it will be convenient to introduce the following notation. Letting $P_{TM}:T\RB^p|_M\rightarrow TM$ denote the orthogonal projection, we denote
\begin{align}
\nabla^k\ell_{\nu}:=\nabla^k\ell\circ \nu_{1:q}^{\odot (k-1)}\in \Hom\big(V^1_x\times(\RB^q)^{\odot(k-1)},T\RB^p|_{M\setminus S}\big),
\end{align}
\begin{align}
\nabla^k\ell_{\nu}^{\parallel}:=P_{TM}\circ \nabla^k\ell\circ\nu_{1:q}^{\odot (k-1)}\in\Hom\big(V^1_x\times(\RB^q)^{\odot(k-1)},TM|_{M\setminus S}\big)
\end{align}
and
\begin{align}
\nabla^k\ell_{\nu}^{\perp}:=\nu_{1:q}^T\circ\nabla^k\ell\circ \nu_{1:q}^{\odot(k-1)}\in\Hom\big(V^1_x\times(\RB^q)^{\odot(k-1)},(M\setminus S)\times\RB^q\big).
\end{align}
The following lemma gives the expression for $\GD$ in tubular neighbourhood coordinates associated to the local frame $\nu$.

\begin{proposition}\label{prop:tubularcoords}
    Consider the embedding $\Phi^1:\RB\times V^1_x\times V^1_y\rightarrow\RB^p$ defined by 
    \begin{align}
        \Phi_1(\eta,x,y):=(\eta,E(x,y)),\qquad (\eta,x,y)\in\RB\times V.
    \end{align}
    Let $U^1:=U^1_{\eta}\times U^1_x\times U^1_y$ be a product neighbourhood of $(2/\lambda_1|_F,\bar{x},0)$ such that $\GD\circ \Phi^1(U^1)\subset \Phi^1(\RB\times V^1_x\times V^1_y)$. Then one has
    \begin{align}
    \big((\Phi_1)^{-1}\circ\GD\circ \Phi^1\big)&(\eta,x,y) =: \big(\eta, \GD_x(\eta,x,y),\GD_y(\eta,x,y)\big)\\ &= \begin{pmatrix}\eta\\
        x - \frac{\eta y_1^2}{2}\nabla_M\lambda_1(x) + O\big(|y_1|^3d_M(x,F),|y_1|\|y_{2:q}\|\big)\\T_1(\eta,x)[y]+T_2(\eta,x)[ y^{\otimes 2}] + T_3(\eta,x)[ y^{\otimes 3}] + O(\|y\|^4)
    \end{pmatrix},
    \end{align}
    for all $(\eta,x,y)\in U^1$, where
    \begin{align}
    T_1(\eta,\cdot) = I-\eta\nabla^2\ell_{\nu}^{\perp},\qquad T_2(\eta,\cdot) = -\frac{\eta}{2}\nabla^3\ell_{\nu}^{\perp},
    \end{align}
    and
    \begin{align}
    T_3(\eta,\cdot) = -\frac{\eta}{6}\nabla^4\ell_{\nu}^{\perp}-\eta\,\big(\nu^T\circ D_M\nu\circ(\nabla^3\ell_{\nu}^{\parallel}\otimes T_1)\big)
    \end{align}
\end{proposition}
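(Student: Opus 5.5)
Write $w=E(x,y)=x+\nu(x)y$ and Taylor expand $\nabla\ell$ about the base point $x\in M$ in the normal direction $v:=\nu(x)y$. Since $\nabla\ell(x)=0$ on $M$, we get
\begin{align}
\nabla\ell(x+\nu y) = \nabla^2\ell(x)[\nu y] + \tfrac12\nabla^3\ell(x)[(\nu y)^{\odot 2}] + \tfrac16\nabla^4\ell(x)[(\nu y)^{\odot 3}] + O(\|y\|^4).
\end{align}
So $\GD(\eta,E(x,y)) = x + v'$ with $v' := \nu y - \eta\nabla\ell(x+\nu y)$, a small vector in $T_x\RB^p$. I then apply Lemma \ref{lem:Einv} with $v=v'$. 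Its decomposition is
\begin{align}
v'_x = -\eta\big(\tfrac12\nabla^3\ell^{\parallel}_\nu[y^{\odot2}] + \tfrac16\nabla^4\ell^\parallel_\nu[y^{\odot3}]\big) + O(\|y\|^4),
\end{align}
because $\nabla^2\ell(x)$ maps $\nu_xM$ into $\nu_xM$ by Proposition \ref{prop:decomposition}, so there is no first-order tangential part. The normal part is
\begin{align}
v'_y = T_1[y] + T_2[y^{\otimes 2}] - \tfrac{\eta}{6}\nabla^4\ell^\perp_\nu[y^{\otimes3}] + O(\|y\|^4),
\end{align}
with $T_1 = I-\eta\nabla^2\ell_\nu^\perp$ and $T_2 = -\frac\eta2\nabla^3\ell_\nu^\perp$. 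Note $\|v'_x\| = O(\|y\|^2)$ and $\|v'_y\| = O(\|y\|)$.

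\emph{$y$-component.} Lemma \ref{lem:Einv} gives $v'_y - \nu^TD_M\nu[v'_x]v'_y$ plus an error of order $O(\|v'_x\|^3,\|v'_x\|^2\|v'_y\|,\|v'_x\|\|v'_y\|^2)$. That error is $O(\|y\|^4)$. In the correction term, the leading part of $v'_x$ is $-\frac\eta2\nabla^3\ell^\parallel_\nu[y^{\odot2}]$ and the leading part of $v'_y$ is $T_1y$. Their combination is a cubic, namely the term $-\eta\,\nu^T D_M\nu(\nabla^3\ell^\parallel_\nu\otimes T_1)[y^{\otimes3}]$ in $T_3$; I will check that the factor $\frac12$ and the sign are correct. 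All other contributions are $O(\|y\|^4)$, which yields the claimed $T_1,T_2,T_3$.

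\emph{$x$-component.} Lemma \ref{lem:Einv} gives $x + v'_x - P_{TM}D_M\nu[v'_x]v'_y + O(\|y\|^4)$, and the correction term is cubic in $y$. Split $y=(y_1,y_{2:q})$. Every term containing a factor of $y_{2:q}$ is $O(|y_1|\|y_{2:q}\|)$ after absorbing higher powers; here I will treat $\|y_{2:q}\|^2$ terms by noting the neighbourhood is small. The statement's error allows only $|y_1|\|y_{2:q}\|$, so pure $y_{2:q}^2$ terms need care, and I must check whether they appear as stated or were intended to be absorbed. The pure $y_1^2$ term is $-\frac{\eta}{2}y_1^2P_{TM}\nabla^3\ell[\nu_1,\nu_1]$. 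The key identity is $P_{TM}\nabla^3\ell(x)[\nu_1,\nu_1] = \nabla_M\lambda_1(x)$. To prove it, differentiate $\lambda_1 = \langle\nu_1,\nabla^2\ell\,\nu_1\rangle$ along $M$ in a direction $u\in T_xM$, using $\|\nu_1\|=1$ and $\nabla^2\ell\,\nu_1=\lambda_1\nu_1$. This gives $D\lambda_1[u] = \nabla^3\ell[u,\nu_1,\nu_1]$, and symmetry of $\nabla^3\ell$ converts this into the gradient.

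\emph{Main obstacle.} The hardest step is the pure $y_1^3$ tangential terms. These are $-\frac\eta6 y_1^3P_{TM}\nabla^4\ell[\nu_1^{\otimes3}]$ plus the connection term $P_{TM}D_M\nu[\nabla^3\ell^\parallel[\nu_1,\nu_1]]\nu_1\cdot(1-\eta\lambda_1)$. I must show their sum is $O(|y_1|^3 d_M(x,F))$, i.e.\ that it vanishes on $F$, after which a $C^1$ estimate along $M$ finishes. The connection term vanishes on $F$ because $\nabla_M\lambda_1=0$ there. For the $\nabla^4\ell$ term, I would use Assumption \ref{ass:morsebott}(2): invariance of the span of $\nu_1|_F$ under GD means $\nabla\ell(x+s\nu_1(x))\parallel\nu_1(x)$ for small $s$ and $x\in F$. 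Expanding in $s$ then forces $P_{TM}\nabla^k\ell[\nu_1^{\otimes(k-1)}]=0$ on $F$ for every $k$, which disposes of this term.
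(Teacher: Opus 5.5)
Your route is the paper's. You apply Lemma~\ref{lem:Einv} to $v=\nu y-\eta\nabla\ell(x+\nu y)$, use that $\nabla^2\ell(x)$ maps into $\nu_xM$, use the identity $P_{TM}\nabla^3\ell[\nu_1^{\odot2}]=\nabla_M\lambda_1$ (you prove it; the paper only asserts it), and use Assumption~\ref{ass:morsebott}(2) to make the pure-$y_1$ tangential terms vanish on $F$.

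The one real gap is in your $x$-component bookkeeping. You cannot stop at ``$+O(\|y\|^4)$''. Pure $y_1^4$ and higher contributions are not $O(|y_1|^3d_M(x,F))$ unless they also vanish on $F$. Knowing only that every Taylor coefficient $P_{TM}\nabla^k\ell[\nu_1^{\otimes(k-1)}]$ vanishes on $F$ does not control the remainder of a merely $C^\infty$ function. Do what the paper does and apply the invariance to the whole function $g(x,y_1):=P_{TM}(x)\nabla\ell(x+y_1\nu_1(x))$, which is identically zero for $x\in F$. Then $g-\frac{y_1^2}{2}\nabla_M\lambda_1=y_1^3R(x,y_1)$ with $R$ smooth and zero on $F$, hence $O(|y_1|^3d_M(x,F))$. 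The same observation gives the refined bound $\|v_x\|=O(y_1^2d_M(x,F)+|y_1|\|y_{2:q}\|+\|y_{2:q}\|^2)$. You need this bound, not $\|v_x\|=O(\|y\|^2)$, for the correction $P_{TM}D_M\nu[v_x]v_y$ and the error $O(\|v_x\|\|v_y\|^2)$ of Lemma~\ref{lem:Einv} to be absorbed.

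Both checks you deferred come out against the statement as written, so do not try to force them.

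(i) With $v_x=-\frac{\eta}{2}\nabla^3\ell^{\parallel}_\nu[y^{\odot2}]+O(\|y\|^3)$ and $v_y=T_1y+O(\|y\|^2)$, the correction $-\nu^TD_M\nu[v_x]v_y$ equals $+\frac{\eta}{2}\nu^TD_M\nu[\nabla^3\ell^{\parallel}_\nu[y^{\odot2}]]T_1y+O(\|y\|^4)$. So the connection term in $T_3$ has coefficient $+\frac{\eta}{2}$, not $-\eta$. The paper never computes it; it only says that applying Lemma~\ref{lem:Einv} gives the result.

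(ii) The term $-\frac{\eta}{2}P_{TM}\nabla^3\ell[(\nu_{2:q}y_{2:q})^{\odot2}]$ survives and is not $O(|y_1|\|y_{2:q}\|)$; smallness of the neighbourhood does not rescue it. The paper's proof drops it when it replaces $P_{TM}\nabla\ell(x+\nu y)$ by $P_{TM}\nabla\ell(x+y_1\nu_1)+O(|y_1|\|y_{2:q}\|)$. For example, take $f(x_1,x_2,y)=(2x_2+x_2^2,\sqrt{1+x_1^2}\,y)$ with $\tau=0$. Near the origin it satisfies all four assumptions, with $F=M$ the $x_1$-axis, $\nu_1=e_{x_2}$ and $\alpha_{1/2}=32$. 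Yet at $y_1=0$ one has exactly $\GD_x=x_1(1-\eta y_2^2)\neq x_1$.

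So what you can actually prove is the statement with $T_3$ corrected and with an extra $O(\|y_{2:q}\|^2)$ in the $x$-error. Neither change matters downstream. Later only $e_1^TT_3[e_1^{\otimes3}]$ is used, and it vanishes by antisymmetry of $\nu^TD_M\nu[\cdot]$ whatever the coefficient. The $x$-update is only used on the centre manifold, where $y_{2:q}=h=O(y_1^2d_M(x,F))$.
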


\begin{proof}
    For $(\eta,x,y)\in U^1$, one has
    \begin{align}
    E^{-1}\GD(\eta,E(x,y)) &= E^{-1}\big(x + \nu(x)y-\eta\nabla\ell(x + \nu(x)y)\big).
    \end{align}
    We therefore apply Lemma \ref{lem:Einv} with $v = \nu(x)y-\eta\nabla\ell(x + \nu(x)y)$. We begin by computing $P_{TM}(x)[v]$, for which we first make the observation that by Assumption \ref{ass:morsebott}, for any $x_*\in F$ sufficiently close to $\bar{x}$, one has $P_{TM}(x_*)\big[\nabla\ell(x_* + y_1\nu_1(x_*))\big] = 0$ for all $y_1\in\RB$ sufficiently small, from which it follows by differentiability of $P_{TM}$, $\nabla\ell$ and $\nu_1$ that
    \begin{align}
    P_{TM}(x)\big[\nabla\ell(x + y_1\nu_1(x))\big] = O(d_M(x,F))
    \end{align}
    as $x\rightarrow F$. One then computes
    \begin{align}
        v_x = P_{TM}(x)[v] &= -\eta\, P_{TM}(x)\big[\nabla\ell(x + \nu(x)y)\big]\\&=-\eta \,P_{TM}(x)\big[\nabla\ell(x+y_1\nu_1(x))\big] + O(|y_1|\|y_{2:q}\|)\\&=-\frac{\eta y_1^2}{2}P_{TM}(x)\nabla^3\ell[\nu_1^{\odot 2}](x) + O\big(|y_1|^3d_M(x,F),|y_1|\|z_{2:q}\|\big)\\&=-\frac{\eta y_1^2}{2}\nabla_M\lambda_1(x) + O\big(|y_1|^3d_M(x,F),|y_1|\|y_{2:q}\|\big)
    \end{align}
    uniformly over $U^1$ as $y\rightarrow 0$ and $x\rightarrow F$, where on the second line we have used the fact that $\nabla\ell|_M\equiv 0$ and on the final line we have used the fact that $P_{TM}\nabla^3\ell[\nu_1^{\odot 2}] = P_{TM}\nabla\lambda_1 = \nabla_M\lambda_1$. On the other hand, letting $P_{\nu M}:T\RB^p|_M\rightarrow \nu M$ denote the projection, one has
    \begin{align}
    v_y = P_{\nu M}(x)[v] = (I-\eta\nabla^2\ell^{\perp}_{\nu}(x)y-\frac{\eta}{2}\nabla^3\ell^{\perp}_{\nu}(x)[y^{\odot 2}]-\frac{\eta}{6}\nabla^4\ell^{\perp}_{\nu}(x)[y^{\odot 3}] + O(\|y\|^4)
    \end{align}
    uniformly over $U^1$ as $y\rightarrow 0$ by Taylor expansion. Finally, noting the coarser estimates
    \begin{align}
    \|v_x\| = O\big(y_1^2d_M(x,F),|y_1|\|y_{2:q}\|\big),\qquad \|v_y\| = O\big(\|y\|\big)
    \end{align}
    and applying Lemma \ref{lem:Einv} gives the result.
\end{proof}

Our next coordinate transformation requires the following centre manifold and stable foliation, which are discrete-time versions of those used by Fenichel in \cite{fenichel_singular}; see also \cite{fenichel1974asymptotic, fenichel1977asymptotic, hpsinvariant}. Note here that, in the space $\RB\times\RB^p$, we denote by $\nu_1$ and $\nu_{2:q}$ the local frame fields defined by pulling back via the projection $\RB\times\RB^p\rightarrow\RB^p$.

\begin{lemma}\label{lem:invariantmanifoldandfoliation}
For any $\bar{x}\in F$, there exists:
\begin{enumerate}
\item An open neighbourhood $U$ of $(2/\lambda_1|_F,\bar{x})$ in $\RB\times\RB^p$.
\item A $C^4$ submanifold $\CC\subset U$ which contains $(\RB\times M)\cap U$, is tangent at $(\RB\times M)\cap U$ to  $T(\RB\times M)|_{\CC\cap (\RB\times M)} + \mathrm{span}\{\nu_1|_{\CC\cap (\RB\times M)}\}$, is locally invariant under $\GD$ in the sense that $\GD(\CC)\cap U\subset \CC$, and contains the invariant subspace $\mathrm{span}(\nu_1|_F)$.
\item A $C^4$ foliation $\FC$ of $U$ whose leaves $L_{(\eta,w)}$ are parametrised by the points $(\eta,w)\in\CC$, tangent at $(\RB\times M)\cap U$ to $\mathrm{span}\{\nu_{2:q}|_{(\RB\times M)\cap U}\}$ and satisfy $\GD(L_{(\eta,w)})\subset L_{\GD(\eta,w)}$ for all $(\eta,w)\in \CC\cap (\GD)^{-1}U$.
\end{enumerate} 
\end{lemma}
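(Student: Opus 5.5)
The plan is to treat $\RB\times M$, locally near $(2/\lambda_1|_F,\bar{x})$, as a manifold of fixed points of the extended map $\GD(\eta,w)=(\eta,w-\eta\nabla\ell(w))$. We then apply a discrete-time Fenichel / Hirsch--Pugh--Shub normally hyperbolic invariant manifold theorem, in its centre--stable splitting form, to this fixed-point manifold.

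\textbf{Step 1: the linearisation at fixed points.} At a fixed point $(\eta,x)$ with $x\in M$, the differential of $\GD$ is $\mathrm{id}$ on $\RB\times T_xM$. By Proposition \ref{prop:decomposition}, on each eigendirection $\nu_i(x)$ it acts as multiplication by $1-\eta\lambda_i(x)$. Near $(2/\lambda_1(\bar{x}),\bar{x})$ the multiplier on $\nu_1$ is close to $-1$. The multipliers on $\nu_{2:q}$ are $1-\eta\lambda_i$, and these lie in a compact subset of $(-1,1)$, strictly inside the disc of radius $\rho<1$. This uses that $\lambda_i<\lambda_1$ off $S$ (Assumption \ref{ass:unique}) and $\lambda_i>0$.

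We therefore obtain a continuous splitting $E^c\oplus E^s$ along $(\RB\times M)\cap U$, where
\begin{align}
E^c=T(\RB\times M)+\mathrm{span}\{\nu_1\},\qquad E^s=\mathrm{span}\{\nu_{2:q}\}.
\end{align}
Here $E^c$ has spectrum on or near the unit circle, with $|\cdot|\ge \rho'>\rho$, and $E^s$ has spectrum in $\{|z|\le\rho\}$. This is a pseudo-hyperbolic (dominated) splitting.

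\textbf{Step 2: localisation and the centre manifold.} We work in the tubular coordinates of Proposition \ref{prop:tubularcoords}. We multiply the nonlinear part by a bump function, supported in a small neighbourhood, so that the modified map is globally $C^1$-close to its linearisation along $\RB\times M$. The standard graph-transform (Lyapunov--Perron) construction then yields a locally invariant centre manifold $\CC$ tangent to $E^c$.

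The smoothness of $\CC$ is governed by the spectral gap: it is $C^r$ as long as $\rho<(\rho')^{r}$. Since $\rho'$ can be taken arbitrarily close to $1$ by shrinking $U$, we may take $r=4$. The set $(\RB\times M)\cap U$ consists of fixed points with centre behaviour, so it lies in every locally invariant centre manifold constructed this way; this gives $(\RB\times M)\cap U\subset\CC$. Similarly, by Assumption \ref{ass:morsebott}(2) the set $\mathrm{span}(\nu_1|_F)$ is invariant and has bounded orbits. Its points therefore never leave the centre region, so the characterisation of $\CC$ as the set of points whose orbits stay in $U$ with controlled growth puts $\mathrm{span}(\nu_1|_F)$ inside $\CC$. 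Some care is needed in choosing the cutoff so that this characterisation is compatible with the local invariance.

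\textbf{Step 3: the stable foliation.} Following Fenichel's discrete construction, we obtain the stable fibres $L_{(\eta,w)}$ as the set of points $z$ whose forward orbit shadows the orbit of $(\eta,w)$ at rate $\rho^t$. This is again a contraction-mapping argument on sequences or graphs. It produces leaves tangent to $E^s$ that satisfy $\GD(L_{(\eta,w)})\subset L_{\GD(\eta,w)}$. $C^4$ regularity of the foliation, in particular of its dependence on the base point, again follows from the gap condition $\rho<(\rho')^4$.

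\textbf{Expected main obstacle.} The main difficulty is that the centre direction is not uniformly hyperbolic: its multipliers are $1$ and $\approx-1$ rather than uniformly expanding. We therefore need the ``pseudo-hyperbolic / $r$-normally hyperbolic'' versions of the theorems, which rely only on domination. We also need to verify the rate conditions uniformly over a neighbourhood of $(2/\lambda_1|_F,\bar{x})$. Here the key inputs are the gap $\lambda_2<\lambda_1$ near $\bar{x}$ and the control of the $\eta$-dependence. Since $\eta$ is carried as a trivially evolving coordinate, it is absorbed into the centre directions.
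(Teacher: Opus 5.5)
Your overall strategy is the paper's. You regard $\RB\times M$ as a manifold of fixed points and check, by continuity from $(2/\lambda_1|_F,\bar{x})$, that the $\nu_{2:q}$-multipliers $1-\eta\lambda_i$ are uniformly dominated by the centre multipliers, all of modulus close to $1$. You then invoke the Hirsch--Pugh--Shub graph transform for a $C^4$ centre manifold and Fenichel's construction for a $C^4$ stable foliation.

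\textbf{The gap.} What fails as written is your justification that $N:=\mathrm{span}(\nu_1|_F)$ lies in $\CC$. The only hyperbolic direction $\mathrm{span}\{\nu_{2:q}\}$ is contracting, so $\CC$ is of centre-\emph{unstable} type. The forward orbit of every point near $\CC$ is attracted to $\CC$ along its leaf. Hence the set of points whose forward orbits stay in $U$ with controlled growth is full-dimensional, not the submanifold $\CC$. If by ``orbits'' you mean forward orbits, your characterisation of $\CC$ is false. If you mean two-sided orbits, the backward half is exactly what forward invariance and boundedness in $N$ do not give you, and forward invariance is all Assumption~\ref{ass:morsebott}(2) provides.

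\textbf{The repair.} For the cut-off map, $\CC$ is characterised by the existence of a \emph{backward} orbit of growth at most $\rho_*^{-n}$, for some $\rho<\rho_*<\rho'$. So you need every point of $N$ to have a preimage in $N$, i.e.\ the modified map must send $N$ \emph{onto} $N$. This can be arranged as follows.
\begin{itemize}
\item Work in tubular coordinates in which $F$, and hence $N$, is flat.
\item There, $\GD$ maps $N$ into $N$ by the assumption, and so does its linearisation along $\RB\times M$, because $\nu_1$ is an eigenvector and so $T_1e_1=(1-\eta\lambda_1)e_1$.
\item Therefore the cut-off interpolation between the two also maps $N$ into $N$.
\item On $N$ the interpolated map is a $C^1$-small perturbation of $(\eta,x,y_1)\mapsto(\eta,x,(1-\eta\lambda_1(x))y_1)$. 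It is thus a diffeomorphism of $N$ whose backward orbits grow at most like $(\rho')^{-n}$.
\end{itemize}

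\textbf{Comparison with the paper.} The paper runs the graph transform on graphs vanishing on $N$. That class is preserved by the transform only if $N\subseteq\GD(N)$, which is the same surjectivity. So once you make this repair, your route and the paper's coincide.

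\textbf{A minor point on the foliation.} $C^4$ dependence of the leaves on the base point is governed by Fenichel's bunching condition $\mu_1'\beta_2^j\beta_1^{-1}<1$ for $j\le 4$. This needs upper as well as lower bounds on the rates along $T\CC$, not just $\rho<(\rho')^4$. Your near-unit-circle bounds do supply it. Also, leaves should be characterised by shadowing at a rate strictly between $\rho$ and the centre rates, not at rate $\rho$ itself.
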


\begin{proof}
The proof amounts to a verification of the bunching assumptions required for a smooth centre manifold (\cite[Chapter 5]{hpsinvariant}) and stable foliation (\cite[Theorem 6]{fenichel1974asymptotic}) about a local region of the manifold $\RB\times M$ of fixed points.

For notational convenience, set $\alpha(\eta):=\eta\lambda_1|_F-2$. We prove the existence of $\CC$ first. For $x\in M$, denote $\mu_1(\eta,x):=\max_{2\leq j\leq q}|1-(\alpha(\eta)+2)\lambda_1|_F^{-1}\lambda_2(x)|$ and $\mu_2(\eta,x):=|1-(\alpha(\eta)+2)\lambda_1|_F^{-1}\lambda_1(x)|$. Then $\sigma_{\min}(D\GD(\eta,x)|_{\mathrm{span}\{\nu_1\}(x)}) = \mu_2(\eta,x)$ and $\sigma_{\max}(D\GD(\eta,x)|_{\mathrm{span}\{\nu_{2:q}(x)\}}) = \mu_1(\eta,x)$. In particular, $\mu_2(2/\lambda_1|_F,\bar{x}) = 1$ and $\mu_1(2/\lambda_1|_F,\bar{x})<1$ so that $\epsilon:=\max_{i=0,\dots,4}\{\mu_1(2/\lambda_1|_F,\bar{x})\min\{\mu_2(2/\lambda_1|_F,\bar{x},1\}^{-i}\}<1$. By continuity, there is an open neighbourhood of $(2/\lambda_1|_F,\bar{x})$ in $\RB\times\RB^p$ over which
\begin{align}
    \mu_1(\eta,x)\min\{\mu_2(\eta,x),1\}^{-i} < 1-\epsilon/2,\qquad \forall i=0,\dots,4
\end{align}
uniformly, from which the existence of $U$ and $\CC$ follows by a graph transform argument similar to that of \cite[Chapter 5]{hpsinvariant}, working in a space of graphs which vanish on $\mathrm{span}(\nu_1|_F)$.

To prove the existence of the foliation $\FC$, let $E^-$ be a $D\GD$-invariant subbundle of $T(\RB\times\RB^p)|_{\CC}$ such that $T\CC + E^- = T(\RB\times\RB^p)|_{\CC}$ and $E^-|_{(\RB\times M)\cap W} = \mathrm{span}\{\nu_{2:q}|_{(\RB\times M)\cap W}\}$; for $w\in\CC\cap (\GD^{-1})^{-1}W$, denote $\mu_1'(\eta,w):=\sigma_{\max}(D\GD(\eta,w)|_{E^-_{(\eta,w)}})$, $\beta_1(\eta,w):=\sigma_{\min}(D\GD(\eta,w)|_{T_{(\eta,w)}\CC})$ and $\beta_2(\eta,w):=\sigma_{\max}(D\GD(\eta,w)|_{T_{(\eta,w)}\CC})$. Since $\mu_1'(/\lambda_1|_F,\bar{x})<1$ and $\beta_1(2/\lambda_1|_F,\bar{x}) = \beta_2(2/\lambda_1|_F,\bar{x}) = 1$, setting $\epsilon':=\mu_1'(2/\lambda_1|_F,\bar{x})$, shrinking $W$ if necessary one has
\begin{align}
\mu_1'(\eta,w)\beta_2(\eta,w)^j\beta_1(\eta,w)^{-1}<1-\epsilon'/2
\end{align}
for all $j=0,\dots,4$ uniformly over $W\cap\CC$. These bunching conditions (see \cite[Theorem 6]{fenichel1974asymptotic}) suffice to guarantee smoothness of the foliation constructed by a further graph transform argument.
\end{proof}

With $\bar{x}$, $\CC$ and $\FC$ as in Lemma \ref{lem:invariantmanifoldandfoliation}, one can find a product neighbourhood $V^2:=V^2_{\eta}\times V^2_x\times V^2_{y_1}\times V^2_{y_{2:q}}$ of $(2/\lambda_1|_F,\bar{x},0,0)$ in $\RB\times M\times \RB\times\RB^{q-1}$ and $C^4$ function $h:V^2_{\eta}\times V^2_x\times V^2_{y_1}$ such that in the coordinates $\Phi_1$, $\CC$ may be realised as the graph
\begin{align}
    \Phi_1^{-1}(\CC) = \{(\eta,x,y_1,h(\eta,x,y_1)):(\eta,x,y_1)\in V^2_{\eta}\times V^2_x\times V^2_{y_1}\}
\end{align}
in some neighbourhood of $(2/\lambda_1|_F,\bar{x})$. The invariance and tangency conditions amount to the identities
\begin{align}\label{eq:h1}
    h(\eta,x,0) = 0,\quad Dh(\eta,x,0) = 0,\qquad \forall (\eta,x)\in V^2_{\eta}\times V^2_x.
\end{align} 
Furthermore, that $\CC$ contains $\mathrm{span}(\nu_1|_F)$ implies that $h(\eta,x,y_1) = 0$ for all $\eta\in V^2_{\eta},x\in V^2_x\cap F$ and $y_1\in V^2_{y_1}$; thus
\begin{align}\label{eq:h2}
    h(\eta,x,y_1) = O(|y_1|^2d_M(x,F))
\end{align}
uniformly over $V^2_{\eta}\times V^2_x\times V^2_{y_1}$ as $y_1\rightarrow 0$ and $x\rightarrow F$.

One can parametrise the foliation $\FC$ similarly using a $C^4$ vector field $\xi_x:V^2_{\eta}\times V^2_x\times V^2_{y_1}\times V^2_{y_{2:q}}\rightarrow TV^2_x$ and $C^4$ function $\xi_{y_1}:V^2_{\eta}\times V^2_x\times V^2_{y_1}\times V^2_{y_{2:q}}$ such that, in the coordinates $\Phi_1$, the leaf $L_{(\eta,x,y_1,h(\eta,x,y_1))}$ through $(\eta,x,y_1,h(\eta,x,y_1))\in \CC$ is given by
\begin{align}
    L&_{(\eta,x,y_1,h(\eta,x,y_1))} \\&= \{(\eta,\exp^M_x(\xi_x(\eta,x,y_1,y_{2:q})),y_1 + \xi_{y_1}(\eta,x,y_1,y_{2:q}),y_{2:q} + h(\eta,x,y_1)):y_{2:q}\in V^2_{y_{2:q}}\}.
\end{align}
The condition that $L_{(\eta,x,y_1,h(\eta,x,y_1))}$ pass through $(\eta,x,y_1,h(\eta,x,y_1))$ and the tangency condition for $\FC$ amount to
\begin{align}\label{eq:xi}
    \xi_x(\eta,x,y_1,0)= 0 &\qquad \xi_{y_1}(\eta,x,y_1,0)= 0\\ D_{y_{2:q}}\xi_x(\eta,x,0,0)= 0 & \qquad D_{y_{2:q}}\xi_{y_1}(\eta,x,0,0) = 0
\end{align}
for all $(\eta,x,y_1)\in V^2_{\eta}\times V^2_x\times V^2_{y_1}$. We thereby obtain a $C^4$ coordinate transformation $\Phi^2:V^2:=V^2_{\eta}\times V^2_x\times V^2_{y_1}\times V^2_{y_{2:q}}\rightarrow V^1$ defined by
\begin{align}
    \Phi^2&(\eta,x,y_1,y_{2:q})\\&:=(\eta,\exp^M_x(\xi_x(\eta,x,y_1,y_{2:q})),y_1 + \xi_{y_1}(\eta,x,y_1,y_{2:q}),y_{2:q} + h(\eta,x,y_1))
\end{align}
which is a foliated chart for $\FC$ with transversal $\CC$. Our next lemma expresses $\GD$ in these foliated chart coordinates. For notational convenience, we write
\begin{align}
    y_h(\eta,x,y_1):=(y_1,h(\eta,x,y_1))\in\RB^q,\qquad \forall (\eta,x,y_1)\in V^2_{\eta}\times V^2_x\times V^2_{y_1},
\end{align}
and note that
\begin{align}
    e_1^T T_1(\eta,x)e_1 = (1-\eta\lambda_1(x)),\qquad e_{2:q}^T T_1(\eta,x)e_{2:q} = (I_{q-1}-\eta\Lambda_{2:q}(x)),
\end{align}
where
\begin{align}
    \Lambda_{2:q}(x):=e_{2:q}^T\nabla^2\ell^{\perp}_{\nu}(x)e_{2:q}
\end{align}
and $e_1\in\RB^{q\times 1}$ and $e_{2:q}\in\RB^{q\times(q-1)}$ are the partial identity matrices such that $(e_1,e_{2:q})= I_q$.

\begin{proposition}
    On a sufficiently small product neighbourhood $U^2:=U^2_{\eta}\times U^2_x\times U^2_{y_1}\times U^2_{y_{2:q}}$ of $(2/\lambda_1|_F,\bar{x},0,0)$ in $\RB\times M\times\RB\times\RB^{q-1}$, one has
    \begin{align}
    (\Phi^1\circ\Phi^2)^{-1}\circ \GD\circ (\Phi^1\circ\Phi^2) = (\eta,\GD_x,\GD_{y_1},\GD_{y_{2:q}})
    \end{align}
    where $\GD_x,\GD_{y_1}$ are independent of $y_{2:q}$ and
    \begin{align}
    \GD_x(\eta,x,y_1) &= x-\frac{\eta y_1^2}{2}\nabla_M\lambda_1(x) + O(y_1^3d_M(x,F))\\\GD_{y_1}(\eta,x,y_1) &= (1-\eta\lambda_1(x))y_1+ e_1^T(T_2(\eta,x)[y_h^{\otimes 2}]+ T_3(\eta,x)[y_h^{\otimes 3}]) + O(y_1^4)\\\GD_{y_{2:q}}(\eta,x,y_1,y_{2:q})&= (I_{q-1}-\eta\Lambda_{2:q}(x))y_{2:q} + O(|y_1|\|y_{2:q}\|,\|y_{2:q}\|^2)
    \end{align}
    uniformly as $x\rightarrow F$ and $y = (y_1,y_{2:q})\rightarrow 0$.
\end{proposition}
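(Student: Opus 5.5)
The plan is to exploit the invariance properties of $\CC$ and $\FC$ from Lemma \ref{lem:invariantmanifoldandfoliation}, and then read off each component from the tubular-coordinate expansion of Proposition \ref{prop:tubularcoords}.

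\textbf{Decoupling of the $(x,y_1)$ dynamics.} Since $\Phi^2$ is a foliated chart with transversal $\CC$, the coordinate $(\eta,x,y_1)$ of a point is precisely the label of the leaf $L_{(\eta,x,y_1,h(\eta,x,y_1))}$ containing it. The leaf invariance $\GD(L_{(\eta,w)})\subset L_{\GD(\eta,w)}$ then says that the new $(x,y_1)$-coordinates depend only on the old leaf label. This gives independence of $\GD_x,\GD_{y_1}$ from $y_{2:q}$. It also shows that $\GD_x,\GD_{y_1}$ equal the restriction of $\GD$ to $\CC$, written in the graph coordinates $(\eta,x,y_1)\mapsto(\eta,x,y_1,h)$. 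Here local invariance $\GD(\CC)\cap U\subset\CC$ is used, after shrinking $U^2$ so that iterates stay in $U$.

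\textbf{The $x$ and $y_1$ components.} I would evaluate Proposition \ref{prop:tubularcoords} at $y=y_h=(y_1,h)$. Because of \eqref{eq:h2}, $\|y_{2:q}\|=\|h\|=O(y_1^2d_M(x,F))$. The error term $|y_1|\|y_{2:q}\|$ in the $x$-component is therefore $O(|y_1|^3d_M(x,F))$, which gives the stated $\GD_x$. For the $y_1$-component, take $e_1^T$ of $T_1[y_h]+T_2[y_h^{\otimes2}]+T_3[y_h^{\otimes3}]+O(\|y_h\|^4)$. For $e_1^TT_1y_h$, note that $\nabla^2\ell_\nu^\perp$ is diagonal in the eigenframe at first slot $e_1$, since $\nu_1$ is an exact eigenvector of $\nabla^2\ell|_M$. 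Hence $e_1^TT_1e_{2:q}=0$ and $e_1^TT_1y_h=(1-\eta\lambda_1(x))y_1$ exactly. The remaining error is $O(\|y_h\|^4)=O(y_1^4)$, since $h=O(y_1^2)$.

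\textbf{The $y_{2:q}$ component.} The step I expect to be the main obstacle is the $y_{2:q}$ component. Write a general point as $\Phi^2(\eta,x,y_1,y_{2:q})$, whose tubular coordinates are $(\exp_x(\xi_x),\,y_1+\xi_{y_1},\,y_{2:q}+h)$. Apply Proposition \ref{prop:tubularcoords}, then invert $\Phi^2$ at the image point, whose leaf label is $\GD_{x,y_1}(\eta,x,y_1)$. The new $y_{2:q}$ equals the tubular $y_{2:q}$-coordinate of the image minus $h(\GD_x,\GD_{y_1})$, up to the $\xi$ corrections.

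To control this I would Taylor expand in $y_{2:q}$ about $y_{2:q}=0$. At $y_{2:q}=0$ the new coordinate vanishes by invariance of $\CC$. The first derivative in $y_{2:q}$ at $(y_1,y_{2:q})=(0,0)$ is $e_{2:q}^TT_1e_{2:q}=I-\eta\Lambda_{2:q}(x)$. This uses \eqref{eq:xi} to kill the $D_{y_{2:q}}\xi$ contributions and $Dh(\eta,x,0)=0$. The variation of this derivative in $y_1$ is $O(|y_1|)$ by $C^2$ regularity of the composite map, since everything is $C^4$. The second-order remainder is $O(\|y_{2:q}\|^2)$. Together these give $(I-\eta\Lambda_{2:q})y_{2:q}+O(|y_1|\|y_{2:q}\|,\|y_{2:q}\|^2)$, uniformly on a small compact product neighbourhood.

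Uniformity in $\eta$ and as $x\to F$ follows from the continuity of all derivatives on a shrunken closed neighbourhood.
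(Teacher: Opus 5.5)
Your proposal is correct and follows essentially the same route as the paper. Independence of $\GD_x,\GD_{y_1}$ from $y_{2:q}$ comes from leaf invariance of $\FC$, and these components are the restriction of $\GD$ to $\CC$ evaluated through Proposition \ref{prop:tubularcoords} at $y_h$, with \eqref{eq:h2} absorbing the $|y_1|\|y_{2:q}\|$ error and the block-diagonal structure of $T_1$ giving the linear term. The $y_{2:q}$ component then follows from a first-order Taylor expansion in $y_{2:q}$, whose linear part comes from \eqref{eq:xi} and $Dh(\eta,x,0)=0$; your justification of its vanishing at $y_{2:q}=0$ by invariance of $\CC$ is what the paper's appeal to $\Phi^2(\eta,x,y_1,0)=0$ intends.
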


\begin{proof}
    That $\GD_x$ and $\GD_{y_1}$ are independent of $y_{2:q}$ follows from the invariance of the foliation $\FC$. Specifically, $(\eta,\GD_x,\GD_{y_1})$ is given by the restriction of $\GD$ to the invariant manifold $\CC$. Using Proposition \ref{prop:tubularcoords} one thus sees that
    \begin{align}
    \GD_x(\eta,x,y_1) &= x-\frac{\eta y_1^2}{2}\nabla_M\lambda_1(x) + O(|y_1|^3d_M(x,F),|y_1|\|h(\eta,x,y_1)\|)\\ \GD_{y_1}(\eta,x,y_1) &= e_1^T(T_1(\eta,x)[y_h] + T_2(\eta,x)[y_h^{\otimes 2}] + T_3(\eta,x)[y_h^{\otimes 3}]) + O(\|y_h\|^4)
    \end{align}
    from which the claimed updates follow from \eqref{eq:h2} and the fact that $T_1(\eta,x)$ preserves the decomposition $\RB^q = e_1\RB\oplus e_{2:q}\RB^{q-1}$.
    
    Concerning $\GD_{y_{2:q}}$, the identities \eqref{eq:xi} imply that $D_{y_{2:q}}\Phi^2_y(\eta,x,0,0) = e_{2:q}$, so that
    \begin{align}
    D_{y_{2:q}}\GD_{y_{2:q}}(\eta,x,0,0) = e_{2:q}^TT_1(\eta,x)e_{2:q} = I_{q-1}-\eta\Lambda_{2:q}(x).
    \end{align}
    Since $\Phi^2(\eta,x,y_1,0) = 0$ by \eqref{eq:xi}, it follows that
    \begin{align}
    \GD(\eta,x,y_1,y_{2:q}) = (I_{q-1}-\eta\Lambda_{2:q}(x))y_{2:q} + O(|y_1|\|y_{2:q}\|,\|y_{2:q}\|^2)
    \end{align}
    as claimed.
\end{proof}

Our final coordinate transformation simplifies the formula for $\GD_{y_1}$ to the form seen in Theorem \ref{thm:normalformmain}. For this, it is necessary to compute the second derivative of $h$ in the following lemma.

\begin{lemma}
    For any $(\eta,x)\in V^2_{\eta}\times V^2_x$, one has
    \begin{align}
    D^2_{y_1}h(\eta,x,0) = \big((2\lambda_1(x)-\eta\lambda_1(x)^2)I_{q-1}-e_{2:q}^T\nabla^2\ell_{\nu}^{\perp}(x)e_{2:q}\big)^{-1}\,e_{2:q}^T\nabla^3\ell_{\nu}^{\perp}(x)[e_1^{\odot 2}].
    \end{align}
\end{lemma}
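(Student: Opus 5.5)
The identity will come from the invariance equation for the centre manifold $\CC$, expanded to second order in $y_1$. In the tubular coordinates $\Phi_1$, $\CC$ is the graph $y_{2:q}=h(\eta,x,y_1)$. Local invariance $\GD(\CC)\cap U\subset\CC$ therefore says that the image of a point $(\eta,x,y_1,h(\eta,x,y_1))$ again lies on the graph:
\begin{align}
e_{2:q}^T\GD_y\big(\eta,x,y_h(\eta,x,y_1)\big) = h\big(\eta,\GD_x(\eta,x,y_h),\,e_1^T\GD_y(\eta,x,y_h)\big).
\end{align}
Here $\GD_x$ and $\GD_y$ are the tubular-coordinate maps of Proposition \ref{prop:tubularcoords}. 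Since $h$ is $C^4$ with $h(\eta,x,0)=0$ and $Dh(\eta,x,0)=0$ by \eqref{eq:h1}, we may write $h(\eta,x,y_1)=\tfrac12 H(\eta,x)y_1^2+O(|y_1|^3)$ with $H:=D^2_{y_1}h(\eta,x,0)\in\RB^{q-1}$. The plan is to match the $y_1^2$ coefficients on both sides.

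For the left-hand side, Proposition \ref{prop:tubularcoords} gives $\GD_y=T_1[y_h]+T_2[y_h^{\otimes2}]+O(\|y_h\|^3)$. Because $T_1(\eta,x)=I-\eta\nabla^2\ell^\perp_\nu(x)$ preserves the splitting $\RB^q=e_1\RB\oplus e_{2:q}\RB^{q-1}$ (as $\nu_1$ is an eigenvector), we get $e_{2:q}^TT_1[y_h]=(I_{q-1}-\eta\Lambda_{2:q})h=\tfrac12(I_{q-1}-\eta\Lambda_{2:q})Hy_1^2+O(|y_1|^3)$. Since $y_h=y_1e_1+O(y_1^2)$, the quadratic term is $e_{2:q}^TT_2[y_h^{\otimes2}]=-\tfrac{\eta}{2}e_{2:q}^T\nabla^3\ell^\perp_\nu[e_1^{\odot2}]y_1^2+O(|y_1|^3)$.

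For the right-hand side, $\GD_x(\eta,x,y_h)=x+O(y_1^2)$ and $e_1^T\GD_y=(1-\eta\lambda_1(x))y_1+O(y_1^2)$. The $x$-argument shift contributes only $O(y_1^2)\cdot O(y_1^2)$, because $h$ and its $x$-derivatives are $O(y_1^2)$, the latter by the $C^4$ regularity of $h$ together with $h(\cdot,\cdot,0)=0$ and $D_{y_1}h(\cdot,\cdot,0)=0$. Hence the right-hand side equals $\tfrac12 H(1-\eta\lambda_1)^2y_1^2+O(|y_1|^3)$.

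Equating the $y_1^2$ coefficients gives
\begin{align}
\big((1-\eta\lambda_1)^2I_{q-1}-I_{q-1}+\eta\Lambda_{2:q}\big)H = -\eta\, e_{2:q}^T\nabla^3\ell^\perp_\nu[e_1^{\odot2}].
\end{align}
Since $(1-\eta\lambda_1)^2-1=-\eta(2\lambda_1-\eta\lambda_1^2)$, dividing by $-\eta$ (legitimate because $\eta$ lies near $2/\lambda_1|_F>0$) yields $\big((2\lambda_1-\eta\lambda_1^2)I_{q-1}-\Lambda_{2:q}\big)H=e_{2:q}^T\nabla^3\ell^\perp_\nu[e_1^{\odot2}]$. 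Recalling $\Lambda_{2:q}=e_{2:q}^T\nabla^2\ell^\perp_\nu e_{2:q}$, this is the claimed formula once the matrix is shown to be invertible.

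The main point of care is this invertibility. Near $\eta=2/\lambda_1|_F$ and $x$ near $\bar x$, the scalar $2\lambda_1-\eta\lambda_1^2=\lambda_1(2-\eta\lambda_1)$ is close to $0$. The eigenvalues of $\Lambda_{2:q}$ are $\lambda_2,\dots,\lambda_q>0$, which stay bounded away from zero near $\bar x$ by Assumption \ref{ass:regularity} and the positivity of all nonzero eigenvalues. So the matrix is close to $-\Lambda_{2:q}$ and is invertible on a possibly shrunken neighbourhood; this is the same non-resonance that underlies the bunching condition $\mu_1<1$ in Lemma \ref{lem:invariantmanifoldandfoliation}. A secondary check is that the $O(\cdot)$ remainders are uniform in $(\eta,x)$, which follows from the $C^4$ regularity of $h$ and the uniform expansions in Proposition \ref{prop:tubularcoords}.
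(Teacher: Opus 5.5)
Your proof is correct and follows the paper's argument. Like the paper, you expand the centre-manifold invariance equation for $h$ to order $y_1^2$, using the block-diagonal structure of $T_1$ and the vanishing of $h$ and $D_{y_1}h$ at $y_1=0$, then equate coefficients. Your check that $(2\lambda_1-\eta\lambda_1^2)I_{q-1}-\Lambda_{2:q}$ is invertible near $(2/\lambda_1|_F,\bar x)$ is a point the paper leaves implicit.
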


\begin{proof}
    Since $\CC$ is invariant, the function $h$ must satisfy the invariance equation
    \begin{align}\label{eq:hinvariant}
    \GD_{y_{2:q}}(\eta,x,y_1,h(\eta,x,y_1)) = h\big(\eta,\GD_x(\eta,x,y_1,h(\eta,x,y_1)),\GD_{y_1}(\eta,x,y_1,h(\eta,x,y_1))\big)
    \end{align}
    for all $(\eta,x,y_1)\in V^2_{\eta}\times V^2_x\times V^2_{y_1}$. The result follows by expanding both sides of \eqref{eq:hinvariant} about $(\eta,x,0)$ and equating coefficients. Denote $H(\eta,x,y_1):=(y_1,h(\eta,x,y_1))$ and observe that $H(\eta,x,0) = 0$, $D_{y_1}H(\eta,x,0) = (1,D_{y_1}h(\eta,x,0))^T = (1,0)^T$ and $D_{y_1}^2H(x,0) = (0,D^2_{y_1}h(x,0))^T$. Then the right hand side has Taylor expansion
    \begin{align}
        \GD_{y_{2:q}}&(\eta,x,y_1,h(\eta,x,y_1))\\=&\GD_{y_{2:q}}(\eta,x,0,0) + D_{y_{1:q}}\GD_{y_{2:q}}(\eta,x,0,0)D_{y_1}H(\eta,x,0)y_1 +\\&+(1/2)D^2_{y_{1:q}}\GD_{y_{2:q}}(\eta,x,0,0)[D_{y_1}H(\eta,x,0),D_{y_1}H(\eta,x,0)] y_1^2+\\& + (1/2)D_{y_{1:q}}\GD_{2:q}(\eta,x,0,0)D^2_{y_1}H(\eta,x,0) y_1^2 + O( y_1^3)\\=&\frac{1}{2}\big(e_{2:q}^T(I_q-\eta\nabla^2\ell_{\nu}^{\perp}(x))e_{2:q}\,D^2_{y_1}h(\eta,x,0) - \eta\,e_{2:q}^T\nabla^3\ell_{\nu}^{\perp}(x)[e_1,e_1]\big) y_1^2 + O\big( y_1^3\big).
    \end{align}
    Since $\GD_x(\eta,x,0) = x$, $\GD_{y_1}(\eta,x,0) = 0$, using $h|_{V^2_{\eta}\times V^2_x\times\{0\}}\equiv 0$ and $Dh|_{V^2_{\eta}\times V^2_x\times\{0\}}\equiv 0$ one sees that the left hand side has the Taylor expansion
    \begin{align}
        h&\big(\eta,\GD_x(\eta,x,H(x,y_1)),\GD_{y_1}(\eta,x,H(x,y_1))\big)\\=&(1/2)D^2_xh(\eta,x,0)\big[D_{y_{1:q}}\GD_x(\eta,x,0)\,D_{y_1}H(\eta,x,0)[y_1]^{\odot 2}\big]+\\&+D^2_{x,y_1}h(\eta,x,0)\big[D_{y_{1:q}}\GD_x(\eta,x,0)\,D_{y_1}H(\eta,x,0)[y_1],D_{y_{1:q}}\GD_{y_1}(\eta,x,0)\,D_{y_1}H(\eta,x,0)[y_1]\big]+\\&+(1/2)D^2_{y_1}h(\eta,x,0)\big[D_{y_{1:q}}\GD_{y_1}(\eta,x,0)\,D_{y_1}H(\eta,x,0)[y_1]^{\odot 2}\big] + O\big( y_1^3\big)\\=&\frac{1}{2}(1-\eta\lambda_1(x))^2\,D^2_{y_1}h(\eta,x,0)\, y_1^2
    \end{align}
    since $D_{y_{1:q}}\GD_x(\eta,x,0) = 0$. Equating coefficients, therefore, one has
    \begin{align}
    \bigg(e_{2:q}^T(I_q-\eta\nabla^2\ell_{\nu}^{\perp}(x))e_{2:q}\,D^2_{z_1}h(\eta,x,0) - \eta\,e_{2:q}^T\nabla^3\ell_{\nu}^{\perp}(x)[e_1^{\odot 2}]\bigg) = (1-\eta\lambda_1(x))^2\,D^2_{z_1}h(\eta,x,0),
    \end{align}
    which may be rearranged to obtain the claimed formula.
\end{proof}

We now come to simplifying the dynamics along the centre manifold via another change of coordinates. It is this step which requires the solution of a singular PDE, and in which the proof of the corresponding result in \cite{macdonaldeos} is erroneous since it cites a classical algebraic argument for the \emph{isolated} $y_1$-system \cite[Theorem 4.3]{kuznetsov} (in which $x$ is assumed to be fixed) without accounting for the fact that the $y_1$-component of the system is actually \emph{coupled} to the $x$-component of the system. As we demonstrate below, properly accounting for this coupling is what necessitates the solution of a PDE.

\begin{theorem}\label{thm:normalformappendix}
    There is a product neighbourhoood $V^3_{\eta}\times V^3_x\times V^3_{y_1}\subset V^2_{\eta}\times V^2_x\times V^2_{y_1}$ of $(2/\lambda_1|_F,\bar{x})$ in $\RB\times M$ and $C^1$ functions $\psi_1,\psi_2:V^3_{\eta}\times V^3_x\rightarrow\RB$, with $(\eta/2)\psi_1(\eta,\cdot)|_F^2 = \alpha_{\eta}^{-1}$ from Assumption \ref{ass:quality}, such that the map $\Phi^3:V^3_{\eta}\times V^3_x\times V^3_{y_1}\times V^2_{y_{2:q}}\rightarrow V^2_{\eta}\times V^2_{x}\times V^2_{y_1}\times V^2_{y_{2:q}}$ defined by
    \begin{align}
    \Phi^3(\eta,x,y_1,y_{2:q}) = (\eta,x,\psi_1(\eta,x)y_1 + (1/2)\psi_2(\eta,x)y_1^2,y_{2:q})
    \end{align}
    satisfies $(\Phi^1\circ\Phi^2\circ\Phi^3)^{-1}\circ\GD\circ(\Phi^1\circ\Phi^2\circ\Phi^3) = (\eta,\GD_x,\GD_{y_1},\GD_{y_{2:q}})$ on some product neighbourhood $U^3_{\eta}\times U^3_x\times U^3_{y_1}\times U^3_{y_{2:q}}\subset V^3_{\eta}\times V^3_x\times V^3_{y_1}\times V^2_{y_{2:q}}$, where $\GD_x,\GD_{y_1}$ are independent of $y_{2:q}$ and
    \begin{align}
    \GD_x(\eta,x,y_1) &= x-\frac{\eta\psi_1(\eta,x)^2y_1^2}{2}\nabla_M\lambda_1(x) + O(y_1^3d_M(x,F))\\\GD_{y_1}(\eta,x,y_1) &= (1-\eta\lambda_1(x))y_1+ y_1^3 + O(y_1^4)\\\GD_{y_{2:q}}(\eta,x,y_1,y_{2:q})&= (I_{q-1}-\eta\Lambda_{2:q}(x))y_{2:q} + O(|y_1|\|y_{2:q}\|,\|y_{2:q}\|^2)
    \end{align}
    for all $(\eta,x,y_1,y_{2:q})\in U^3_{\eta}\times U^3_x\times U^3_{y_1}\times U^3_{y_{2:q}}$.
\end{theorem}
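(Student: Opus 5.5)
We look for $\psi_1,\psi_2$ by direct substitution and matching of Taylor coefficients in $y_1$. On the centre manifold, the preceding proposition gives a reduced system of the form
\[
y_1\mapsto a\,y_1+c_2\,y_1^2+c_3\,y_1^3+O(y_1^4),
\]
where $a(\eta,x):=1-\eta\lambda_1(x)$, $c_2:=e_1^TT_2(\eta,x)[e_1^{\otimes 2}]$, and $c_3$ collects $e_1^TT_3(\eta,x)[e_1^{\otimes 3}]$ together with the cross term from $T_2$ evaluated on $y_h$. That cross term involves $D^2_{y_1}h(\eta,x,0)$, which the last lemma computes. The $x$-update is $x\mapsto x-\frac{\eta}{2}y_1^2\nabla_M\lambda_1(x)+O(|y_1|^3d_M(x,F))$.

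Substitute $y_1=\psi_1(\eta,x)u+\frac12\psi_2(\eta,x)u^2$ and impose $u\mapsto au+u^3+O(u^4)$. The new point is that $\psi_1$ must be evaluated at the updated point $x'=\GD_x$, not at $x$. By $C^1$ regularity and the $x$-update,
\[
\psi_1(\eta,x')=\psi_1(\eta,x)-\tfrac{\eta}{2}\psi_1^2u^2\,D\psi_1[\nabla_M\lambda_1]+o(u^2).
\]
The $x$-remainder $O(|y_1|^3d_M(x,F))$ does not affect the $u^3$ coefficient. Also, the $x$-update in the new coordinates follows immediately, with $y_1^2$ replaced by $\psi_1^2u^2$.

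Comparing coefficients of $u^2$ gives $\frac12\psi_2a^2=\frac12a\psi_2+c_2\psi_1^2$. Since $a\approx-1$ near $(2/\lambda_1|_F,\bar x)$, the quantity $a^2-a\approx 2$ is bounded away from zero, so this is solved algebraically by $\psi_2=2c_2\psi_1^2/(a^2-a)$. This $\psi_2$ has the same regularity as $\psi_1$.

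Comparing coefficients of $u^3$ and dividing by $\psi_1$ (which will be bounded away from zero), we obtain, with $\phi:=\psi_1^2$ and $\kappa:=2c_2^2/(a^2-a)+c_3$, the first-order linear PDE
\[
\tfrac{\eta a}{4}\,\langle\nabla_M\phi,\nabla_M\lambda_1\rangle+\kappa\,\phi=1 .
\]
The first thing I would check is that, after inserting $D^2_{y_1}h$ and the values $a=1-\eta\lambda_1$, the coefficient $\kappa$ coincides with $\frac{\eta}{2}\alpha_\eta$. This is a routine but tedious computation using $A_\eta$. Granting it, restricting the PDE to $F$, where $\nabla_M\lambda_1=0$, forces $\phi|_F=\kappa^{-1}$, that is $(\eta/2)\psi_1^2|_F=\alpha_\eta^{-1}$. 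Assumption~\ref{ass:quality} makes this positive, so $\psi_1:=\sqrt\phi$ is well defined and $C^1$ near $F$ once $\phi$ is.

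The main obstacle is producing a $C^1$ solution of this PDE. The PDE is singular exactly along $F$, because the transport vector field $X:=\frac{\eta a}{4}\nabla_M\lambda_1$ vanishes there. By the Morse–Bott part of Assumption~\ref{ass:morsebott}, the linearisation of $X$ normal to $F$ is $\frac{\eta a}{4}\nabla^2_M\lambda_1|_{\nu^MF}$. Since $a<0$, this is uniformly negative definite, so $F$ is a normally hyperbolic attracting manifold of zeros of $X$.

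My plan is as follows.
\begin{itemize}
\item Write $\phi=\kappa^{-1}+\rho$, so that the PDE becomes $X\rho+\kappa\rho=-X(\kappa^{-1})$. The right-hand side vanishes to second order on $F$, because $D\alpha_\eta|_F=0$ and $X|_F=0$.
\item Use a Sternberg/Belitskii-type linearisation to straighten $X$ near $F$ in $C^1$ fashion, into fibred linear form $\dot z=B(s)z$ with $s\in F$ and $B$ negative definite. This is legitimate because the normal spectrum is non-resonant to first order. In these coordinates, solve along characteristics by the variation-of-constants formula
\[
\rho(s,z)=-\int_{-\infty}^{0}e^{\int_t^0\kappa}\,g\big(\Phi_t^X(s,z)\big)\,dt ,
\]
where $g:=-X(\kappa^{-1})$. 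Characteristics run backward away from $F$, and the integrand decays because $g=O(|z|^2)$ while $\kappa>0$.
\item Verify $C^1$ convergence by a bunching estimate of the form ``decay rate of $g$ along the flow beats the growth of $D\Phi_t^X$''. This is precisely where the condition $D\alpha_\eta|_F=0$ is needed; without it the solution would only be Hölder continuous.
\end{itemize}
The remaining work is routine bookkeeping:
\begin{itemize}
\item uniformity in $\eta$ near $2/\lambda_1|_F$;
\item choosing the neighbourhoods $V^3$ and $U^3$;
\item checking that $\Phi^3$ is a $C^1$ diffeomorphism, which holds because $\psi_1\neq0$;
\item checking that the $y_{2:q}$ component is untouched, since $\Phi^3$ acts trivially on $y_{2:q}$ and $\GD_x,\GD_{y_1}$ remain independent of it.
\end{itemize}
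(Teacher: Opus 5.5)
\textbf{There is a genuine gap, in the one hard step.}

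Your formal part agrees with the paper's proof. The $y_1^2$ coefficient gives the same $\psi_2$, and the $y_1^3$ coefficient gives the same first-order linear PDE, up to the normalisation of $\phi$. Once $D^2_{y_1}h$ is inserted, indeed $\kappa=\frac{\eta}{2}\alpha_\eta$, so $\phi|_F=\kappa^{-1}$ gives the stated normalisation.

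The gap is the $C^1$ solution of the singular PDE, and your representation formula does not work there.
\begin{itemize}
\item Since $F$ is attracting for $X$, the orbits $\Phi^X_t(s,z)$ with $t\to-\infty$ are pushed away from $F$ and leave the neighbourhood. Along them the weight $e^{\int_t^0\kappa}$ grows exponentially while $g$ is evaluated ever farther from $F$, so nothing decays and $\int_{-\infty}^0$ is not defined. Formally it would solve $X\rho-\kappa\rho=-g$, not $X\rho+\kappa\rho=g$.
\item Integrating in the correct direction, $\frac{d}{ds}\big(e^{\int_0^s\kappa}\rho\circ\Phi^X_s\big)=e^{\int_0^s\kappa}g\circ\Phi^X_s$ gives the candidate $\rho=-\int_0^\infty e^{\int_0^s\kappa}\,g\circ\Phi^X_s\,ds$. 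This converges only if $\kappa<2|\mu|$ for the weakest normal contraction rate $\mu$ of $X$. Differentiating it needs further bunching between $\kappa$ and the normal spectrum of $\nabla^2_M\lambda_1$.
\item Nothing in the assumptions provides this, and it fails in the paper's own example. For $L=2$ matrix factorisation at $\eta=2/\lambda_1|_F$ one has $\kappa=4/\sigma_1$, while $X$ contracts $V_1(w)$ at rate $1/\sigma_1$.
\end{itemize}
The structural reason is that, for $\kappa>0$, characteristics are dissipative towards $F$. Homogeneous solutions behave like $|z|^{\kappa/|\mu|}$, so there is a whole family of continuous solutions. Selecting a $C^1$ one by first subtracting Taylor terms runs into resonances $\kappa=\sum_i k_i|\mu_i|$, which can vary along $F$. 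Straightening $X$ alone does not remove this: in linear coordinates you still face $\langle B(s)z,\nabla_z\rho\rangle+\kappa\rho=g$.

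The missing idea, used in Theorem~\ref{thm:pdesolve}, is to linearise the \emph{augmented} system in which the unknown is itself a coordinate.
\begin{itemize}
\item First straighten the base with the smooth strong stable foliation, so that in the paper's notation the vector field is $F(p,u)\partial_u$ fibrewise.
\item Then subtract the zeroth-order solution $\phi_0(p)=-\beta(p,0)/\alpha(p,0)$, which this vector field annihilates. A function $h$ then solves the PDE iff its graph is invariant under $Y_p(u,v)=(F(p,u),\alpha(p,u)v+\gamma(p,u))$.
\item The linearisation $\mathrm{diag}(A(p),a(p))$ has spectrum of one sign, so the time $-1$ map is a contraction. Belitskii's theorem gives it a $C^1$ linearisation depending $C^1$ on $p$, with no nonresonance or bunching hypothesis.
\item Sternberg's averaging turns this into a conjugacy of flows. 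The pullback of the invariant subspace $\{v=0\}$ is then the $C^1$ graph of a solution.
\end{itemize}
The condition $D\alpha_\eta|_F=0$ enters by forcing $D_u\gamma(p,0)=0$. This makes $\{v=0\}$ invariant for the linear part regardless of resonances between $a(p)$ and $\mathrm{spec}\,A(p)$.
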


\begin{proof}
    We begin by assuming a general form for $\psi$ in $y_1$; it will later be shown that the higher order terms can be taken to be zero. Write
    \begin{align}
    \psi(\eta,x,y_1):=\psi_1(\eta,x)y_1 + \frac{1}{2}\psi_2(\eta,x) y_1^2 + \frac{1}{6}\psi_3(\eta,x) y_1^3 + O\big( y_1^4\big).
    \end{align}
    From the desired identity
    \begin{align}
    &\GD_{y_1}\big(\eta,x,\psi(\eta,x,y_1),h(x,\psi(\eta,x,y_1))\big)\nonumber\\& = \psi\big(\eta,x - \frac{\eta\psi(\eta,x,y_1)^2}{2}\nabla_M\lambda_1(x) + O\big( y_1^3\big),(1-\eta\lambda_1(x))y_1 +  y_1^3 + O\big( y_1^4\big)\big)
    \end{align}
    we derive expressions for $\psi_i(\eta,x)$, $i=1,2,3$, by Taylor-expanding both sides and equating coefficients. To expand the left hand side, first note that
    \begin{align}
        h(\eta,x,\psi(\eta,x,y_1))= \frac{1}{2}D^2_{y_1}h(\eta,x,0)\psi_1(\eta,x)^2 y_1^2+ O\big( y_1^3\big)
    \end{align}
    Then:
    \begin{align}
        \GD_{y_1}&\big(\eta,x,\psi(\eta,x,y_1),h(\eta,x,\psi(\eta,x,y_1))\big)\\=&e_1^TT_1(x)\big[e_1\psi(\eta,x,y_1) + e_{2:q}h(\eta,x,\psi(\eta,x,y_1))\big]\\&+e_1^TT_2(x)\big[\big(e_1\psi(\eta,x,y_1) + e_{2:q}h(\eta,x,\psi(\eta,x,y_1))\big)^{\otimes 2}\big]\\&+e_1^TT_3(x)\big[\big(e_1\psi(\eta,x,y_1) + e_{2:q}h(\eta,x,\psi(\eta,x,y_1))\big)^{\otimes 3}\big] + O\big( y_1^4\big)\\=&(1-\eta\lambda_1(x))\psi_1(\eta,x)y_1\\&+\frac{1}{2}\big[(1-\eta\lambda_1(x))\psi_2(\eta,x)-\eta\,\psi_1(\eta,x)^2\,e_1^T\nabla^3\ell_{\nu}^{\perp}(x)[e_1^{\otimes 2}]\big] y_1^2\\&+\frac{1}{6}\big[(1-\eta\lambda_1(x))\psi_3(\eta,x) - 3\eta\,\psi_1(\eta,x)^3\,e_1^T\nabla^3\ell^{\perp}_{\nu}(x)[e_1,e_{2:q}D^2_{y_1}h(\eta,x,0)]]\\&-3\eta\,\psi_1(\eta,x)\psi_2(\eta,x)e_1^T\nabla^3\ell^{\perp}_{\nu}(x)[e_1^{\otimes 2}]-\eta\psi_1(\eta,x)^3\,e_1^T\nabla^4\ell^{\perp}_{\nu}(x)[e_1^{\otimes 3}]\\&-6\eta\,\psi_1(\eta,x)^3 e_1^T\big(\nu^T\circ D_M\nu\circ (\nabla^3\ell^{\parallel}_{\nu}\otimes T_1)\big)(x)[e_1^{\otimes3}]\big] y_1^3 + O\big( y_1^4\big).
    \end{align}
    Note that the matrix-valued function $\nu^T\circ D_M\nu\circ [w,\cdot]$ is antisymmetric for any tangent field $w$ on $TM$ since $\nu$ is an orthonormal frame; hence
    \begin{align}
    e_1^T&\big(\nu ^T\circ D_M\nu \circ(\nabla^3\ell^{\parallel}_{\nu}\otimes T_1)\big)[e_1^{\otimes 3}]\\&=(1-\eta\lambda_1(x))e_1^T\big(\nu^T\circ D_M\nu\circ [\nabla^2\ell^{\parallel}_{\nu}[e_1^{\otimes 2}],e_1]\big) = 0.
    \end{align}
    Thus
    \begin{align}
        &\GD_{y_1}\big(\eta,x,\psi(\eta,x,y_1),h(\eta,x,\psi(\eta,x,y_1))\big)\\&=(1-\eta\lambda_1(x))\psi_1(\eta,x)y_1\\&+\frac{1}{2}\big[(1-\eta\lambda_1(x))\psi_2(\eta,x)-\eta\,\psi_1(\eta,x)^2\,e_1^T\nabla^3\ell_{\nu}^{\perp}(x)[e_1^{\otimes 2}]\big] y_1^2\\&+\frac{1}{6}\big[(1-\eta\lambda_1(x))\psi_3(\eta,x) - 3\eta\,\psi_1(\eta,x)^3\,e_1^T\nabla^3\ell^{\perp}_{\nu}(x)[e_1,e_{2:q}D^2_{y_1}h(\eta,x,0)]\\&-3\eta\,\psi_1(\eta,x)\psi_2(\eta,x)e_1^T\nabla^3\ell^{\perp}_{\nu}(x)[e_1^{\otimes 2}]-\eta\psi_1(\eta,x)^3\,e_1^T\nabla^4\ell^{\perp}_{\nu}(x)[e_1^{\otimes 3}]\big] y_1^3 + O\big( y_1^4\big).
    \end{align}
    On the other hand,
    \begin{align}
        \psi&\bigg(\eta,x-\eta\frac{\psi(\eta,x,y_1)^2}{2}\nabla_M\lambda_1(x) + O\big( y_1^3\big),(1-\eta\lambda_1(x))y_1 +  y_1^3 + O\big( y_1^4\big)\bigg)\\=&\psi_1(\eta,x)\big((1-\eta\lambda_1(x))y_1+  y_1^3\big) - \frac{\eta}{2}\langle\nabla_M\psi_1(\eta,x),\nabla_M\lambda_1(x)\rangle(1-\eta\lambda_1(x))\psi_1(\eta,x)^2 y_1^3\\&+\frac{1}{2}\psi_2(\eta,x)(1-\eta\lambda_1(x))^2 y_1^2 + \frac{1}{6}\psi_3(\eta,x)(1-\eta\lambda_1(x))^3 y_1^3\\=&(1-\eta\lambda_1(x))\psi_1(\eta,x)y_1\\&+\frac{1}{2}(1-\eta\lambda_1(x))^2\psi_2(\eta,x) y_1^2\\&+\frac{1}{6}\bigg[6\psi_1(\eta,x) -6\eta\langle\nabla_M\psi_1(\eta,x),\nabla_M\lambda_1(x)\rangle(1-\eta\lambda_1(x))\psi_1(\eta,x)^2 + \psi_3(\eta,x)(1-\eta\lambda_1(x))^3\bigg] y_1^3.
    \end{align}
    For notational convenience, let us from now on suppress function evaluation, with evaluation assumed from the above. Equating coefficients of $y_1$ gives the trivial equation $(1-\eta\lambda_1)\psi_1 = (1-\eta\lambda_1)\psi_1$. Equating coefficients of $ y_1^2$ gives
    \begin{align}
    (1-\eta\lambda_1)\psi_2 - \eta\psi_1^2\,\nabla^3\ell^{\perp}_{\nu}[e_1^{\otimes 3}] = (1-\eta\lambda_1)^2\psi_2,
    \end{align}
    from which it follows that
    \begin{align}
    \psi_2 = \lambda_1^{-1}(1-\eta\lambda_1)^{-1}\psi_1^2\,\nabla^3\ell^{\perp}_{\nu}[e_1^{\otimes 3}].
    \end{align}
    Substituting this and equating coefficients of $ y_1^3$ gives
    \begin{align}
    (1&-\eta\lambda_1)\psi_3-3\eta\psi_1^3\bigg(e_1^T\nabla^3\ell^{\perp}_{\nu}[e_1,e_{2:q}D^2_{y_1}h]+ \lambda_1^{-1}(1-\eta\lambda_1)^{-1}\big(\nabla^3\ell^{\perp}_{\nu}[e_1^{\otimes 2}]\big)^2\bigg) - \eta\psi_1^3e_1^T\nabla^4\ell^{\perp}_{\nu}[e_1^{\otimes 3}]\\&=6\psi_1-3\eta\langle\nabla_M\psi_1,\nabla_M\lambda_1\rangle(1-\eta\lambda_1)\psi_1^2 + \psi_3(1-\eta\lambda_1)^3,
    \end{align}
    which gives
    \begin{align}
        \psi_3&(1-\eta\lambda_1)(1-(1-\eta\lambda_1)^2) \\&= \psi_1\bigg(6-3\eta\langle\nabla_M\psi_1,\nabla_M\lambda_1\rangle(1-\eta\lambda_1)\psi_1\\&+3\eta\psi_1^2\bigg(e_1^T\nabla^3\ell^{\perp}_{\nu}[e_1,e_{2:q}D^2_{y_1}h]+ \lambda_1^{-1}(1-\eta\lambda_1)^{-1}\big(\nabla^3\ell^{\perp}_{\nu}[e_1^{\otimes 3}]\big)^2\bigg)\\&+\eta\psi_1^2e_1^T\nabla^4\ell^{\perp}_{\nu}[e_1^{\otimes 3}]\bigg).
    \end{align}
    Finally, substituting
    \begin{align}
    D^2_{y_1}h = ((2\lambda_1-\eta\lambda_1^2)I_{2:q}-e_{2:q}^T\nabla^2\ell^{\perp}_{\nu}e_{2:q})^{-1}e_{2:q}^T\nabla^3\ell^{\perp}_{\nu}[e_1^{\otimes2}],
    \end{align}
    one obtains
    \begin{align}
        \psi_3&(1-\eta\lambda_1)(1-(1-\eta\lambda_1)^2) \\&= \psi_1\bigg(6-3\eta\langle\nabla_M\psi_1,\nabla_M\lambda_1\rangle(1-\eta\lambda_1)\psi_1\\&+3\eta\psi_1^2\,\nabla^3\ell^{\perp}_1(x)\big[e_1,((2\lambda_1-\eta\lambda_1^2)I_{2:q}-\nabla^2\ell^{\perp}_{2:q})^{-1}\nabla^3\ell^{\perp}_{2:q}[e_1^{\odot2}] + \lambda_1^{-1}(1-\eta\lambda_1)^{-1}\nabla^3\ell^{\perp}_1[e_1^{\odot2}]\big]\\&+\eta\psi_1^2e_1^T\nabla^4\ell^{\perp}_{\nu}[e_1^{\otimes 3}]\bigg).
    \end{align}
    One sees then that it suffices for $\psi_1$ to solve the partial differential equation
    \begin{align}
        \langle\nabla_M\psi_1,\nabla_M\lambda_1\rangle\psi_1 = (1-\eta\lambda_1)^{-1}\bigg(\nabla^3\ell^{\perp}_1\big[e_1,A\nabla^3\ell^{\perp}_{\nu}[e_1^{\odot 2}]\big] + \frac{1}{3}\nabla^4\ell^{\perp}_1[e_1^{\odot 3}]\bigg)\psi_1^2 + \frac{2}{\eta(1-\eta\lambda_1)},
    \end{align}
    where
    \begin{align}
    A:=e_{2:q}((2\lambda_1-\eta\lambda_1^2)I_{2:q}-\nabla^2\ell^{\perp}_{2:q})^{-1}e_{2:q}^T + \lambda_1^{-1}(1-\eta\lambda_1)^{-1}e_1e_1^T,
    \end{align}
    in which case one can take $\psi_3$ (and all higher order terms) to be zero. Setting $\phi:=\frac{1}{2}\psi_1^2$, it thus suffices to solve the PDE
    \begin{align}
        \langle\nabla_M\phi,\nabla_M\lambda_1\rangle = \frac{2}{1-\eta\lambda_1}\bigg(\nabla^3\ell_1\big[e_1,A\nabla^3\ell^{\perp}_{\nu}[e_1^{\odot 2}]\big] + \frac{1}{3}\nabla^4\ell^{\perp}_1[e_1^{\odot 3}]\bigg)u + \frac{2}{\eta(1-\eta\lambda_1)}.
    \end{align}
    That this PDE admits a $C^1$ solution which is positive on some $V^3_{\eta}\times V^3_x$ is a consequence of Assumption \ref{ass:quality} and Theorem \ref{thm:pdesolve}. Note in particular that along $F$, the left side of this PDE is zero so that the solution $\phi = (1/2)\psi_1^2$ must satisfy
    \begin{align}
    \phi|_F = -\bigg(\eta\bigg(\nabla^3\ell_1[e_1,A\nabla^3\ell^{\perp}_{\nu}[e_1^{\odot 2}]] + \frac{1}{3}\nabla^4\ell_1^{\perp}[e_1^{\odot 3}]\bigg)\bigg)^{-1}.
    \end{align}
    Now, setting $\zeta_{\eta}:=(1/2)\eta\psi_1(\eta,\cdot)^2$, one sees that this function restricts to $F$ to yield
    \begin{align}
    \zeta_{\eta} =\eta\bigg(-\eta \bigg(\nabla^3\ell_1[e_1,A\nabla^3\ell^{\perp}_{\nu}[e_1^{\odot 2}]] + \frac{1}{3}\nabla^4\ell_1^{\perp}[e_1^{\odot 3}]\bigg)\bigg)^{-1} = \alpha_{\eta}^{-1}
    \end{align}
    as claimed.
\end{proof}

\section{Singular PDE}\label{sec:singularpde}

In this section, we consider a singular partial differential equation of the form
\begin{align}\label{eq:singularpde}
X\phi = \alpha \phi +\beta
\end{align}
on $\RB^l\times\RB^m\times\RB^n$, with $X$, $\alpha$ and $\beta$ all $C^{\infty}$. Writing $(\eta,z,u)$ for the coordinates in $\RB^l\times\RB^m\times\RB^n$, we assume that
\begin{align}
X(\eta,z,u) = f(\eta,z,u)\partial_z + g(\eta,z,u)\partial_u,
\end{align}
where
\begin{align}
f(\eta,z,u) = O(\|u\|^2),\qquad g(\eta,z,u) = A(\eta,z)u + O(\|u\|^2)
\end{align}
with $A(\eta,z)$ being a $C^{\infty}$ field of positive definite matrices. We also assume that
\begin{align}
\alpha(\eta,z,0)>0,\qquad D_u\alpha(\eta,z,0)=D_u\beta(\eta,z,0) = 0
\end{align}
for all $(\eta,z)$. The singularity of the PDE manifests in that $X$ is zero along $u=0$, making the classical method of characteristics difficult to apply.

PDE which are singular at a \emph{point} (corresponding to $l=m=0$) are well-studied and can be solved via classical series expansion techniques, introducing logarithms to account for resonances where the linear equations defining the series recursion cannot be inverted \cite{frobeniusmethod, tahara1, tahara2, singularpde}, and resulting in solutions of low regularity. These techniques do not appear to extend straightforwardly to our setting, where we have a \emph{manifold} of singularities and where resonances can vary along this manifold. We instead exploit the linearity of our PDE to use dynamical systems techniques in proving that \eqref{eq:singularpde} admits $C^1$ solutions.

\begin{theorem}\label{thm:pdesolve}
Any point $(\eta_0,z_0,0)$ admits a neighbourhood over which \eqref{eq:singularpde} admits a $C^1$ solution.
\end{theorem}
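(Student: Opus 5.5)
The plan is to use the dynamics of $X$ itself instead of power series, treating $\{u=0\}$ as a manifold of equilibria that is normally repelling (because $A(\eta,z)\succ0$). The first step fixes the value of the solution on the singular set. Along $u=0$ the vector field $X$ vanishes, so any solution must satisfy $\phi(\eta,z,0)=-\beta(\eta,z,0)/\alpha(\eta,z,0)$, which is well defined since $\alpha(\eta,z,0)>0$. Set $\phi_0(\eta,z,u):=-\beta(\eta,z,0)/\alpha(\eta,z,0)$ and look for $\phi=\phi_0+\psi$. Then $\psi$ must solve
\begin{align}
X\psi=\alpha\psi+r,\qquad r:=\alpha\phi_0+\beta-X\phi_0 .
\end{align}
From $D_u\alpha(\cdot,\cdot,0)=D_u\beta(\cdot,\cdot,0)=0$ we get $\alpha\phi_0+\beta=O(\|u\|^2)$. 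From $f=O(\|u\|^2)$ together with the fact that $\phi_0$ is independent of $u$, we get $X\phi_0=f\partial_z\phi_0=O(\|u\|^2)$. Hence $r=O(\|u\|^2)$, uniformly near $(\eta_0,z_0,0)$.

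The second step raises the order of vanishing of $r$. I would expand $\psi$ as a finite polynomial $\sum_{k=2}^{N}\psi_k$ in $u$, with coefficients depending on $(\eta,z)$, so that the residual becomes $O(\|u\|^{N+1})$. Here $N$ is chosen with $(N+1)\mu>\sup\alpha+2\|A\|$, where $\mu$ is a lower bound for the spectrum of $A$ near $(\eta_0,z_0)$. At degree $k$ the homological equation reads
\begin{align}
(\mathcal L_{A}-\alpha(\eta,z,0))\psi_k=\text{known terms},\qquad \mathcal L_A p:=Dp\,[A u],
\end{align}
posed on homogeneous degree-$k$ polynomials in $u$. The eigenvalues of $\mathcal L_A$ there are sums of $k$ eigenvalues of $A$.

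I expect this step to be the main obstacle. A resonance $\sum\lambda_{i_j}=\alpha$ can occur at isolated points or on subvarieties of $(\eta,z)$-space, and the solvability of the homological equations can change along $u=0$. To cope with this, I would first try a rescaling to avoid resonances altogether. If that fails, I would replace the polynomial ansatz by the ansatz $u$-polynomial times $\log\|u\|$ used in the point-singular theory; because $k\ge2$, the logarithmic terms remain $C^1$. Alternatively, I would invoke a $C^1$ linearisation of $X$ in the spirit of Belitskii and Sternberg, uniform in the parameters $(\eta,z)$. This would conjugate $X$ to $A(\eta,z)u\,\partial_u$ with $f\equiv0$, so that the problem splits into a parametrised family of equations, each singular at a single point, which can be handled fibrewise.

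The third step solves the remaining equation $X\tilde\psi=\alpha\tilde\psi+\tilde r$, where $\tilde r=O(\|u\|^{N+1})$, using characteristics of the backward flow. Let $\Phi_t$ denote the flow of $X$. For $t\le0$, the backward flow keeps a small neighbourhood invariant and contracts $u$ at rate at least $e^{-\mu|t|}$, while $z$ moves by at most $O(\|u\|^2)$ integrated over time. Set
\begin{align}
\tilde\psi(p):=\int_{-\infty}^{0}\exp\Big(\int_t^0\alpha(\Phi_s p)\,ds\Big)\,\tilde r(\Phi_t p)\,dt .
\end{align}
This integral converges because $\tilde r(\Phi_tp)=O(e^{-(N+1)\mu|t|})$, which beats the growth $e^{\sup\alpha\,|t|}$ of the exponential factor. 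Differentiating along the flow then shows that $\tilde\psi$ solves the equation. For $C^1$ regularity I would differentiate under the integral sign. The derivative $D\Phi_t$ grows at most like $e^{\|A\||t|}$ (up to small corrections), and the choice of $N$ above ensures that the differentiated integrand is still integrable, uniformly in $p$. Summing the three steps gives the $C^1$ solution $\phi=\phi_0+\sum_k\psi_k+\tilde\psi$ on a neighbourhood of $(\eta_0,z_0,0)$.
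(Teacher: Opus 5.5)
\paragraph{Assessment.} Your Steps 1 and 3 are sound, but Step 2 has a genuine gap at resonances.

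\paragraph{What works.} Your correction $\phi_0=-\beta(\cdot,0)/\alpha(\cdot,0)$ is exactly the paper's zeroth-order step. Once the residual vanishes to order $N+1$ with $(N+1)\mu>\sup\alpha$, the backward-flow integral does give a $C^1$ solution. Your growth bound on $D\Phi_t$ is even pessimistic, since $D\Phi_t$ stays bounded for $t\le 0$. So whenever $(\eta_0,z_0)$ is non-resonant (non-resonance then persists on a neighbourhood), you have a valid and more elementary alternative to the paper's argument.

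\paragraph{The gap in Step 2.} You flag this step as the main obstacle but do not close it. The operator $\mathcal L_{A(\eta,z)}-\alpha(\eta,z,0)$ on degree-$k$ polynomials is singular wherever $\langle m,\lambda(\eta,z)\rangle=\alpha(\eta,z,0)$ for some multi-index with $|m|=k$. For $2\le k\le\sup\alpha/\mu$, nothing in the hypotheses prevents this along a hypersurface of $(\eta,z)$-space. These are exactly the degrees you cannot skip, since Step 3 needs $(N+1)\mu>\sup\alpha$. None of your three fallbacks repairs this.
\begin{itemize}
\item \emph{Rescaling.} Resonances are invariant under all natural rescalings. A time reparametrisation $X\mapsto cX$ multiplies both $A$ and $\alpha$ by $c(\cdot,0)$. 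Smooth coordinate changes preserve $\mathrm{spec}\,A$. Writing $\phi=\rho\chi$ with $\rho$ smooth and nonvanishing leaves $\alpha(\cdot,0)$ unchanged, because $X\rho$ vanishes on $u=0$.
\item \emph{Logarithmic ansatz.} A $u^m\log\|u\|$ term handles a resonance at a single parameter value. At nearby non-resonant parameters, however, the polynomial coefficients carry the factor $(\langle m,\lambda\rangle-\alpha)^{-1}$ and blow up, so they are not even continuous in $(\eta,z)$. You would need interpolating terms such as $u^m(\|u\|^{\delta}-1)/\delta$ with $\delta\propto\alpha-\langle m,\lambda\rangle$, plus a separate proof of joint $C^1$ regularity. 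The paper notes that the point-singular theory does not provide this.
\item \emph{Linearising $X$ alone.} This removes $f$ and the nonlinear part of $g$, but leaves the resonance between $\mathrm{spec}\,A$ and $\alpha$ untouched. Moreover, after a merely $C^1$ conjugacy the coefficients $\alpha,\beta$ are only $C^1$, so neither series methods nor your Step 2 can be run fibrewise.
\end{itemize}

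\paragraph{The missing idea.} Linearise the \emph{augmented} system, treating the unknown as an extra coordinate.
\begin{itemize}
\item First straighten $X$ to $F(p,u)\partial_u$; the paper does this with the smooth strong stable foliation of $-X$.
\item Subtract $\phi_0$ and set $\gamma:=\beta+\alpha\phi_0$. A function $h$ solves $D_uh\,F=\alpha h+\gamma$ if and only if its graph is invariant under $Y_p(u,v)=(F(p,u),\alpha(p,u)v+\gamma(p,u))$.
\item The linear part of $Y_p$ at the origin is $\mathrm{diag}(A(p),\alpha(p,0))$, which has positive spectrum. Hence the time-$(-1)$ map of $Y_p$ is a contraction.
\item Contractions are $C^1$-linearisable with no non-resonance condition (Hartman, Belitskii), with $C^1$ dependence on $p$. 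Sternberg's averaging upgrades this to a linearisation of the flow.
\item The pull-back of the invariant subspace $\{v=0\}$ is then a $C^1$ invariant graph, which is the solution.
\end{itemize}
The resonances that block your Step 2 are exactly what the $C^1$ linearisation of a contraction absorbs.
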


\begin{proof}
The result essentially follows by reducing the PDE to an invariant graph problem, which is then solved by a classical linearisation argument \cite{belitskii1978}. We begin by changing to more convenient coordinates.

By the smooth strong stable foliation theorem \cite[Theorem 1.4]{dutilleul2018}, there is a neighbourhood of $(\eta_0,z_0,0)$ which is smoothly and invariantly foliated by the strong stable leaves of $-X$. Specifically, there is a smooth function $\xi(\eta,z,u) = z + O(\|u\|^2)$ whose level sets are the strong stable leaves of $-X$; consequently, $X\xi\equiv 0$. The invariant base coordinate $p(\eta,z,u) = (\eta,\xi(\eta,z,u))\in\RB^{l+m}$ is annihilated by $X$, so that in the coordinates $(\eta,z,u)\mapsto (p(\eta,z,u),u)$, the vector field $X$ takes the form
\begin{align}\label{eq:Xeqns}
X(p,u) = F(p,u)\partial_u,\qquad F(p,0) = 0,\quad D_uF(p,0) = A(p)
\end{align}
Thus the problem is reduced to that of solving a family of fibrewise PDE.

We now solve the PDE to zeroth order. Specifically, define
\begin{align}
a(p):=\alpha(p,0),\qquad \phi_0(p):=-\frac{\beta(p,0)}{\alpha(p,0)},\qquad \gamma(p,u):=\beta(p,u) +\alpha(p,u)\phi_0(p)
\end{align}
and note that
\begin{align}\label{eq:gammaeqns}
\gamma(p,0) = 0,\qquad D_u\gamma(p,0) = 0.
\end{align}
Since $X\phi_0\equiv 0$, a function $\phi:=\phi_0 + h$ solves the PDE if and only if $h$ solves
\begin{align}\label{eq:heqns}
D_uh(p,u)F(p,u) = \alpha(p,u)h(p,u) + \gamma(p,u),
\end{align}
if and only if the graph $\{(p,u,h(p,u)):(p,u)\text{ near $(\eta_0,x_0,0)$}\}$ of $h$ is invariant under the fibrewise vector field $Y_p$ on $\times\RB^n\times\RB$ defined by
\begin{align}
Y_p(u,v):=\big(F(p,u),\alpha(p,u)v + \gamma(p,u)).
\end{align}
By \eqref{eq:Xeqns} and \eqref{eq:gammaeqns}, the vector field $Y_p$ linearises to
\begin{align}
L_p:=DY_p(0,0) = \begin{pmatrix} A(p)& 0 \\ 0 & a(p)\end{pmatrix}
\end{align}
along $(u,v) = 0$. Since $a(p)>0$ and $A(p)\succ 0$ by hypothesis, all eigenvalues of $L_p$ are positive. Letting $\Theta^p_{s}$ denote the time $s$ flow of $Y$ on the fibre over $p$, denote $T_p:=\Theta^p_{-1}$. This $T_p$ is a contraction, and its linearisation $B_p:=DT_p = e^{-L_p}$ has all eigenvalues being strictly contained in the open interval $(0,1)$. The $C^1$-linearisation theorem of \cite[Theorem 2.5.7]{belitskii1978} then says that, shrinking the neighbourhood under consideration if necessary, there exists a $C^1$ family of local diffeomorphisms $H_p$ satisfying
\begin{align}
H_p\circ T_p= B_p\circ H_p,\qquad H_p(0) = 0,\qquad DH_p(0) = I_{n+1}.
\end{align}
We now apply Sternberg's averaging trick \cite{sternberg1957} to produce a linearisation of time-varying flow: the map
\begin{align}
K_p(u,v):=\int_{-1}^0e^{-tL_p}H_p(\Theta^p_t(u,v))\,dt
\end{align}
is easily verified to satisfy
\begin{align}
K_p(\Theta^p_s(u,v)) = e^{sL_p}K_p(u,v),
\end{align}
from which it follows that the smoothly-varying $B_p$-invariant subspaces $p\mapsto E_p:=\RB^n\times\{0\}$ are mapped to a $C^1$ family of $\Theta^p_s$-invariant submanifolds $W_p:=K_p^{-1}(E_p)$, which fit together into a $C^1$ submanifold $W$. The tangency condition $DK_p(0,0) = I_{n+1}$ then implies, by the implicit function theorem, that $W$ is the graph of some $C^1$ function $h$, which by invariance solves \eqref{eq:heqns} and thereby proves the result.
\end{proof}

\section{Convergence theorems}

In this section, we prove the convergence theorems stated in the main body of the paper. All three theorems require that one can choose invariant neighbourhoods of any point in $F$. This is taken up in Subsection \ref{subsec:invariantneighourhoodappendix}. Following this, we present our proofs of Theorems \ref{thm:subcriticalmain}, \ref{thm:criticalmain} and \ref{thm:supercriticalmain} in Subsections \ref{subsec:subcriticalappendix}, \ref{subsec:criticalappendix} and \ref{subsec:supercriticalappendix} respectively.

\subsection{Existence of invariant neighbourhoods}\label{subsec:invariantneighourhoodappendix}

The purpose of this subsection is to prove the existence of invariant sets in which the analysis may be conducted. The construction of such sets requires the following further transformation of the normal form using Fermi coordinates about the submanifold $F\subset M$.

\begin{lemma}\label{lem:fermicoordinates}
    Let $z_0\in F$ and consider a coordinate neighbourhood in which the normal form of Theorem \ref{thm:normalformappendix} is defined. Let $P_F$ denote the nearest-point projection from a neighbourhood of $F$ in $M$ to $F$ and denote $z = P_F(x)$ for $x$ in this neighbourhood. Then, under the Fermi coordinate change
    \begin{equation}
    u = \log_z(x),\qquad x= \exp^M_z(u)
    \end{equation}
    the normal form of Theorem \ref{thm:normalformappendix} becomes
    \begin{align}
    \GD_z(\eta,z,u,y_1) &= z + O(y_1^3\|u\|,y_1^2\|u\|^2),\\
    \GD_u(\eta,z,u,y_1) &= (I-\zeta(\eta,z)y_1^2\nabla^2\lambda_1(z))u + O(y_1^3\|u\|,y_1^2\|u\|^2),\\
    \GD_{y_1}(\eta,z,u,y_1) &= -y_1(1+(\eta\lambda_1|_F-2) + (\eta/2)D^2_M\lambda_1(z)[u,u] - y_1^2 + O(\|u\|^3,y_1^3)),\\
    \GD_{y_{2:q}}(\eta,z,u,y_1,y_{2:q}) &= (I-\eta\Lambda_{2:q}(z,u))y_{2:q} + O(y_1\|y_{2:q}\|,\|y_{2:q}\|^2).
    \end{align}
\end{lemma}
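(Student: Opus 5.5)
The plan is to substitute the Fermi parametrisation $x=\exp^M_z(u)$, with $z=P_F(x)\in F$ and $u\in\nu^M_zF$, into the normal form of Theorem \ref{thm:normalformappendix}, and Taylor-expand every $x$-dependent coefficient about $u=0$. The two key facts are that $\nabla_M\lambda_1|_F=0$ (points of $F$ are local minima of $\lambda_1$), and that by the Morse-Bott hypothesis of Assumption \ref{ass:morsebott} one has
\begin{align}
\nabla_M\lambda_1(\exp^M_z(u)) = \tau_{z\to x}\big(\nabla^2_M\lambda_1(z)u\big) + O(\|u\|^2),
\end{align}
where $\tau_{z\to x}$ is parallel transport along the normal geodesic. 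Also $d_M(x,F)=\|u\|$ in these coordinates. Since $\zeta$ is only $C^1$, I will write $\zeta(\eta,x)=\zeta(\eta,z)+O(\|u\|)$, and the resulting error $O(\|u\|)\cdot y_1^2\|u\|$ is absorbed into $O(y_1^2\|u\|^2)$.

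\textbf{Step 1 ($x$-update).} Write $x^+=\GD_x$. Then
\begin{align}
x^+=\exp_x\big(-\zeta(\eta,z)y_1^2\,\tau(\nabla^2_M\lambda_1(z)u)+O(y_1^2\|u\|^2,|y_1|^3\|u\|)\big).
\end{align}
Next compute $(z^+,u^+)=(P_F(x^+),\log_{z^+}x^+)$. Both $P_F$ and $\log$ are smooth near $F$, so a first-order expansion in the small displacement $\delta=x^+-x=O(y_1^2\|u\|)$ suffices. The Fermi chart has the following structure at $u=0$: its differential splits $T_xM$ into tangent-to-$F$ and normal-to-$F$ directions, with corrections $O(\|u\|)$. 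The leading part of $\delta$ is $\nabla^2_M\lambda_1(z)u$, which lies in $\nu^M_zF$ because the Hessian preserves the normal bundle, as $\nabla^2_M\lambda_1$ vanishes on $TF$ by constancy of $\lambda_1$ on $F$. Hence its component along $F$ is only $O(\|u\|)\cdot O(y_1^2\|u\|)$. This gives $z^+=z+O(y_1^3\|u\|,y_1^2\|u\|^2)$, and $u^+=u-\zeta y_1^2\nabla^2\lambda_1(z)u+O(\cdot)$, after transporting $u^+$ back to $\nu_zF$, where the error is of the same order. I expect this step to be the main obstacle. It requires careful bookkeeping of the second-order terms of the Fermi chart (the curvature of $F$ and the second fundamental form), so as to check that every cross term carries at least a factor $y_1^2\|u\|^2$ or $|y_1|^3\|u\|$. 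I would handle it via the Taylor expansion of $(P_F,\log)$ along $\delta$, together with the fact that $D P_F$ restricted to normal directions vanishes at $u=0$.

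\textbf{Step 2 ($y_1$-update).} Expand $\lambda_1(\exp_z u)=\lambda_1|_F+\tfrac12D^2_M\lambda_1(z)[u,u]+O(\|u\|^3)$. Here $\lambda_1|_F$ is constant near $z_0$ on the connected component, since $\nabla\lambda_1|_F=0$. Substituting this into $(1-\eta\lambda_1)y_1+y_1^3+O(y_1^4)$ and factoring out $-y_1$ gives the displayed form.

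\textbf{Step 3 ($y_{2:q}$-update).} This is immediate. I define $\Lambda_{2:q}(z,u):=\Lambda_{2:q}(\exp_zu)$, and the $y_{2:q}$-error terms are unchanged by the change of coordinates.

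Finally, I check that all $O$-terms remain uniform on a shrunken product neighbourhood of $z_0$, which follows from compactness of its closure.
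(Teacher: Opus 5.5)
Your proposal is correct and follows essentially the same route as the paper. Both arguments Taylor-expand $\nabla_M\lambda_1$ about $z=P_F(x)$ via parallel transport, take $\zeta$ to first order, use that $\nabla^2_M\lambda_1(z)u\in\nu^M_zF$ is annihilated by $DP_F(z)$ to control $z^+$, expand $\log_{z^+}$ to obtain $u^+$, and expand $\lambda_1$ to second order for the $y_1$-update. Your extra bookkeeping fills in details the paper leaves implicit: that $DP_F(x)$ applied to transported normal vectors is only $O(\|u\|)$, and the identification of $\nu_{z^+}F$ with $\nu_zF$.
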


\begin{proof}
Letting $\Pi_{z\rightarrow x}:T_zM\rightarrow T_xM$ be parallel transport, the Taylor expansions $\nabla_M\lambda_1(x) = \Pi_{z\rightarrow x}\nabla^2_M\lambda_1(z)[u] + O(\|u\|^2)$ and $\zeta(\eta,x) = \zeta(\eta,z) + O(\|u\|)$ about $z = P_F(x)$ give
\begin{align}\label{eq:GDtaylor}
\GD(\eta,x,y_1) = x-\zeta(\eta,z)y_1^2\Pi_{z\rightarrow x}\nabla^2_M\lambda_1(z)[u] + O(y_1^2\|u\|^2,y_1^3\|u\|).
\end{align}
Hence, since $DP_F(z) = P_{T_zF}$ annihilates vectors normal to $F$ and $\nabla^2_M\lambda_1(z)[u]$ is normal to $F$, Taylor expanding $P_F$ gives
\begin{align}
    \GD_z(\eta,z,u,y_1) = P_F(\GD_x(\eta,x,y_1)) = z + O(y_1^2\|u\|^2,y_1^3\|u\|).
\end{align}
On the other hand, again invoking \eqref{eq:GDtaylor} and using the Taylor expansion of the logarithm,
\begin{align}
    \GD_u(\eta,z,u,y_1) &= \log_{\GD_z(\eta,z,u,y_1)}(\GD_x(\eta,x,y_1))\\&= (I-\zeta(\eta,x)y_1^2\nabla^2_M\lambda_1(z))u + O(y_1^3\|u\|,y_1^2\|u\|^2).
\end{align}
Finally, the claimed update formula for $\GD_{y_1}(\eta,z,u,y_1)$ follows from the Taylor expansion $\lambda_1(x) = \lambda_1|_F + (1/2)D^2_M\lambda_1(z)[u,u] + O(\|u\|^3)$.
\end{proof}

We can now prove our invariant set lemma, which is key to all of our convergence theorems.

\begin{lemma}\label{lem:reduction1}
    Given $\bar{x}\in F$, let $U$ be a neighbourhood of $(2/\lambda_1|_F,\bar{x})$ on which the coordinates of Lemma \ref{lem:fermicoordinates} are defined, and let $L>0$ satisfy
    \begin{align}
    \nabla^2_M\lambda_1(z)\preceq L I_{\nu^MF}
    \end{align}
    uniformly over $z\in F\cap U$. Then there are $\Delta,K>0$ such that for all $\rho$ sufficiently small, the neighbourhood
    \begin{align}\label{eq:Vrho}
    V_{\rho,\Delta}:=\{(\eta,z,u,y):\eta\lambda_1|_F-2\leq \rho^2,\, d_F(z,\bar{x}) + K\|u\| + \|y_{2:q}\|\leq\rho,\, 0<|y_1|\leq \sqrt{2(1+L)}\rho\}
    \end{align}
    is invariant under $\GD$ and one has
    \begin{align}\label{eq:2:qbound}
        \|\GD_{y_{2:q}}(\eta,z,u,y)\|\leq (1-\Delta)\|y_{2:q}\|
    \end{align}
    for all $(\eta,z,u,y)\in V_{\rho,\Delta}$.
\end{lemma}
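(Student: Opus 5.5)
The plan is to verify invariance of each defining constraint of $V_{\rho,\Delta}$ separately, working in the Fermi-coordinate normal form of Lemma \ref{lem:fermicoordinates}. The constraint on $\eta$ is trivially preserved since $\GD$ fixes $\eta$. Throughout, $\rho$ is small, so on $V_{\rho,\Delta}$ we have $|\eta\lambda_1|_F-2|\lesssim\rho^2$ (taking also $\eta$ near $2/\lambda_1|_F$ from below within the coordinate domain), $\|u\|\le\rho/K$, $\|y_{2:q}\|\le\rho$ and $|y_1|\lesssim\rho$. All $O(\cdot)$ remainders from Lemma \ref{lem:fermicoordinates} are then bounded by a constant times the displayed quantities, uniformly on $U$.

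\textbf{Contraction in $y_{2:q}$.} At $\bar x$ and $\eta=2/\lambda_1(\bar x)$, the eigenvalues of $I-\eta\Lambda_{2:q}$ lie in $(-1,1)$, because $0<\lambda_j<\lambda_1$ for $j\ge2$ on $M\setminus S$. By continuity, $\|I-\eta\Lambda_{2:q}(z,u)\|\le 1-2\Delta$ on a neighbourhood, which defines $\Delta$. The remainder $O(|y_1|\|y_{2:q}\|+\|y_{2:q}\|^2)$ is at most $C\rho\|y_{2:q}\|\le\Delta\|y_{2:q}\|$ for $\rho$ small, and this gives \eqref{eq:2:qbound}.

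\textbf{Bounds on $u$, $z$ and $y_1$.} Since $\nabla^2_M\lambda_1|_{\nu^MF}\succ0$ (Assumption \ref{ass:morsebott}) and is $\preceq L$, and $\zeta y_1^2\lesssim\rho^2$ is small, the matrix $I-\zeta y_1^2\nabla^2\lambda_1$ has norm at most $1-c\,y_1^2$. The remainder is $O(|y_1|^3\|u\|+y_1^2\|u\|^2)\le c' y_1^2\rho\|u\|$, so $\|u_{t+1}\|\le(1-\tfrac c2 y_1^2)\|u_t\|$. For $z$, the increment is $O(y_1^2\|u\|(|y_1|+\|u\|))\le C\rho\,y_1^2\|u\|$. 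Choosing $K$ large (but independent of $\rho$) makes the decrease $K\tfrac c2 y_1^2\|u\|$ dominate $C\rho y_1^2\|u\|$, so $d_F(z,\bar x)+K\|u\|+\|y_{2:q}\|$ is nonincreasing and stays $\le\rho$. For $y_1$, the factor is $-(1+(\eta\lambda_1|_F-2)+\tfrac\eta2 D^2\lambda_1[u,u]-y_1^2+O(\|u\|^3,|y_1|^3))$. Two sides need checking. First, $y_1\neq0$ is preserved because this factor is close to $-1$. Second, $|y_1|$ stays below $\sqrt{2(1+L)}\rho$, as follows. With $\epsilon:=\eta\lambda_1|_F-2\le\rho^2$ and $\tfrac\eta2D^2\lambda_1[u,u]\le L\|u\|^2\le L\rho^2$ (using $\eta\approx 2/\lambda_1|_F$ and rescaling $L$ if necessary), the multiplier has modulus at most $1+(1+L)\rho^2-y_1^2+o(\rho^2)$. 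This is $\le1$ once $y_1^2\ge(1+L)\rho^2(1+o(1))$, so on the upper part of the $y_1$-range $|y_1|$ decreases. On the lower part, $|y_{1,t+1}|\le|y_1|(1+2(1+L)\rho^2)<\sqrt{2(1+L)}\rho$, because $|y_1|\le\sqrt{(1+L)}\rho(1+o(1))$ there. The factor $2$ in the bound provides the slack for this two-case argument.

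\textbf{Main obstacle.} The delicate step is making the $y_1$ bound uniform in the cubic remainder $O(|y_1|^3)$ and the $O(\|u\|^3)$ term, which are both $o(\rho^2)$ and hence absorbable. A further issue is that $\eta$ may lie below $2/\lambda_1|_F$. There $\epsilon<0$ only helps the upper bound, but the factor could approach $-1+|\epsilon|$ and the $z,u$ estimates need $\eta$ to stay in the coordinate domain, so I would restrict $\eta$ to the neighbourhood of Lemma \ref{lem:fermicoordinates}. A second subtlety is the ordering of constants: $\Delta$ and $L$ are fixed first, then $K$ from $c,C$, and finally $\rho$ small enough that every $o(1)$ term is absorbed.
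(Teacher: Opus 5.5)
Your proposal is correct and follows essentially the same route as the paper. You get $\Delta$ from the spectral gap of $I-\eta\Lambda_{2:q}$ and choose $K$ as the ratio of the $z$-drift constant to the $u$-contraction constant, so that $d_F(z,\bar{x})+K\|u\|+\|y_{2:q}\|$ is non-increasing. Where you bound $|y_1|$ by a two-case split, the paper instead uses monotonicity of $t\mapsto t\bigl(1+(1+L)\rho^2-\tfrac{1}{2}t^2\bigr)$ on $[0,\sqrt{2(1+L)}\rho]$.

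One small fix: rather than rescaling $L$, which changes the set whose invariance is claimed, absorb the factor $\tfrac{\eta}{2}$ in $\tfrac{\eta}{2}D^2_M\lambda_1(z)[u,u]$ by enlarging $K$, since $\|u\|\le\rho/K$.
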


\begin{proof}
Shrinking the coordinate neighbourhood $U$ and $\eta$ if necessary, since $\lambda_1$ is simple, there is $\Delta>0$ such that \eqref{eq:2:qbound} holds over the entire coordinate neighbourhood. By the update formulae of Lemma \ref{lem:fermicoordinates}, there are also constants $C_1,C_2>0$ such that
\begin{align}
    d_F(z,\GD_z(\eta,z,u,y_1))&\leq C_1(|y_1|+\|u\|)y_1^2\|u\|,\\
    \|\GD_u(\eta,z,u,y_1)\|&\leq (1-C_2y_1^2)\|u\|,\\
    |\GD_{y_1}(\eta,z,u,y_1)|&\leq |y_1|(1+(\eta\lambda_1|_F-2) + L\|u\|^2-(1/2)y_1^2)
\end{align}
uniformly over the neighbourhood. Set $K:=\max\{1,C_1/C_2\}$ and let $V_{\rho,\Delta}$ be defined as in \eqref{eq:Vrho}. For all $\rho$ sufficiently small one has $V_{\rho,\Delta}$ contained in $U$ and $(2^{1/2}(1+L)^{1/2}+1)\rho\leq 1$. These conditions, together with the update equations of Lemma \ref{lem:fermicoordinates} suffice to guarantee invariance of $V_{\rho,\Delta}$. Indeed, given $(\eta,z,u,y)\in V_{\rho,\Delta}$ one has
\begin{align}
d_F&(\GD_z(\eta,z,u,y_1),\bar{x}) + K\|\GD_u(\eta,z,u,y_1)\| + \|\GD_{y_{2:q}}(\eta,z,u,y)\|\\&\leq d_F(\GD_z(\eta,z,u,y_1),z) + d_F(z,\bar{x}) + K\|u\|(1-C_2y_1^2) + (1-\Delta)\|y_{2:q}\|\\&\leq y_1^2\|u\|(C_1(|y_1| + \|u\|) - KC_2) + d_F(z,\bar{x}) + K\|u\| + \|y_{2:q}\|\\&\leq d_F(z,\bar{x}) + K\|u\| + \|y_{2:q}\|\leq \rho,
\end{align}
where in the final line we have invoked $|y_1|+\|u\|\leq (2^{1/2}(1+L)^{1/2}+1)\rho\leq 1$ and $KC_2-C_1 = 0$, thus proving that the first estimate defining $V_{\rho,\Delta}$ is preserved by $\GD$. For the second estimate, fixing again $(\eta,z,u,y)\in V_{\rho,\Delta}$ and shrinking $\rho$ if necessary, observe that
\begin{align}
(1/2)|y_1|\leq |\GD_{y_1}(\eta,z,u,y_1)|\leq |y_1|(1+(1+L)\rho^2 - (1/2)y_1^2)
\end{align}
Thus $|y_1|>0$ is preserved. Moreover, the bounding function $t\mapsto t(1+(1+L)\rho^2-(1/2)t^2)$ is increasing on the interval $[0,2^{1/2}(1+L)^{1/2}\rho]$ with maximum value $2^{1/2}(1+L)^{1/2}\rho$ at $t = 2^{1/2}(1+L)^{1/2}\rho$, hence $|\GD_{y_1}(\eta,z,u,y_1)|\leq 2^{1/2}(1+L)^{1/2}\rho$, thus completing the proof that $V_{\rho,\Delta}$ is invariant.
\end{proof}

\subsection{Subcritical regime}\label{subsec:subcriticalappendix}

In this section, we prove Theorem \ref{thm:subcriticalmain}. With Lemma \ref{lem:reduction1} in hand, the result follows from a similar argument to that used in the proof of \cite[Theorem D.4]{macdonaldeos}. We thus first recall the following lemmas from \cite{macdonaldeos}, which will be used without change in this paper. We refer to \cite{macdonaldeos} for their proofs.

\begin{lemma}\cite[Lemma D.2]{macdonaldeos}\label{lem:contraction1}
    For $\alpha\in\RB$, define $f_{\alpha}:\RB\rightarrow\RB$ by
    \begin{align}
        f_{\alpha}(z):=-(1+\alpha)z+z^3+O(z^4).
    \end{align}
    For $\alpha_0,\alpha_1\in\RB$, consider the composite $f_{\alpha_1\alpha_0}:=f_{\alpha_1}\circ f_{\alpha_0}$. Then for all $\gamma>0$ sufficiently small and all $\alpha_0,\alpha_1\in[0,\gamma]$:
    \begin{enumerate}
        \item $f_{\alpha_1\alpha_0}$ is monotonically increasing on $[-2\sqrt{\gamma},2\sqrt{\gamma}]$.
        \item $f_{\alpha_1\alpha_0}$ admits the sole fixed points $0$,
        \begin{align}
            z_- = -\sqrt{\frac{\alpha_0+\alpha_1}{2}} + O(\alpha_0+\alpha_1),\qquad z_+ = \sqrt{\frac{\alpha_0+\alpha_1}{2}} + O(\alpha_0+\alpha_1)
        \end{align}
        in the interval $[-2\sqrt{\gamma},2\sqrt{\gamma}]$.
    \end{enumerate}
\end{lemma}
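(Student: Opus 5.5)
We prove the two claims by direct computation with the composite map. For $|z|\le 2\sqrt{\gamma}$ and $\alpha_0,\alpha_1\in[0,\gamma]$ we have
\begin{align}
f_{\alpha_1\alpha_0}(z) = (1+\alpha_0)(1+\alpha_1)z - \big((1+\alpha_1)+(1+\alpha_0)^3\big)z^3 + O(z^4),
\end{align}
where the $O(z^4)$ term is uniform in $\alpha_0,\alpha_1\in[0,\gamma]$ (this uses that the remainder in $f_\alpha$ is uniform in $\alpha$, as is implicit in the statement). Its derivative is therefore
\begin{align}
f_{\alpha_1\alpha_0}'(z) = (1+\alpha_0)(1+\alpha_1) - 3\big(2+O(\gamma)\big)z^2 + O(|z|^3).
\end{align}

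\textbf{Claim 1 (monotonicity).} On $[-2\sqrt\gamma,2\sqrt\gamma]$ the derivative is at least $1-24\gamma-O(\gamma^{3/2})$, which is positive for $\gamma$ small. Hence $f_{\alpha_1\alpha_0}$ is increasing there.

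\textbf{Claim 2 (fixed points).} Since $f_{\alpha_1\alpha_0}$ is odd up to $O(z^4)$, it is cleaner to divide the fixed-point equation by $z$. The equation $f_{\alpha_1\alpha_0}(z)=z$ becomes $z\,g(z)=0$, where
\begin{align}
g(z) = (\alpha_0+\alpha_1+\alpha_0\alpha_1) - (2+O(\gamma))z^2 + O(|z|^3).
\end{align}
One root is $z=0$. For the nonzero roots, write $s=\alpha_0+\alpha_1\le 2\gamma$.

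\emph{Existence.} At $z=\pm 2\sqrt\gamma$ we have $g<0$, since $s+\alpha_0\alpha_1\le 2\gamma+\gamma^2<8\gamma$. If $s>0$, then $g$ is positive near $0$, so the intermediate value theorem gives a root on each side.

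\emph{Uniqueness.} On $[\sqrt{s}/4,\,2\sqrt\gamma]$,
\begin{align}
g'(z) = -(4+O(\gamma))z + O(z^2) < 0,
\end{align}
so $g$ is strictly decreasing there. On $[0,\sqrt{s}/4]$ we have $g \ge s - s/4 - O(s^{3/2}) > 0$, so there are no roots in that range. The same holds on the negative side, giving exactly one root on each side.

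\emph{Asymptotics.} Set $z=\sqrt{s/2}\,(1+w)$ and substitute into $g=0$. This gives $w=O(\sqrt{s})$, because the $\alpha_0\alpha_1\le s^2$ term, the $O(\gamma)z^2$ term and the $O(|z|^3)$ term all contribute relative corrections of order at most $\sqrt{s}$. Hence
\begin{align}
z_\pm = \pm\sqrt{s/2} + O(s).
\end{align}

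\textbf{Main obstacle.} The $O(\gamma)z^2$ term is not automatically $O(s)$ relative, since $\gamma$ may be much larger than $s$. Refining the expansion, the coefficient of $z^3$ is $(1+\alpha_1)+(1+\alpha_0)^3 = 2 + O(s)$. Thus the correction is actually $O(s)z^2$, and the asymptotics hold as stated. When $s=0$, $g<0$ for all $z\ne0$ in the interval, so $z_\pm=0$, which is consistent with the formula.
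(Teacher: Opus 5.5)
The paper gives no proof of this lemma: it is quoted from \cite{macdonaldeos} and the proof is deferred there, so there is no in-text argument to compare with. Your direct computation is a correct self-contained proof. The expansion $f_{\alpha_1\alpha_0}(z)=(1+\alpha_0)(1+\alpha_1)z-\big((1+\alpha_1)+(1+\alpha_0)^3\big)z^3+O(z^4)$ is right; it matches the paper's own period-two expansion in the supercritical proof when $\alpha_0=\alpha_1$. The monotonicity, intermediate-value existence, uniqueness and asymptotic steps all go through.

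Two things you use silently should be stated explicitly.

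\emph{Differentiability of the remainder.} Both your lower bound on $f_{\alpha_1\alpha_0}'$ and your formula for $g'$ need the remainder $r_\alpha(z):=f_\alpha(z)+(1+\alpha)z-z^3$ to be differentiable with $r_\alpha'(z)=O(z^3)$, uniformly in $\alpha$. A bare $O(z^4)$ bound does not control derivatives, and without this Claim 1 could fail. The condition does hold in the application, because the normal form is $C^4$ in $y_1$. It also gives $\frac{d}{dz}\big(r(z)/z\big)=O(z^2)$, which is what your estimate for $g'$ actually uses.

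\emph{A priori size of the roots.} The substitution $z=\sqrt{s/2}\,(1+w)$ presupposes $z_\pm=O(\sqrt{s})$, i.e.\ that $w$ is bounded. Obtain this first from $g(z_\pm)=0$: it gives $\big(2-O(\sqrt{\gamma})\big)z_\pm^2\le s+\alpha_0\alpha_1$. Then the $O(|z|^3)$ term is $O(s^{3/2})$, and your observation that the cubic coefficient is $2+O(s)$ (since $0\le\alpha_i\le s$) completes the estimate $z_\pm=\pm\sqrt{s/2}+O(s)$.

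A minor simplification: the split at $\sqrt{s}/4$ in the uniqueness step is unnecessary. Since $g'(z)\le -z\big(4-O(\sqrt{\gamma})\big)<0$ on all of $(0,2\sqrt{\gamma}]$, $g$ is strictly decreasing on that whole interval, and likewise on the negative side.
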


\begin{lemma}\cite[Lemma D.3]{macdonaldeos}\label{lem:contraction2}
    Let $\{\lambda_t\}_{t\in\NB}$ be a monotonically decreasing sequence of numbers. Then for all $\eta>2/\lambda_0$ sufficiently small and all $z_0\in\RB$ sufficiently close to zero, the iterates
    \begin{align}
    z_{t+1}:=(1-\eta\lambda_t)z_t + z_t^3 + O(z_t^4)
    \end{align}
    satisfy
    \begin{align}
    \frac{|z_0|}{\sqrt{1+3z_0^2t}}\leq |z_t|\leq 2\sqrt{\eta\lambda_0-2}
    \end{align}
    for all $t\in\NB$ such that $\eta\lambda_t\geq 2$
\end{lemma}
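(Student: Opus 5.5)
Set $\alpha_t:=\eta\lambda_t-2$. Because $\{\lambda_t\}$ is decreasing and we only consider $t$ with $\eta\lambda_t\geq 2$, we have $0\leq\alpha_t\leq\alpha_0$. The recursion then reads
\begin{align}
z_{t+1}=-(1+\alpha_t)z_t+z_t^3+O(z_t^4),
\end{align}
so there is $C>0$ with
\begin{align}
|z_t|\big(1+\alpha_t-z_t^2-C|z_t|^3\big)\leq|z_{t+1}|\leq|z_t|\big(1+\alpha_0-z_t^2+C|z_t|^3\big)
\end{align}
for $|z_t|$ small. Both bounds will follow by induction on $t$ from these one-step inequalities. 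Note that the upper bound at $t=0$ forces $|z_0|\leq 2\sqrt{\alpha_0}$, so I read "sufficiently close to zero" as this condition. I also take $\alpha_0$ small enough that $C\cdot 2\sqrt{\alpha_0}\leq 1/2$ and the $O(z^4)$ expansion is valid on $[-2\sqrt{\alpha_0},2\sqrt{\alpha_0}]$.

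\emph{Upper bound.} I will show that $|z_t|\leq 2\sqrt{\alpha_0}$ is preserved by the map. Set $g(s):=s(1+\alpha_0-s^2+Cs^3)$ for $s\geq 0$.
\begin{itemize}
\item If $s\leq\sqrt{\alpha_0}$, then $g(s)\leq\sqrt{\alpha_0}(1+\alpha_0)\leq 2\sqrt{\alpha_0}$.
\item If $\sqrt{\alpha_0}\leq s\leq 2\sqrt{\alpha_0}$, then $-s^2+Cs^3\leq -s^2/2\leq-\alpha_0/2$. Hence the bracket is at most $1$ and $g(s)\leq s$.
\end{itemize}
In both cases $|z_{t+1}|\leq 2\sqrt{\alpha_0}$, which closes the induction.

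\emph{Lower bound.} Write $w_t:=z_t^2$. Since $\alpha_t\geq0$, the left-hand inequality gives $w_{t+1}\geq w_t(1-w_t-Cw_t^{3/2})^2\geq w_t(1-2w_t-C'w_t^{3/2})$. Inverting,
\begin{align}
\frac{1}{w_{t+1}}\leq\frac{1}{w_t}\big(1+2w_t+C''w_t^{3/2}\big)=\frac{1}{w_t}+2+C''w_t^{1/2}\leq\frac{1}{w_t}+3,
\end{align}
where the last step uses $w_t\leq 4\alpha_0$ (from the upper bound) and $\alpha_0$ small. Summing gives $1/w_t\leq 1/z_0^2+3t$, that is $|z_t|\geq|z_0|/\sqrt{1+3z_0^2t}$. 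This step also shows $z_t\neq 0$ whenever $z_0\neq0$, so the inversion is legitimate at every stage.

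The main obstacle is the upper bound. A naive comparison with the fixed points of the cubic (Lemma \ref{lem:contraction1}) is complicated by the $t$-dependence of $\alpha_t$. The case split above avoids this by using only $\alpha_t\leq\alpha_0$ and the fact that the cubic term dominates once $|z|\geq\sqrt{\alpha_0}$; the remaining care is in choosing constants so that the error term $C|z|^3$ is absorbed uniformly on $[0,2\sqrt{\alpha_0}]$.
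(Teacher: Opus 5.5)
The paper gives no proof of this lemma; it defers to \cite{macdonaldeos}, so your argument has to be judged on its own. Your overall structure is sound. You use two-sided one-step bounds, forward invariance of $\{|z|\le 2\sqrt{\alpha_0}\}$ via $\alpha_t\le\alpha_0$, and the telescoping estimate $1/z_{t+1}^2\le 1/z_t^2+3$ via $\alpha_t\ge 0$. Both sign conditions hold on the initial segment of times with $\eta\lambda_t\ge 2$, by monotonicity. The lower-bound half is correct as written, and reading ``$z_0$ sufficiently close to zero'' as $|z_0|\le 2\sqrt{\alpha_0}$ is forced by the case $t=0$.

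There is, however, a false step in the second case of your upper bound. From $-s^2+Cs^3\le -s^2/2\le -\alpha_0/2$ you only get $1+\alpha_0-s^2+Cs^3\le 1+\alpha_0/2$, not $\le 1$. Indeed, at $s=\sqrt{\alpha_0}$ the bracket equals $1+C\alpha_0^{3/2}>1$, so $g(s)\le s$ fails near the left end of that case. The conclusion $g(s)\le 2\sqrt{\alpha_0}$ is still true, and the repair is one line. Either option below works.
\begin{itemize}
\item Split at $\sqrt{2\alpha_0}$ instead. For $s\le\sqrt{2\alpha_0}$ you get $g(s)\le\sqrt{2\alpha_0}(1+\alpha_0)\le 2\sqrt{\alpha_0}$ once $\alpha_0\le\sqrt2-1$. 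For $s\ge\sqrt{2\alpha_0}$ you get $-s^2/2\le-\alpha_0$, so the bracket really is at most $1$.
\item Drop the case split. Use $g(s)\le s(1+\alpha_0-s^2/2)$ on all of $[0,2\sqrt{\alpha_0}]$, which is valid since $Cs\le 1/2$ there. The right-hand side is increasing on this interval for $\alpha_0<1/5$, and its value at the right endpoint is $2\sqrt{\alpha_0}(1-\alpha_0)$.
\end{itemize}
With either fix the proof is complete.
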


Our next lemma generalises the descent lemma \cite[Lemma D.1]{macdonaldeos} from the case where $\lambda_1$ is geodesically strongly convex and $F$ is a single point, to the more general case where $\lambda_1$ is Morse-Bott along the manifold $F$.

\begin{lemma}[Descent lemma for $x$-update on $V_{\rho,\Delta}$]\label{lem:morsebottdescent}
    Let $V_{\rho,\Delta}$ be as in Lemma \ref{lem:reduction1}. Then there is $C>0$ such that
    \begin{align}
    \lambda_1(\GD_x(\eta,x,y))-\lambda_1|_F\leq (1-Cy_1^2)\big(\lambda_1(x)-\lambda_1|_F\big)
    \end{align}
    for all $(\eta,x,y)\in V_{\rho,\Delta}$.
\end{lemma}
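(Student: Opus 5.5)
We work in the Fermi coordinates of Lemma \ref{lem:fermicoordinates}, writing $x=\exp^M_z(u)$ with $z=P_F(x)$ and $u\in\nu^M_zF$, and set $g(x):=\lambda_1(x)-\lambda_1|_F$. The plan has three steps: bracket $g$ above and below by $\|u\|^2$, show one step contracts $\|u\|^2$ by a factor $1-cy_1^2$ while barely moving $z$, and convert back to $g$.

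\textbf{Step 1: two-sided quadratic bounds.} By the Morse--Bott hypothesis of Assumption \ref{ass:morsebott}, shrinking the neighbourhood if necessary, there are constants $0<m\leq L$ such that
\begin{align}
m I_{\nu^MF}\preceq \nabla^2_M\lambda_1(z)\preceq L I_{\nu^MF}
\end{align}
uniformly over $z\in F\cap U$. Since $\lambda_1$ is constant on $F$ and $\nabla_M\lambda_1|_F=0$, a second-order Taylor expansion along the normal geodesic gives
\begin{align}
g(x)=\tfrac12\nabla^2_M\lambda_1(z)[u,u]+O(\|u\|^3).
\end{align}
Hence $g(x)\asymp\|u\|^2$ on $V_{\rho,\Delta}$ once $\rho$ is small.

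\textbf{Step 2: one-step contraction.} Let $(z',u')=(\GD_z,\GD_u)$. Expanding $g$ at the new point around $z'$ gives $g(\GD_x)=\tfrac12\nabla^2_M\lambda_1(z')[u',u']+O(\|u'\|^3)$. By Lemma \ref{lem:fermicoordinates}, $z'=z+O(y_1^3\|u\|,y_1^2\|u\|^2)$, so replacing $z'$ by $z$ in the Hessian costs only $O(y_1^2\|u\|^3)$. Also by Lemma \ref{lem:fermicoordinates},
\begin{align}
u'=\big(I-\zeta(\eta,z)y_1^2\nabla^2_M\lambda_1(z)\big)u+O(y_1^3\|u\|,y_1^2\|u\|^2).
\end{align}
Write $H:=\nabla^2_M\lambda_1(z)|_{\nu^M_zF}$ and $\zeta_0:=\inf\zeta>0$ (positive by Assumption \ref{ass:quality}, since $\zeta|_F=\alpha_\eta^{-1}$ and $\zeta$ is continuous). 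For small $y_1$, the operators $H$ and $I-\zeta y_1^2H$ commute and are simultaneously diagonalisable. This gives
\begin{align}
\tfrac12 H[(I-\zeta y_1^2H)u]^{\odot2}\leq(1-\zeta_0 m y_1^2)^2\,\tfrac12 H[u,u].
\end{align}
The cross terms between the main part of $u'$ and its error are $O(|y_1|^3\|u\|^2+y_1^2\|u\|^3)$, and the cubic remainder $O(\|u'\|^3)$ is handled the same way.

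\textbf{Step 3: absorb errors and conclude.} Collecting terms yields
\begin{align}
g(\GD_x)\leq(1-2\zeta_0 m y_1^2)\,g(x)+O\big(y_1^2\|u\|^2(|y_1|+\|u\|)\big)+O(\|u\|^3).
\end{align}
The error terms carrying a factor $y_1^2$ are at most $y_1^2\rho\,g(x)$ by Step 1, since $|y_1|,\|u\|\lesssim\rho$ on $V_{\rho,\Delta}$; choosing $\rho$ small absorbs them into half the contraction, giving $C=\zeta_0 m$.

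\textbf{Main obstacle: the bare $O(\|u\|^3)$ remainders.} These come from expanding $g$ before and after the step, and they lack a $y_1^2$ factor, so they cannot be absorbed as stated. I would avoid them by never expanding $g$ twice. Instead, apply Taylor's theorem to $g$ along the short curve $x\mapsto\GD_x$ itself. This curve has displacement $\delta=-\zeta y_1^2\nabla_M\lambda_1(x)+O(|y_1|^3\|u\|)$, so
\begin{align}
g(\GD_x)=g(x)-\zeta y_1^2\|\nabla_M\lambda_1(x)\|^2+O(|y_1|^3\|u\|^2)+O(\|\delta\|^2).
\end{align}
Then use the Morse--Bott gradient lower bound $\|\nabla_M\lambda_1(x)\|^2\geq c\,g(x)$, which follows from Step 1 because $\nabla_M\lambda_1(x)$ equals $H u$ transported to $x$, up to $O(\|u\|^2)$. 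Now every error term carries a factor $y_1^2$ times $g(x)$, and the absorption of Step 3 goes through. This Polyak--Łojasiewicz-type route, replacing strong geodesic convexity by the Morse--Bott condition, is the version I would actually write down.
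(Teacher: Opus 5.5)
Your final route is correct and is essentially the paper's proof: Taylor-expand $\lambda_1$ about $x$ along the displacement $-\zeta y_1^2\nabla_M\lambda_1(x)+O(|y_1|^3d_M(x,F))$, then use the Morse--Bott consequence $\|\nabla_M\lambda_1(x)\|^2\asymp\lambda_1(x)-\lambda_1|_F\asymp d_M(x,F)^2$ together with $\zeta\geq\zeta_0>0$, so that every error term carries a factor $y_1^2$. The Fermi-coordinate Steps 2--3 are unnecessary and, as you noticed, incomplete as written, so present only the Polyak--Łojasiewicz-type argument.
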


\begin{proof}
    That $\lambda_1$ is Morse-Bott implies that $\|\nabla_M\lambda_1(x)\|^2=\Theta(\lambda_1(x)-\lambda_1|_F) = \Theta(d_M(x,F)^2)$ as $x\rightarrow F$. There is then $\delta(y_1,x) = O(|y_1|^3d_M(x,F))\in T_xM$ such that
    \begin{align}
    \GD_x(\eta,x,y) = \exp_x\big(-\zeta(\eta,x)y_1^2\nabla_M\lambda_1(x) + \delta(y_1,x)\big).
    \end{align}
    There is then a constant $C>0$ such that, shrinking $\rho$ if necessary and using a covariant Taylor expansion for $\lambda_1$ around $x$, one has
    \begin{align}
    \lambda_1(\GD_x(\eta,x,y))-\lambda_1|_F&= \lambda_1(x)-\zeta(\eta,x)y_1^2\|\nabla_M\lambda_1(x)\|^2 + O(|y_1|^3d_M(x,F)^2)\\&\leq (1-Cy_1^2)(\lambda_1(x)-\lambda_1|_F)
    \end{align}
    for all $(\eta,x,y)\in V_{\rho,\Delta}$.
\end{proof}

Finally, we can prove our subcritical convergence theorem, which follows from a similar argument to that of \cite[Theorem D.4]{macdonaldeos}.

\begin{theorem}\label{thm:subcritappendix}
    Let $V_{\rho,\Delta}$ be an invariant set as in Lemma \ref{lem:reduction1}, and denote $V^-_{\rho,\Delta}:=V_{\rho,\Delta}\cap\{\eta\lambda_1|_F-2<0\}$. Then, shrinking $\rho$ if necessary, for any $(\eta,x,y)\in V^-_{\rho,\Delta}$ there is
    \begin{align}
    T = O\bigg( y_1^{-2}\bigg(\frac{\lambda_1(x)-\lambda_1|_F}{2/\eta-\lambda_1|_F}\bigg)^{\gamma}\bigg)
    \end{align}
    such that $\eta <2/\lambda_1(x_t)$ for all $t\geq T$, following which there is $\beta<1$ such that the iterates $(x_t,y_t)$ converge with rate $O(\beta^t)$ to a global minimum $(x_{\infty},0)$ of $\ell$. If, moreover, $d_M(x_0,F)>0$, this minimum is suboptimally flat: 
    \begin{align}
    \lambda_1(x_{\infty})-\lambda_1|_F\geq \exp\big(-O(y_{1,T}^2(1-\beta^2)^{-1})\big)(\lambda_1(x_T)-\lambda_1|_F).
    \end{align}
\end{theorem}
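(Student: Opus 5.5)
We argue in two phases, following \cite[Theorem D.4]{macdonaldeos}: a transient phase in which $\lambda_1(x_t)$ descends below $2/\eta$, and an exponential phase after that. Throughout we work in the invariant set $V^-_{\rho,\Delta}$ of Lemma \ref{lem:reduction1}. There the $y_{2:q}$-coordinate contracts geometrically by \eqref{eq:2:qbound}, and $(x,y_1)$ evolve autonomously by Theorem \ref{thm:normalformappendix}, so the analysis reduces to the coupled $(x_t,y_{1,t})$ system. Write $\delta_t:=\lambda_1(x_t)-\lambda_1|_F\geq 0$ and $\epsilon:=2/\eta-\lambda_1|_F>0$.

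\textbf{Transient phase.} While $\eta\lambda_1(x_t)\geq 2$, i.e. $\delta_t\geq\epsilon$, we apply Lemma \ref{lem:contraction2} with the decreasing sequence $\lambda_t:=\lambda_1(x_t)$. Monotonicity of $\lambda_t$ comes from Lemma \ref{lem:morsebottdescent}. This gives the lower bound
\[
y_{1,t}^2\geq y_{1,0}^2/(1+3y_{1,0}^2t).
\]
Feeding this into Lemma \ref{lem:morsebottdescent} yields
\[
\delta_t\leq \prod_{s<t}(1-Cy_{1,s}^2)\,\delta_0\leq \exp\Big(-C\sum_{s<t}\frac{y_{1,0}^2}{1+3y_{1,0}^2 s}\Big)\delta_0\lesssim (1+3y_{1,0}^2t)^{-C/3}\delta_0 .
\]
Requiring $\delta_T<\epsilon$ gives $T=O\big(y_{1,0}^{-2}(\delta_0/\epsilon)^{\gamma}\big)$ with $\gamma=3/C$. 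This is the claimed bound.

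\textbf{Exponential phase.} For $t\geq T$, Lemma \ref{lem:morsebottdescent} keeps $\delta_t$ nonincreasing, so $\eta\lambda_1(x_t)\leq\eta\lambda_1(x_T)<2$. Hence $|1-\eta\lambda_1(x_t)|\leq \beta_0<1$. The term $y_1^3$ is small relative to the gap because $|y_1|\lesssim\rho$, with $\rho$ shrunk so that $\rho^2\ll 1-\beta_0$. So $|y_{1,t+1}|\leq\beta|y_{1,t}|$ for some $\beta<1$, and $y_{2:q}$ contracts at rate $1-\Delta$.

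Consequently the $x$-steps are summable, with $d_M(x_{t+1},x_t)=O(y_{1,t}^2)=O(\beta^{2(t-T)})$. This gives a limit $x_\infty$ with $d_M(x_t,x_\infty)=O(\beta^{2t})$. Since $y_t\to 0$, the point $(x_\infty,0)$ is a global minimum of $\ell$.

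\textbf{Suboptimal flatness.} We need the reverse inequality $\delta_{t+1}\geq(1-C'y_{1,t}^2)\delta_t$. This follows from the same covariant Taylor expansion used in Lemma \ref{lem:morsebottdescent}, together with $\|\nabla_M\lambda_1\|^2=\Theta(\delta)$ and the $O(|y_1|^3 d_M(x,F)^2)$ error. Taking products and logarithms gives
\[
\delta_\infty\geq \exp\Big(-O\big(\textstyle\sum_{t\geq T}y_{1,t}^2\big)\Big)\delta_T .
\]
Bounding $\sum_{t\geq T}y_{1,t}^2\leq y_{1,T}^2(1-\beta^2)^{-1}$ then yields the claim, and $\delta_T>0$ holds whenever $d_M(x_0,F)>0$, again by the reverse inequality. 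The reverse inequality also needs $C'y_1^2<1$, which small $\rho$ ensures.

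\textbf{Main obstacle.} I expect the transient phase to be the delicate step. Lemma \ref{lem:contraction2} is stated for $\eta>2/\lambda_0$ near threshold with exactly cubic normal form, and we must check that the $x$-dependence of the linear coefficient enters only through $\lambda_t$. We must also check that $y_{1,t}$ stays in the regime where the lemma applies, which the upper bound $2\sqrt{\eta\lambda_0-2}$ together with invariance of $V_{\rho,\Delta}$ should guarantee.
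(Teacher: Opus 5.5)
Apart from one step, your argument is the paper's proof. It uses the same combination of Lemma \ref{lem:contraction2} and Lemma \ref{lem:morsebottdescent} with a log-sum to get $\delta_t\lesssim(1+3y_{1,0}^2t)^{-c}\delta_0$ and the bound on $T$. It uses the same summability of the $x$-steps after $T$. It uses the same reverse inequality $\delta_{t+1}\geq(1-C'y_{1,t}^2)\delta_t$ for the flatness bound.

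The step that fails as written is the contraction of $y_1$ after time $T$. You absorb the cubic term into the gap $1-\beta_0=2-\eta\lambda_1(x_T)$ by shrinking $\rho$ until $\rho^2\ll 1-\beta_0$.

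That gap depends on the trajectory. It is also at most $2-\eta\lambda_1|_F$, which tends to zero as $\eta\uparrow 2/\lambda_1|_F$. The theorem fixes a single $\rho$ for every point of $V^-_{\rho,\Delta}$, all such $\eta$ included, so your choice of $\rho$ is not available. If the cubic term were only controlled in absolute value, it could push the multiplier $|1-\eta\lambda_1(x_t)|+O(y_1^2)$ above $1$.

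The missing observation, which the paper uses, is that the cubic term has the stabilising sign. One has $\GD_{y_1}=-y_1\big(\eta\lambda_1(x)-1-y_1^2+O(|y_1|^3)\big)$, and the bracket is close to $1$. Hence
\[
|y_{1,t+1}|\le\big(\eta\lambda_1(x_t)-1\big)|y_{1,t}|\le\big(\eta\lambda_1(x_T)-1\big)|y_{1,t}|
\]
as soon as $\rho$ is small enough that $y_1^2$ dominates the $O(|y_1|^3)$ remainder. This condition is independent of $\eta$ and of the initial point. The second inequality uses that $\lambda_1(x_t)$ is nonincreasing. Taking $\beta:=\eta\lambda_1(x_T)-1<1$, the rest of your argument goes through unchanged.
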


\begin{proof}
    For notational convenience, denote
    \begin{align}
        L(x):=\lambda_1(x)-\lambda_1|_F,\qquad L_t:=L(x_t).
    \end{align}
    By Lemma \ref{lem:reduction1}, after shrinking $\rho$ if necessary, the iterates remain in $V^-_{\rho,\Delta}$ and
    \begin{align}
        \|y_{2:q,t}\|\leq (1-\Delta)^t\|y_{2:q}\|.
    \end{align}
    Thus the $y_{2:q}$ variables decay exponentially and, in particular, do not affect the estimates below.

    We now prove that $\eta\lambda_1(x_t)<2$ in finite time. If $\eta<2/\lambda_1(x)$, there is nothing to prove, so suppose that $\eta\lambda_1(x)\geq 2$. By Lemma \ref{lem:morsebottdescent}, there is $c_0>0$ such that
    \begin{align}
        L_{t+1}\leq (1-c_0y_{1,t}^2)L_t
    \end{align}
    for all $t$, so that $\{\lambda_1(x_t)\}_{t\in\NB}$ is monotonically decreasing. Hence Lemma \ref{lem:contraction2} applies to the $y_1$-iterates, yielding
    \begin{align}
        |y_{1,t}|\geq \frac{|y_1|}{\sqrt{1+3y_1^2t}}
    \end{align}
    for all $t$ such that $\eta\lambda_1(x_t)\geq 2$. Therefore, as long as $\eta\lambda_1(x_t)\geq 2$,
    \begin{align}
        L_{t+1}
        &\leq \bigg(1-\frac{c_0y_1^2}{1+3y_1^2t}\bigg)L_t.
    \end{align}
    Taking logarithms, using $\log(1-z)\leq -z$, and summing gives
    \begin{align}
        \log L_t-\log L_0
        &\leq -c_0\sum_{s=0}^{t-1}\frac{y_1^2}{1+3y_1^2s}  \\
        &\leq -c_1\log(1+3y_1^2t)
    \end{align}
    for some $c_1>0$. Thus
    \begin{align}
        L_t\leq L_0(1+3y_1^2t)^{-c_1}.
    \end{align}
    Since $\eta<2/\lambda_1|_F$, the quantity
    \begin{align}
        \epsilon_{\eta}:=2/\eta-\lambda_1|_F
    \end{align}
    is positive. Consequently, if
    \begin{align}
        t\geq C y_1^{-2}\bigg(\frac{L_0}{\epsilon_{\eta}}\bigg)^{1/c_1}
    \end{align}
    with $C>0$ sufficiently large, then $L_t<\epsilon_{\eta}$, equivalently $\eta<2/\lambda_1(x_t)$. Hence, setting $\gamma:=1/c_1$, there is
    \begin{align}
        T=O\bigg(y_1^{-2}\bigg(\frac{\lambda_1(x)-\lambda_1|_F}{2/\eta-\lambda_1|_F}\bigg)^\gamma\bigg)
    \end{align}
    such that $\eta<2/\lambda_1(x_T)$. Since $\lambda_1(x_t)$ is monotonically decreasing, it follows that $\eta<2/\lambda_1(x_t)$ for all $t\geq T$.

    We now prove convergence after time $T$. Since $\eta\lambda_1(x_t)<2$ for all $t\geq T$, the $y_1$ update satisfies
    \begin{align}
        |y_{1,t+1}|
        &= |y_{1,t}|\big|1-(2-\eta\lambda_1(x_t))-y_{1,t}^2+O(y_{1,t}^3)\big|.
    \end{align}
    Shrinking $\rho$ if necessary and using the monotonicity of $\lambda_1(x_t)$, we obtain
    \begin{align}
        |y_{1,t+1}|\leq \beta |y_{1,t}|,
        \qquad 
        \beta:=1-(2-\eta\lambda_1(x_T))<1,
    \end{align}
    for all $t\geq T$. Thus $y_{1,t}=O(\beta^{t-T})$, while Lemma \ref{lem:reduction1} gives $\|y_{2:q,t}\|=O((1-\Delta)^t)$. The $x$ update has size
    \begin{align}
        d_M(x_{t+1},x_t)
        =O(y_{1,t}^2\|\nabla_M\lambda_1(x_t)\|)
        +O(|y_{1,t}|^3d_M(x_t,F)) = O(\beta^{2(t-T)}),
    \end{align}
    for all $t\geq T$; hence $x_t$ converges to some $x_{\infty}\in M$. Increasing $\beta$ if necessary such that $1>\beta>1-\Delta$ then gives the convergence rate.

    It remains to prove the claimed suboptimal flatness bound. By the Morse--Bott condition, there is $C'>0$ such that
    \begin{align}
        \|\nabla_M\lambda_1(x)\|^2\leq C'(\lambda_1(x)-\lambda_1|_F)
    \end{align}
    throughout $V^-_{\rho,\Delta}$. Using the normal form for the $x$ update and a covariant Taylor expansion from below gives, for all $t\geq T$,
    \begin{align}
        L_{t+1}
        &\geq L_t-C'y_{1,t}^2\|\nabla_M\lambda_1(x_t)\|^2+O(|y_{1,t}|^3L_t) \\
        &\geq (1-C'y_{1,t}^2)L_t,
    \end{align}
    after shrinking $\rho$ once more if necessary. Since $|y_{1,t}|\leq |y_{1,T}|\beta^{t-T}$ for $t\geq T$, it follows that
    \begin{align}
        L_t
        &\geq L_T\prod_{s=T}^{t-1}\big(1-C'y_{1,T}^2\beta^{2(s-T)}\big).
    \end{align}
    Taking $\rho$ sufficiently small, the factors in the product are positive and $\log(1-z)\geq -2z$ applies. Therefore
    \begin{align}
        \log\bigg(\prod_{s=T}^{t-1}\big(1-C'y_{1,T}^2\beta^{2(s-T)}\big)\bigg)
        &\geq -2C' y_{1,T}^2\sum_{s=T}^{t-1}\beta^{2(s-T)} \\
        &\geq -2C'\frac{y_{1,T}^2}{1-\beta^2}.
    \end{align}
    Letting $t\rightarrow\infty$ yields
    \begin{align}
        \lambda_1(x_{\infty})-\lambda_1|_F
        \geq 
        \exp\bigg(-O\bigg(\frac{y_{1,T}^2}{1-\beta^2}\bigg)\bigg)
        \big(\lambda_1(x_T)-\lambda_1|_F\big),
    \end{align}
    as claimed.
\end{proof}

\subsection{Critical regime}\label{subsec:criticalappendix}

In this subsection, we prove convergence in the critical regime, $\eta=2/\lambda_1|_F$, as in Theorem \ref{thm:criticalmain}. As we will see in Theorem \ref{thm:criticalappendix}, our proof is essentially a Lyapunov function proof and is as such distinct from (and much easier than) the corresponding theorem in \cite{macdonaldeos}, which requires a parabolic invariant manifold argument.

\begin{theorem}\label{thm:criticalappendix}
    Suppose that Assumptions \ref{ass:regularity}, \ref{ass:unique}, \ref{ass:morsebott} and \ref{ass:quality} hold. 
    Let $V_{\rho,\Delta}$ be an set as in Lemma \ref{lem:reduction1} and denote $V^0_{\rho,\Delta}:=V_{\rho,\Delta}\cap\{\eta = 2/\lambda_1|_F\}$. Then, for all $\rho$ sufficiently small, the iterates $(2/\lambda_1|_F,x_t,y_t)$ of any $(2/\lambda_1|_F,x,y)\in V^0_{\rho,\Delta}$ converge to a point in $F$ with rate $\Theta(t^{-1/2})$.
\end{theorem}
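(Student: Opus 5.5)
We work in the Fermi coordinates of Lemma \ref{lem:fermicoordinates} on the invariant set $V^0_{\rho,\Delta}$ of Lemma \ref{lem:reduction1}, where $\eta\lambda_1|_F=2$. The $y_1$-update then reads
\begin{align}
\GD_{y_1}=-y_1\big(1+(\eta/2)D^2_M\lambda_1(z)[u,u]-y_1^2+O(\|u\|^3,y_1^3)\big).
\end{align}
By Lemma \ref{lem:reduction1} we already know $\|y_{2:q,t}\|\leq(1-\Delta)^t\|y_{2:q,0}\|$. So only the coupled pair $(u_t,y_{1,t})$ and the drift of $z_t$ need controlling. The plan is a Lyapunov argument with two quantities: $s_t:=y_{1,t}^2$ and $L_t:=\lambda_1(x_t)-\lambda_1|_F$, where $L_t=\Theta(\|u_t\|^2)$ by the Morse--Bott hypothesis.

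\textbf{Step 1 (two-sided recursion for $s_t$).} Squaring the update gives
\begin{align}
s_{t+1}=s_t\big(1+\eta L_t\cdot\Theta(1)-2s_t+O(s_t^{3/2},L_t^{3/2})\big).
\end{align}
It follows that $s_{t+1}\geq s_t(1-2s_t-Cs_t^{3/2})$, which yields the lower bound $s_t\geq s_0/(1+3s_0t)$ exactly as in Lemma \ref{lem:contraction2}. For an upper bound we need $L_t\ll s_t$ eventually. Lemma \ref{lem:morsebottdescent} gives $L_{t+1}\leq(1-cs_t)L_t$, while $s_{t+1}\leq s_t(1+CL_t-s_t)$. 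Hence the ratio $r_t:=L_t/s_t$ satisfies
\begin{align}
r_{t+1}\leq r_t(1-cs_t)(1+s_t-CL_t)^{-1}\cdots
\end{align}
This is the delicate point. Since $c$ (from $\zeta\nabla^2\lambda_1$) and the coefficient $2$ of $s_t^2$ are not comparable, the ratio $r_t$ need not decrease. I would therefore instead use the Lyapunov function $V_t:=s_t+\kappa L_t$ with $\kappa$ large. Then
\begin{align}
V_{t+1}\leq s_t+CL_ts_t-2s_t^2+\kappa L_t-\kappa cs_tL_t\leq V_t-s_t^2-(\kappa c-C)s_tL_t,
\end{align}
and choosing $\kappa>C/c$ gives $V_{t+1}\leq V_t-c'\,s_tV_t$ whenever $s_t\gtrsim L_t$.

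\textbf{Step 2 (closing the rate, main obstacle).} The main difficulty is the regime $L_t\gg s_t$, where $V$ decreases only like $s_tL_t$. In that regime, however, $s$ grows geometrically: $s_{t+1}\geq s_t(1+c''L_t)$. Meanwhile $L$ shrinks only by the factor $(1-cs_t)$, so $s$ catches up with $L$ in finitely many steps, since $\sum L_t$ diverges while $L_t$ stays comparable to its initial value. After that, $\{s\gtrsim L\}$ should be shown approximately invariant, with the bad region returned to only with bounded cost. Given $V_{t+1}\leq V_t-c'V_t^2$ on the good region, we get $V_t=O(t^{-1})$, so $|y_{1,t}|=O(t^{-1/2})$ and $\|u_t\|=O(t^{-1/2})$. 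Combined with the lower bound on $s_t$, this gives $|y_{1,t}|=\Theta(t^{-1/2})$.

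\textbf{Step 3 (convergence of $z_t$ and the rate of approach).} From Lemma \ref{lem:fermicoordinates}, $d_F(z_{t+1},z_t)=O(s_t\|u_t\|(|y_{1,t}|+\|u_t\|))=O(t^{-1}\cdot t^{-1/2}\cdot t^{-1/2})=O(t^{-2})$. This is summable, so $z_t\to z_\infty\in F$ with tail $O(t^{-1})$. Moreover $\|u_t\|$ decays at least like $\prod(1-C_2s_t)$, which is polynomially fast. So $x_t\to z_\infty$, and the full iterate $(x_t,y_t)$ converges to $(z_\infty,0)$ at the rate $\Theta(t^{-1/2})$. That rate is dictated by $y_{1,t}$: the upper bound comes from Step 2 and the lower bound from Step 1, while $u_t$, $y_{2:q,t}$ and $d_F(z_t,z_\infty)$ are all $O(t^{-1/2})$.
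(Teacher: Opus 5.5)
\textbf{Gap in Step 2.} Your Lyapunov function is essentially the paper's: it uses $E=y_1^2+a\|u\|^2$, and you use $V=s+\kappa L$ with $L\asymp\|u\|^2$ by Morse--Bott. Steps 1 and 3 are sound. Step 2, however, is the heart of the argument, and it is not proved.

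\begin{itemize}
\item You assert that $s$ ``catches up'' with $L$ because ``$L_t$ stays comparable to its initial value''. That claim is unjustified.
\item You then say that $\{s\gtrsim L\}$ ``should be shown approximately invariant, with the bad region returned to only with bounded cost''. Nothing you have written rules out infinitely many excursions into $\{s_t\ll L_t\}$. There $V$ drops only by $s_tL_t\ll V_t^2$, and such excursions would destroy $V_t=O(t^{-1})$, and with it the upper half of the $\Theta(t^{-1/2})$ claim.
\item You do not need $L_t\ll s_t$ eventually, and it need not hold: when the descent constant $c$ is small, $L_t/s_t$ tends to a positive constant. What is needed is $s_t\gtrsim V_t$.
\end{itemize}

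\textbf{The missing idea.} It is a two-threshold argument on the ratio $q:=s/V$, which is well defined since $s_t>0$. Use $V_{t+1}\le V_t$ and $s_{t+1}/s_t\ge 1+c_1L_t-3s_t$.
\begin{itemize}
\item On $\{q\le 2b\}$ one has $\kappa L=V-s\ge(1-2b)V$. So for $b$ small, $q_{t+1}\ge q_t(1+cV_t)$.
\item On $\{q\ge 2b\}$ one has $s_{t+1}/s_t\ge 1-3V_t$. Hence $q_{t+1}\ge q_t(1-3V_t)\ge q_t/2\ge b$.
\end{itemize}
Together these make $\{q\ge b\}$ \emph{exactly} forward-invariant, so there are no returns at all.

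\textbf{Entry in finite time.} Use the reverse inequality $V_{t+1}\ge V_t-KV_t^2$, which follows from $s_{t+1}\ge s_t-3s_t^2$ and $L_{t+1}\ge(1-Cs_t)L_t$. It gives $V_t\ge (V_0^{-1}+2Kt)^{-1}$, hence $\sum_t V_t=\infty$. If $q_t<b$ for all $t$, then $q_t\ge q_0\prod_{j<t}(1+cV_j)\to\infty$, a contradiction. This replaces your unproved ``$L_t$ stays comparable'' claim.

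\textbf{Closing the rate.} Your Step 1 bound $V_{t+1}\le V_t-c's_tV_t$ in fact holds everywhere, not only where $s\gtrsim L$. On $\{q\ge b\}$ it becomes $V_{t+1}\le V_t-c'bV_t^2$, which yields $V_t=O(t^{-1})$.

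This is exactly Claims 1--2 of the paper's proof, which carries them out for $\GD^2$ with $E$ in place of $V$; that difference is immaterial.
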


\begin{proof}
Work in coordinates $(z,u,y_1,y_{2:q})$ as in Lemma \ref{lem:fermicoordinates}. Our convergence proof essentially reduces to the construction of a suitable Lyapunov function on $V^0_{\rho,\Delta}$ as follows.

For notational convenience, denote $U:=\|u\|^2$, $Y:=y_1^2$, $h:=\langle \nabla_M^2\lambda_1(z)u,u\rangle$ and $\zeta:=\zeta({2/\lambda_1|_F},\cdot)$. Choose $a>0$ sufficiently large that $a\inf_{z\in V}\zeta(z)-\lambda_1|_F^{-1}>0$ and define
\begin{align}
E:=Y+aU.
\end{align}
For all sufficiently small $\rho>0$, the function $E$ on $V^0_{\rho,\Delta}$ is a Lyapunov function for the square of $\GD$. To see this, denote the values of $E,Y,U$ under the square of $\GD$ by $E^+$, $Y^+$ and $U^+$ respectively. It follows from the update formulae of Lemma \ref{lem:fermicoordinates} that
\begin{align}\label{eq:U}
U^+=U-4\zeta Yh+O(Y(U+Y)^{3/2})
\end{align}
\begin{align}\label{eq:Y}
Y^+=Y+4Y(\lambda_1|_F^{-1} h-Y)+O(Y(U+Y)^{3/2})
\end{align}
Since $E\asymp Y+U$, adding \eqref{eq:U} and \eqref{eq:Y} gives
\begin{align}\label{eq:E}
E^+ = E -4YB
+O(YE^{3/2})
\end{align}
where $B:=Y+(a\zeta-\lambda_1|_F^{-1})h$. Note that our choice of $a$ gives $a\zeta-\lambda_1|_F^{-1}>0$; since $\lambda_1$ is Morse-Bott and smooth, there are constants $m,M>0$ such that the coefficient $B$ satisfies
\begin{align}\label{eq:B}
mE\le B\le ME
\end{align}
uniformly over $V^0_{\rho,\Delta}$. Shrinking $\rho$ if necessary, this implies $E^+\leq E$ uniformly over $V^0_{\rho,\Delta}$. To derive the convergence rate, denote $E_t:=E\circ \GD^{2t}$ and $Y_t:=Y\circ\GD^{2t}$ on $V^0_{\rho,\Delta}$. The convergence rate then follows from the following three claims.

\noindent\textbf{Claim 1:} \emph{There is $K>0$ such that
\begin{align}
E_{t+1}\ge E_t-KE_t^2,\qquad\forall t.
\end{align}
Consequently, $\sum_{t=0}^{\infty}E_t$ diverges everywhere on $V^0_{\rho,\Delta}$.}
Indeed, by the estimates \eqref{eq:E} \eqref{eq:B} and $Y\le E$,
\begin{align}
E^+\geq E -4MYE+O(YE^{3/2})\geq E(1-KE)
\end{align}
for some constant $K>0$ and all sufficiently small $E$. To prove divergence of the series, shrinking $\rho$ if necessary, take reciprocals and apply the estimate $(1-x)^{-1}\leq 1+2x$ (valid for small $x$) to give
\begin{align}
\frac{1}{E_{t+1}}\leq
\frac{1}{E_t}+2K,
\end{align}
so that
\begin{align}
E_t\ge \frac{1}{E^{-1}+2Kt}.
\end{align}
Divergence of the harmonic series thus gives
\begin{align}\label{eq:D}
\sum_{t=0}^{\infty}E_t=\infty.
\end{align}

\noindent\textbf{Claim 2:}\emph{ For some sufficiently small $b>0$, there is $t_0\in\NB$ such that
\begin{align}
Y_t\ge bE_t,\qquad \forall t\ge t_0.
\end{align}}
To see this, define $q:=Y/E$, which is well-defined on $V^0_{\rho,\Delta}$ since $E\neq 0$ thereon. Since $Y> 0$ by hypothesis, shrinking $\rho$ if necessary the update \eqref{eq:Y} says that $Y_t> 0$ for all $t$. We can therefore divide the formula \eqref{eq:Y} by $Y$ and consider the multiplicative update
\begin{align}\label{eq:Ymult}
Y^+/Y = 1+4(\lambda_1|_F^{-1} h-Y) + O(E^{3/2})
\end{align}
We estimate the coefficient $4(\lambda_1|_F^{-1} h-Y)$ in terms of $E$. Rearranging $E=Y+aU$ gives $W=\frac{1-q}{a}E$, and the Morse-Bott condition gives $\mu>0$ such that $h\geq \mu U$ uniformly over $V^0_{\rho,\Delta}$. Hence $h\ge \frac{\mu(1-q)}{a}E$, from which it follows that
\begin{align}
\lambda_1|_F^{-1} h-Y\geq(\frac{\lambda_1|_F^{-1}\mu(1-q)}{a}-q)E.
\end{align}
Choose $b>0$ sufficiently small that $p:=\frac{\lambda_1|_F^{-1}\mu(1-2b)}{a}-2b>0$. Then whenever $q\le 2b$, one has $\lambda_1|_F^{-1} h-Y\ge pE$; using \eqref{eq:Ymult}, $q\le 2b$ therefore implies
\begin{align}
Y^+/Y\ge 1+cE
\end{align}
for some $c>0$. Since $E^+\le E$, it follows that
\begin{align}\label{eq:q}
\frac{q^+}{q} = \frac{Y^+/Y}{E^+/E}\geq 1+cE\qquad\text{whenever }q\leq 2b.
\end{align}
If $q_t<b$ for every $t$, then iterating \eqref{eq:q} gives
\begin{align}
q_t\ge q_0\prod_{j=0}^{t-1}(1+cE_j).
\end{align}
By \eqref{eq:D}, $\sum_jE_j=\infty$, so this product diverges, contradicting $q_t<b$. This gives $t_0$ such that $q_{t_0}\geq b$. 

To complete the proof of Claim 2, it remains to show that the region $q\ge b$ is forward invariant. If $b\le q\le 2b$, then \eqref{eq:q} gives $q^+\ge q\ge b$. Moreover, in general, \eqref{eq:Ymult} implies $Y^+/Y\ge 1-O(E)$. Since $E^+\le E$, one has
\begin{align}
q^+ = q\frac{Y^+/Y}{E^+/E}\ge q(1 -O(E))\ge (1/2)q
\end{align}
for all sufficiently small $E$, so that $q\ge 2b$ implies $q^+\ge b$. Thus $q\ge b$ is forward-invariant, completing the proof of Claim 2.

\noindent\textbf{Claim 3:} \emph{There is $k>0$ such that, for all $t\ge t_0$,
\begin{align}
E_t-KE_t^2\leq E_{t+1}\leq E_t-kE_t^2.
\end{align}}
The lower bound is Claim 1. For the upper bound, \eqref{eq:E}, \eqref{eq:B} and Claim 2 give $k>0$ such that $E_{t+1}\leq E_t-4mYE_t + O(YE_t^{3/2})\leq E_t-4mbE_t^2 + O(E_t^{5/2})\leq E_t-kE_t^2$ for all $t\geq t_0$.

\textbf{Convergence: } Claim 1 demonstrates that $E_{t+1}\geq E_t(1-KE_t)$ implies $E_t = \Omega(t^{-1})$. The same argument with reversed inequalities says that $E_{t+1}\leq E_t(1-kE_t)$ implies $E_t = O(t^{-1})$. Thus Claim 3 implies that $E_t = \Theta(t^{-1})$, hence also that the iterate coordinates $(2/\lambda_1|_F,z_t,u_t,y_t) = \GD^t(2/\lambda_1|_F,z,u,y)$ of any $(2/\lambda_1|_F,z,u,y)\in V^0_{\rho,\Delta}$ satisfy $|w_t|+|y_{1,t}| = \Theta(\sqrt{E_t}) = \Theta(t^{-1/2})$. Finally, by Lemma \ref{lem:fermicoordinates} the $z$-updates are $O(E_t^2) = O(t^{-2})$, so $z_t$ converges to some $z_{\infty}\in F$ with rate $O(t^{-1})$.
\end{proof}

\subsection{Supercritical regime}\label{subsec:supercriticalappendix}

In this subsection, we prove Theorem \ref{thm:supercriticalmain}. The argument is similar to the corresponding result in \cite{macdonaldeos}; however, unlike \cite{macdonaldeos}, our result gives convergence in a neighbourhood of radius $\rho = \Theta(1)$ in $\eta\lambda_1|_F-2$.

\begin{theorem}\label{thm:supercriticalappendix}
    Let $V_{\rho,\Delta}$ be an invariant set as in Lemma \ref{lem:reduction1}, and set $V^+_{\rho,\Delta}:=V_{\rho,\Delta}\cap\{\eta\lambda_1|_F-2>0\}$. Then, shrinking $\rho$ if necessary, there is $C>0$ such that any $(\eta,x,y)\in V^+_{\rho,\Delta}$ has iterates $(\eta,x_t,y_t)$ of $\GD$ converging to a stable, period-two orbit about some point $z_{\infty}\in F$ with amplitude $\Theta(\sqrt{\eta\lambda_1|_{F}-2})$ with rates $O\big((1-C(\eta\lambda_1|_F-2))^t\big)$.
\end{theorem}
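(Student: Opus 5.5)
Write $\epsilon:=\eta\lambda_1|_F-2\in(0,\rho^2]$ and work throughout in the Fermi coordinates $(z,u,y_1,y_{2:q})$ of Lemma \ref{lem:fermicoordinates}, on the invariant set $V^+_{\rho,\Delta}$ supplied by Lemma \ref{lem:reduction1}. The argument has three steps. First show that $u$ is driven to zero at rate $1-\Theta(\epsilon)$. Then show that $y_1$ is captured by the period-two fixed points $z_\pm$ of Lemma \ref{lem:contraction1}. Finally show that the base point $z_t$ converges. Invariance already gives exponential decay of $y_{2:q}$ at rate $(1-\Delta)^t$, which is faster than $1-C\epsilon$ for small $\rho$, and $\GD_x,\GD_{y_1}$ do not depend on $y_{2:q}$. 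So $y_{2:q}$ can be ignored throughout.

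\textbf{Step 1 (finite entry into an $\epsilon$-scale region).} Use the twice-iterated map, as in the proof of Theorem \ref{thm:criticalappendix}. With $Y=y_1^2$ and $h=\langle\nabla^2_M\lambda_1(z)u,u\rangle$, the square of $\GD_{y_1}$ gives
\[
Y^+=Y\bigl(1+4\epsilon+4\lambda_1|_F^{-1}h-4Y+O(E^{3/2})\bigr),
\]
while $\|u^+\|\le(1-2C_2Y)\|u\|$.
\begin{itemize}
\item If $Y\le\epsilon/2$, then $Y$ grows by a factor at least $1+\epsilon$.
\item Hence, starting from $y_{1,0}\neq0$, $Y$ exceeds $c\epsilon$ after $O(\epsilon^{-1}\log(\epsilon/y_{1,0}^2))$ steps.
\item A Lyapunov-type argument like Claim 2 shows that the region $\{Y\ge c\epsilon\}$ is forward invariant. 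The growth coefficient is $\ge 4\epsilon-O(Y)$ when $Y$ is small, and if $Y$ is large the drop is at most a factor $1-O(E)$.
\end{itemize}
Once inside this region, $\|u_t\|$ contracts by $1-2C_2c\epsilon$ per double step. This gives $\|u_t\|=O((1-C\epsilon)^t)$. Because $\rho$ is fixed independently of $\epsilon$, this also gives the claimed $\Theta(1)$ neighbourhood.

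\textbf{Step 2 (convergence of $y_1$).} Once $\|u\|^2\ll\epsilon$, the $y_1$-update is
\[
y_1\mapsto -y_1\bigl(1+\alpha_t-y_1^2+O(y_1^3)\bigr),\qquad \alpha_t=\epsilon+O(\|u_t\|^2).
\]
Here $\alpha_t$ converges exponentially to $\epsilon$. Apply Lemma \ref{lem:contraction1} with $\gamma\asymp\epsilon$. Because the composite $f_{\alpha_1\alpha_0}$ is monotone on $[-2\sqrt\gamma,2\sqrt\gamma]$ and its only fixed points there are $0,z_\pm$, the frozen map $f_\epsilon\circ f_\epsilon$ has derivative $1-\Theta(\epsilon)$ at $z_\pm=\pm\sqrt{\epsilon}+O(\epsilon)$. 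This follows from the computation $\partial_z(f\circ f)(z_\pm)=1-4\epsilon+O(\epsilon^{3/2})$.

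So the nonzero fixed points are attracting with multiplier $1-\Theta(\epsilon)$, and $0$ is repelling. Step 1 keeps $|y_1|\ge\sqrt{c\epsilon}$, which separates the iterates from $0$. Monotonicity then shows that the iterates enter a basin of $z_+$ (or $z_-$) for the doubled map. Treat the $O(\|u_t\|^2)$ perturbation as an exponentially decaying forcing on a contraction; a discrete Gronwall estimate then gives $|y_{1,2t}-z_\pm|=O((1-C\epsilon)^t)$ after reducing $C$. The limit orbit $\pm z_+$ has amplitude $\Theta(\sqrt\epsilon)$ and lies along $\nu_1|_F$ because $u\to0$.

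\textbf{Step 3 (convergence of the base point).} By Lemma \ref{lem:fermicoordinates}, $d_F(z_{t+1},z_t)=O(y_1^2\|u_t\|)=O((1-C\epsilon)^t)$. This is summable, so $z_t\to z_\infty\in F$ at the same rate. Strictly, the geometric-series tail is $O((1-C\epsilon)^t/\epsilon)$; the $1/\epsilon$ factor is absorbed into the $O$-constant, or avoided by using the extra $y_1^2=O(\epsilon)$ factor in the increment.

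\textbf{Main obstacle.} The hardest step is Step 1: guaranteeing uniformly that $Y$ stays at order $\epsilon$ while $u$ has not yet decayed. In that phase $h$ can push $Y$ up to order $\rho^2$, far above $\epsilon$. One then has to check that the $-4Y$ term pulls $Y$ back down without overshooting toward $0$. My first attempt would be the Lyapunov functional $E=Y+aU$ from Theorem \ref{thm:criticalappendix}, now with the extra $+4\epsilon Y$ term. The aim is to show that $E$ decreases until $E=O(\epsilon)$, then combine this with the invariance of $\{Y\ge c\epsilon\}$ exactly as in Claim 2.
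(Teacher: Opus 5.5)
Your route is essentially the paper's. Its three ingredients are:
\begin{itemize}
\item decay of $u$ at rate $1-\Theta(\epsilon)$ once $y_1^2\gtrsim\epsilon$;
\item summable $z$-increments carrying the factor $y_1^2=O(\epsilon)$;
\item comparison of $\GD^2_{y_1}$ with the frozen doubled map $g_{\eta,z_\infty}$ under exponentially decaying forcing.
\end{itemize}
Your Step 1 lower bound (growth of $Y$ below $\epsilon/2$, plus forward invariance of $\{Y\ge c\epsilon\}$) is a direct version of the paper's contradiction argument that $\sum_t y_{1,t}^2=\infty$.

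The step you leave open, the upper bound $y_1^2=O(\epsilon)$ that Step 2 needs, does not require the Lyapunov function $E=Y+aU$. It follows from the order of the argument. Your lower bound alone already forces $\|u_t\|\to0$, so after finitely many steps $\|u_t\|\le\beta\sqrt{\epsilon}$ with $\beta$ small. From then on $\tfrac{\eta}{2}D^2_M\lambda_1(z)[u,u]\le\epsilon/8$, so $y_1^2\ge4\epsilon$ gives $|y_{1,t+1}|\le(1-\epsilon)|y_{1,t}|$. The one-step ratio $|y_{1,t+1}|/|y_{1,t}|$ lies in $[1/2,3/2]$, so the first iterate with $y_1^2\le4\epsilon$ still has $y_1^2>\epsilon$. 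This is exactly how the paper enters its invariant window $\{a(\eta)\le y_1^2\le4\alpha(\eta),\ \|u\|\le\beta\sqrt{\alpha(\eta)}\}$.

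Two details in your contraction step also need repair.

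First, $\partial_{y_1}g=(1+\epsilon)^2-3(1+\epsilon)(1+(1+\epsilon)^2)y_1^2+O(|y_1|^3)$ exceeds $1$ when $y_1^2$ is below roughly $\epsilon/3$. So $y_1^2\ge c\epsilon$ with a small $c$ does not give the uniform contraction your Gronwall argument needs. Your derivative computation at $z_\pm$ only gives a local contraction. You need an invariant window whose lower end exceeds $\epsilon/3$. The paper uses $a(\eta)\approx\epsilon/2$, where $\partial_{y_1}g=1-\epsilon+O(\epsilon^{3/2})$, and checks that $[a(\eta),4\alpha(\eta)]$ is invariant.

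Second, the higher-order terms of $\GD_{y_1}$ depend on $z$, so the period-two points are $\xi_\pm(\eta,z)$ rather than fixed $z_\pm$. You must freeze at $z_\infty$ and include a forcing term $O(\epsilon^2 d_F(z_t,z_\infty))$, as the paper does. That means proving your Step 3 before Step 2. This is not circular, since Step 3 uses only the decay of $u$ and $y_1^2=O(\epsilon)$.
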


\begin{proof}
    We work in the coordinates of Lemma \ref{lem:fermicoordinates}. In these coordinates, the $y_{2:q}$ coordinate of the iterates contracts exponentially to zero by definition of $V^+_{\rho,\Delta}$, so it suffices to prove convergence in the $(z,u,y_1)$ coordinates. Since the span of the normal vector through any point $z\in F$ is invariant by Assumption \ref{ass:morsebott}, the subspace $u=0$ is preserved by $\GD$, and thereon $\GD_{y_1}$ takes the form
    \begin{align}
        \GD_{y_1}(\eta,z,0,\cdot):y_1\mapsto -(1+(\eta\lambda_1|_F-2))y_1+y_1^3+O(y_1^4),
    \end{align}
    Hence, by Lemma \ref{lem:contraction1}, for each $z$ there is a period-two orbit $\{\xi_+(\eta,z),\xi_-(\eta,z)\}$ of the form $\xi_{\pm}(\eta,z)=\pm\sqrt{\eta\lambda_1|_F-2}+O(\eta\lambda_1|_F-2)$. It follows that for every $z$ the pair $\{(z,0,\xi_+(\eta,z)),(z,0,\xi_-(\eta,z))\}$ is a period-two orbit of $\GD$.

    To prove convergence, we first define a neighbourhood on which the desired uniform estimates can be obtained. Specifically, define
    \begin{align}
        \alpha(\eta):=\eta\lambda_1|_F-2,\qquad a(\eta):=\frac{\alpha(\eta)}{(1+\alpha(\eta))(1+(1+\alpha(\eta))^2)}
    \end{align}
    and, given $\beta>0$,
    \begin{align}
    V_{\beta,\rho,\Delta}^+:=V^+_{\rho,\Delta}\cap\{a(\eta)\leq y_1^2\leq 4\alpha(\eta),\quad \|u\|\leq \beta\sqrt{\alpha(\eta)}\}.
    \end{align}
    We first prove that, for sufficiently small $\rho,\beta$, $V^+_{\beta,\rho,\Delta}$ is invariant and the iterates starting from any point in $V^+_{\rho,\Delta}$ enter $V^+_{\beta,\rho,\Delta}$ in finite time. Second, third and fourth, we establish that the iterates starting from any point in $V^+_{\beta,\rho,\Delta}$ have $u$-component, $z$-component and $y_1$-component admitting the claimed convergence behaviour.
    
    \noindent\textbf{Claim 1: }\emph{For any sufficiently small $\beta>0$, $V^+_{\beta,\rho,\Delta}$ is invariant, and any point in $V^+_{\rho,\Delta}$ has iterates contained in $V^+_{\beta,\rho,\Delta}$ after a finite number of steps.} Shrinking $\rho$ if necessary, there are constants $c,C_1>0$ such that
    \begin{align}\label{eq:supercriticalapproach}
        \|\GD_u(\eta,z,u,y_1)\|&\leq (1-cy_{1}^2)\|u\|,\\
        |\GD_{y_1}(\eta,z,u,y_1)|
        &=|y_{1}|\left(1+\alpha(\eta)+\frac{\eta}{2}D^2_M\lambda_1(z)[u,u]-y_{1}^2+R(\eta,z,u,y)\right)
    \end{align}
    with $|R(\eta,z,u,y)|\leq C_1(\|u\|^3+|y_{1}|^3)$ and
    \begin{align}\label{eq:factorbounds}
        \left(1+\alpha(\eta)+\frac{\eta}{2}D^2_M\lambda_1(z)[u,u]-y_{1}^2+R(\eta,z,u,y)\right)\in[1/2,3/2]
    \end{align}
    uniformly over $V^+_{\rho,\Delta}$. To see that $V^+_{\beta,\rho,\Delta}$ is invariant, observe that \eqref{eq:supercriticalapproach} already shows that the bound on $u$ is preserved. Moreover, uniformly on $V^+_{\beta,\rho,\Delta}$,
    \begin{align}
        |\GD_{y_1}(\eta,z,u,y_1)|^2
        =
        y_1^2\left(1+2(\alpha(\eta)-y_1^2)
        +O(\beta^2\alpha(\eta)+\alpha(\eta)^{3/2})\right).
    \end{align}
    For either choice of the sign of $y_1$, for all $\rho$ sufficiently small the right-hand side is an increasing function of $y_1^2\in[a(\eta),4\alpha(\eta)]$. At the two endpoints one has
    \begin{align}
        |\GD_{y_1}(\eta_0,z,u,\sqrt{a(\eta)})|^2
        &=a(\eta)\left(1+2(\alpha(\eta)-a(\eta))
        +O(\beta^2\alpha(\eta)+\alpha(\eta)^{3/2})\right)
        \geq a(\eta),\\
        |\GD_{y_1}(\eta,z,u,2\sqrt{\alpha(\eta)})|^2
        &=4\alpha(\eta)\left(1-6\alpha(\eta)
        +O(\beta^2\alpha(\eta)+\alpha(\eta)^{3/2})\right)
        \leq4\alpha(\eta),
    \end{align}
    after first choosing $\beta,\rho$ sufficiently small. Hence
    \begin{align}
        a(\eta)
        \leq |\GD_{y_1}(\eta,z,u,y_1)|^2
        \leq4\alpha(\eta).
    \end{align}
    Since $V^+_{\rho,\Delta}$ is invariant, all the defining conditions of $V^+_{\beta,\rho,\Delta}$ are preserved by $\GD$.
    
    We now come to showing that the iterates of any point in $V^+_{\rho,\Delta}$ enter $V^+_{\beta,\rho,\Delta}$ in finite time. For this, we shrink $\beta$ further if necessary so that
    \begin{align}\label{eq:betaeqn}
        0\leq\frac{\eta}{2}D^2_M\lambda_1(z)[u,u]\leq\frac{\alpha(\eta)}{8},
        \qquad
        |R(\eta,z,u,y)|\leq\frac{\alpha(\eta)}{8}+\frac{y_{1}^2}{8}
    \end{align}
    uniformly over $V^+_{\rho,\Delta}$ whenever $\|u\|\leq\beta\sqrt{\alpha(\eta)}$. Now fix $(\eta,z_0,u_0,y_0)\in V^+_{\rho,\Delta}$ with iterates $(\eta,z_t,u_t,y_t)$. We first claim that $\|u_t\|\rightarrow 0$. Indeed, suppose for contradiction that $\sum_{t\geq 0}y_{1,t}^2<\infty$, so that $y_{1,t}\rightarrow 0$. Using the fact that $\lambda_1$ is Morse-Bott and shrinking $\rho$ if necessary,
    \begin{align}
        \frac{\eta}{2}D^2_M\lambda_1(z_t)[u_t,u_t]-C_1\|u_t\|^3\geq 0
    \end{align}
    for every $t$. Consequently, for all $t$ sufficiently large,
    \begin{align}
        \alpha(\eta)+\frac{\eta}{2}D^2_M\lambda_1(z_t)[u_t,u_t]-y_{1,t}^2+R_t
        \geq \alpha(\eta)-y_{1,t}^2-C_1|y_{1,t}|^3
        \geq \frac{\alpha(\eta)}{2}.
    \end{align}
    It would follow that
    \begin{align}
        |y_{1,t+1}|\geq(1+\alpha(\eta)/2)|y_{1,t}|
    \end{align}
    for all sufficiently large $t$, contradicting $y_{1,t}\rightarrow 0$. Hence $\sum_{t\geq 0}y_{1,t}^2=\infty$, and the first estimate in \eqref{eq:supercriticalapproach} gives
    \begin{align}
        \|u_t\|\leq \|u_0\|\prod_{j=0}^{t-1}(1-cy_{1,j}^2)\leq \|u_0\|\exp\left(-c\sum_{j=0}^{t-1}y_{1,j}^2\right)
        \longrightarrow 0.
    \end{align}
    There is therefore $T\in\NB$ such that
    \begin{align}
        \|u_t\|\leq \beta\sqrt{\alpha(\eta)},\qquad \forall t\geq T,
    \end{align}
    and this estimate is preserved thereafter since $(\|u_t\|)_{t\geq 0}$ is non-increasing.

    If $y_{1,T}^2\in[a(\eta),4\alpha(\eta)]$ then we are done. Otherwise, using \eqref{eq:betaeqn}, since $a(\eta)\leq \alpha(\eta)/2$, it follows that
    \begin{align}
        y_{1,t}^2
        \leq a(\eta)
        \quad\Longrightarrow\quad
        |y_{1,t+1}|\geq(1+\alpha(\eta)/4)|y_{1,t}|,
        \label{eq:supercriticallowerentry}
    \end{align}
    whereas
    \begin{align}
        y_{1,t}^2\geq4\alpha(\eta)
        \quad\Longrightarrow\quad
        |y_{1,t+1}|\leq(1-\alpha(\eta))|y_{1,t}|.
        \label{eq:supercriticalupperentry}
    \end{align}
    Thus, if $y_{1,T}^2$ is below $a(\eta)$, its iterates cannot remain below this bound by \eqref{eq:supercriticallowerentry}; and if $y_{1,T}^2>4\alpha(\eta)$, its iterates cannot remain above $4\alpha(\eta)$ by \eqref{eq:supercriticalupperentry}. In the former case, at the first time $\tau\geq T$ for which $y_{1,\tau}^2\geq a(\eta)$, the upper bound $|y_{1,\tau}|/|y_{1,\tau-1}|\leq3/2$ from \eqref{eq:factorbounds} gives $y_{1,\tau}^2\leq (9/4)a(\eta)<4\alpha(\eta)$. In the latter case, at the first time $\tau\geq T$ for which $y_{1,\tau}^2\leq4\alpha(\eta)$, the lower bound $|y_{1,\tau}|/|y_{1,\tau-1}|\geq1/2$ of \eqref{eq:factorbounds} gives $y_{1,\tau}^2\geq\frac14y_{1,\tau-1}^2>\alpha(\eta)>a(\eta)$. In either case there is therefore a finite $\tau\geq T$ such that
    \begin{align}
        \|u_\tau\|\leq \beta\sqrt{\alpha(\eta)},
        \qquad
        a(\eta)
        \leq y_{1,\tau}^2\leq4\alpha(\eta),
    \end{align}
    so that $(\eta,z_\tau,u_\tau,y_{1,\tau})\in V^+_{\beta,\rho,\Delta}$. For notational convenience, we assume without loss of generality that $\tau = 0$ in what follows, i.e. $(\eta,z_0,u_0,y_0)\in V^+_{\beta,\rho,\Delta}$.

    \noindent\textbf{Claim 2: }\emph{There is $C_2>0$ such that, once inside $V^+_{\beta,\rho,\Delta}$, the iterates $u_t$ converge to zero with rate $O((1-C_2\alpha(\eta))^t)$.} Indeed, shrinking $\rho$ if necessary, inside $V^+_{\beta,\rho,\Delta}$ one has $y_{1,t}^2\geq a(\eta)\geq (1/3)\alpha(\eta)$. Hence
    \begin{align}
    \|u_{t+1}\|\leq (1-(c/3)\alpha(\eta))\|u_{t}\|\Rightarrow \|u_{t}\|\leq (1-(c/3)\alpha(\eta))^t\|u_{0}\|
    \end{align}
    for all $t\in\NB$; setting $C_2:=c/3$ gives the claim.
    
    \noindent\textbf{Claim 3: }\emph{The iterates $z_t$ converge to some $z_{\infty}$ with rate $O\big((1-C_2\alpha(\eta))^t\big)$.} The update equation for $z$ gives
    \begin{align}
        d_F(z_{t+1},z_{t})
        = O(y_{1,t}^3\|u_{t}\|,y_{1,t}^2\|u_{t}\|^2)= O\big(\alpha(\eta)(1-C_2\alpha(\eta))^t\big)
    \end{align}
    since $\|u_t\| = O\big((1-C_2\alpha(\eta))^t\big)$ and $y_{1,t}^2 = O(\alpha(\eta))$. The right side is summable, so $\{z_{t}\}_{t\geq0}$ is Cauchy and hence converges to some $z_\infty$. Moreover, summing the geometric series gives
    \begin{align}
        d_F(z_{t},z_\infty) \leq\sum_{s=t}^{\infty}\|z_{s+1}-z_s\| =  O\big((1-C_2\alpha(\eta))^t\big)
    \end{align}
    as claimed.

    \noindent\textbf{Claim 4: }\emph{There is $0<C\leq C_2$ such that the iterates $y_{1,t}$ converge with rate $O\big((1-C\alpha(\eta))^t\big)$ to the period-two orbit $\{\xi_{\pm}(\eta,z_{\infty})\}$.} It is evident from the formula for $\GD_{y_1}$ that the sign of $y_{1,t}$ alternates; assume without loss of generality that $\mathrm{sign}(y_{1,0}) = +$. It suffices then to find $0<C\leq C_2$ for which the iterates $y_{1,2t}$ converge with rate $O\big((1-C\alpha(\eta))^{2t}\big)$ to $\xi_+(\eta,z_{\infty})$.
    
    The square $\GD^2_{y_1}$ acts on $(\eta,z,u,y_1)$ to give
    \begin{align}\label{eq:GD2}
    y_1\big((1+\alpha(\eta))^2 + (1+\alpha(\eta))\eta D^2_M\lambda_1(z)[u,u] -(1+\alpha(\eta))(1+(1+\alpha(\eta))^2)y_1^2\big)
    \end{align}
    up to a possibly $z$-dependent term $O((\|u\|+|y_1|)^3)$. The restriction of this map to the invariant subspace $u=0$ will be denoted
    \begin{align}\label{eq:gd2}
        g_{\eta_0,z}(y_1):=\GD_{y_1}^2(\eta_0,z,0,y_1) = y_1\big((1+\alpha(\eta))^2 -(1+\alpha(\eta))(1+(1+\alpha(\eta))^2)y_1^2+ O(y_1^3)\big)
    \end{align}
    and has the following nice properties: first, the two points $\xi_\pm(\eta,z)$ are the nonzero fixed points of $g_{\eta,z}$, and, second
    \begin{align}
        D_{y_1}g_{\eta,z}(y_1)
        &=(1+\alpha(\eta))^2
        -3(1+\alpha(\eta))(1+(1+\alpha(\eta))^2)y_1^2
        +O(|y_1|^3).
    \end{align}
    Consequently, there is $C_3>0$ such that
    \begin{align}\label{eq:scalarcontractionsupercritical}
        |D_{y_1}g_{\eta,z}(y_1)|\leq1-C_3\alpha(\eta)
    \end{align}
    uniformly over $V^+_{\beta,\rho,\Delta}$.

    We next compare the true $y_1$-update under $\GD^2$ with $g_{\eta,z_\infty}$. It is apparent from \eqref{eq:GD2} and \eqref{eq:gd2} that
    \begin{align}\label{eq:yperturbationsupercritical}
        \left|
        \GD_{y_1}^2(\eta,z,u,y_1)
        -g_{\eta,z_\infty}(y_1)
        \right|= O\left(
        \sqrt{\alpha(\eta)}\|u\|^2
        +\alpha(\eta)^{3/2}\|u\|
        +\alpha(\eta)^2d_F(z,z_\infty)
        \right).
    \end{align}
    Since $g_{\eta,z_\infty}(\xi_+(\eta,z_\infty))=\xi_+(\eta,z_\infty)$, the mean-value theorem, \eqref{eq:scalarcontractionsupercritical} and \eqref{eq:yperturbationsupercritical} give
    \begin{align}
        &|y_{1,2(t+1)}-\xi_+(\eta,z_\infty)|\\
        &\qquad\leq
        (1-C_3\alpha(\eta))
        |y_{1,2t}-\xi_+(\eta,z_\infty)|\\
        &\qquad\quad+
        O\left(
        \sqrt{\alpha(\eta)}\|u_{2t}\|^2
        +\alpha(\eta)^{3/2}\|u_{2t}\|
        +\alpha(\eta)^2d_F(z_{2t},z_\infty)
        \right).
    \end{align}
    By Claims 2 and 3, after decreasing $C_2>0$ if necessary, the final line is
    \begin{align}
        O\left(\alpha(\eta)^{3/2}(1-C_2\alpha(\eta))^{2t}\right).
    \end{align}
    Iterating the preceding inequality and summing the resulting geometric convolution therefore gives some $C_4>0$ such that
    \begin{align}
        |y_{1,2t}-\xi_+(\eta,z_\infty)|
        =
        O\left(\sqrt{\alpha(\eta)}(1-C_4\alpha(\eta))^t\right).
    \end{align}
    Since $t$ here counts applications of $\GD^2$, decreasing $C_4$ allows this to be written as
    \begin{align}
        |y_{1,2t}-\xi_+(\eta,z_\infty)|
        =
        O\left((1-C_4\alpha(\eta))^{2t}\right).
    \end{align}
    Finally, setting $C:=\min\{C_4,C_2\}$ gives the result.
\end{proof}

\section{Matrix factorisation}\label{sec:matrixfactorisation}

In this section we prove that deep matrix factorisation problems satisfy all of the assumptions made in the paper. Given $L\in\NB_{\geq 2}$ and $d_0,\dots,d_L\in\NB$, consider $f:\prod_{l=1}^{L}\RB^{d_l\times d_{l-1}}\rightarrow \RB^{d_L\times d_0}$ defined by
\begin{align}\label{eq:matrixfactorisationf}
f(w):=W_L\cdots W_1,\qquad w:=(W_1,\dots,W_L)\in \prod_{l=1}^{L}\RB^{d_l\times d_{l-1}}.
\end{align}
Given $l\geq m$, we denote $W_{l:m}:=W_l\cdots W_m$; if $l<m$ then $W_{l:m}$ will denote the identity. 

\begin{proposition}\label{prop:regularvalue}
    Assume that $d_l\geq d_L$ for all $l\leq L$, and let $\tau\in\RB^{d_L\times d_0}$ have singular values $\sigma_1 >\sigma_2\geq\cdots\geq \sigma_{d_L}>0$. Then $\tau$ is in the range of $f$ and is a regular value of $f$.
\end{proposition}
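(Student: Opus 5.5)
We prove the two claims separately: surjectivity onto $\tau$ by an explicit factorisation, and regularity by showing that $Df(w)$ is surjective at every $w\in f^{-1}\{\tau\}$. The singular-value hypothesis $\sigma_1>\sigma_2$ plays no role here; it is only used later, for Assumption \ref{ass:unique}. Only $\sigma_{d_L}>0$, i.e. $\mathrm{rank}\,\tau=d_L$, and $d_l\geq d_L$ are needed.

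\emph{Range.} Take an SVD $\tau=U\Sigma V^T$ with $U\in\RB^{d_L\times d_L}$ orthogonal and $V\in\RB^{d_0\times d_0}$ orthogonal. Write $\Sigma=(\Sigma_0\ \ 0)$ with $\Sigma_0=\mathrm{diag}(\sigma_1,\dots,\sigma_{d_L})$; here $d_0\geq d_L$ by hypothesis. For $1\leq l\leq L-1$, let $J_l\in\RB^{d_l\times d_{l-1}}$ denote the rectangular partial identity (ones on the diagonal), and let $E\in\RB^{d_{L-1}\times d_L}$ be the isometric embedding of the first $d_L$ coordinates. Set $W_1:=J_1V^T$ when $L\geq 3$, the middle factors equal to the appropriate $J_l$, and $W_L:=U\Sigma_0E^T$. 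The product then equals $U\Sigma_0E^TJ_{L-1}\cdots J_1V^T$. Since every $d_l\geq d_L$, the partial identities preserve the first $d_L$ coordinates, so $E^TJ_{L-1}\cdots J_1$ equals the projection onto the first $d_L$ coordinates of $\RB^{d_0}$. Hence the product is $U\Sigma V^T=\tau$. When $L=2$, take $W_1:=E'V^T$ with $E'$ the projection onto the first $d_L$ coordinates inside $\RB^{d_1}$, and the same $W_2$.

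\emph{Regular value.} Fix $w$ with $W_{L:1}=\tau$. The derivative is
\begin{align}
Df(w)[\dot W_1,\dots,\dot W_L]=\sum_{l=1}^L W_{L:l+1}\dot W_l W_{l-1:1}.
\end{align}
It suffices to use a single term. Restricting to variations of $W_L$ alone gives $\dot W_L\mapsto \dot W_L W_{L-1:1}$, where $W_{L-1:1}\in\RB^{d_{L-1}\times d_0}$. This map is onto $\RB^{d_L\times d_0}$ if and only if $W_{L-1:1}$ has full row-rank as a map, i.e. its rows are linearly independent, which may fail if $d_{L-1}>d_0$. So instead restrict to variations of $W_1$ alone: $\dot W_1\mapsto W_{L:2}\dot W_1$. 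This map is onto $\RB^{d_L\times d_0}$ precisely when $W_{L:2}\in\RB^{d_L\times d_1}$ has rank $d_L$. That holds because $d_L=\mathrm{rank}\,\tau=\mathrm{rank}(W_{L:2}W_1)\leq\mathrm{rank}\,W_{L:2}\leq d_L$. Given any target $Z$, choose $\dot W_1:=W_{L:2}^{+}Z$ using a right inverse $W_{L:2}^{+}$. Thus $Df(w)$ is surjective at every solution point, which establishes Assumption \ref{ass:regularity}.

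The only step requiring care is choosing the right factor to vary: the last factor need not have a full-row-rank complement, whereas the rank argument via $\mathrm{rank}\,\tau=d_L$ works cleanly for the first factor. The rest is routine bookkeeping of the dimensions in the explicit construction.
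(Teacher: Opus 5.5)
Your proof is correct and is essentially the paper's argument. The paper also shows $Df(w)$ is onto using the $W_1$-variations alone, because $\mathrm{rank}\,\tau=d_L$ forces $W_{L:2}$ to have rank $d_L$; your SVD-based factorisation just spells out the range claim the paper calls obvious. One small slip in your discarded aside: $\dot W_L\mapsto\dot W_L W_{L-1:1}$ is onto $\RB^{d_L\times d_0}$ iff $W_{L-1:1}$ has rank $d_0$, i.e.\ linearly independent columns rather than rows, which can fail whenever $d_0>d_L$. Nothing in your proof depends on that remark.
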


\begin{proof}
    That $\tau$ is contained in the range of $f$ is obvious from the assumption that $d_l\geq d_L$ for all $l\leq L$. To see that it is a regular value of $f$, note that the derivative of $f$ evaluates on a tangent vector $(\xi_1,\dots,\xi_L)\in \prod_{l=1}^L\RB^{d_l\times d_{l-1}}$ to give
    \begin{align}
    Df(w)[\xi_1,\dots,\xi_L] = \sum_{l=1}^LW_{L:l+1}\,\xi_l\,W_{l-1:1}.
    \end{align}
    Since $\tau$ has rank $d_L$, any $w\in M:=f^{-1}\{\tau\}$ must have $W_L\cdots W_2$ being of rank $d_L$, implying that
    \begin{align}
    \mathrm{range}(Df(w))\supseteq \{W_L\cdots W_2\xi_1:\xi_1\in\RB^{d_1\times d_0}\} = \RB^{d_L\times d_0},
    \end{align}
    implying that $Df(w)$ is full-rank and thus making $\tau$ a regular value of $f$ as claimed.
\end{proof}

Define
\begin{align}
\ell(w):=\frac{1}{2}\|\tau-f(w)\|_F^2,\qquad w:=(W_1,\dots,W_L)\in\prod_{l=1}^L\RB^{d_l\times d_{l-1}}.
\end{align}
Since we are concerned only with quantities that depend on the metric and the derivatives of $f$, we may assume without loss of generality that $\tau = \mathrm{diag}(\sigma_1,\dots,\sigma_{d_L})$ is diagonal. We now consider the eigendata of the Hessian $\nabla^2\ell|_M = Df^TDf$, whose eigenvalue fields we denote $\lambda_1\geq\lambda_2\geq\dots$. Denote
\begin{align}
S:=\{w\in M:\lambda_1(w) = \lambda_2(w)\}
\end{align}
for the singular set; since the $\lambda_i$ are continuous, $S$ is closed. 

The set of nonsingular flat minima $F$ will be realised as an analytic fibre bundle over a product of spheres. To see this, first observe that the subset
\begin{equation}\label{eq:submanifold}
\{(\bW_1,\dots,\bW_L):\bW_L\cdots \bW_1 = \mathrm{diag}(\sigma_2,\dots,\sigma_{d_L})\}
\end{equation}
of $\prod_{l=1}^L\RB^{(d_l-1)\times (d_{l-1}-1)}$ is in fact a submanifold by the argument of Proposition \ref{prop:regularvalue}. Given $\bw:=(\bW_1,\dots,\bW_L)\in\prod_{l=1}^L\RB^{(d_l-1)\times (d_{l-1}-1)}$, define the linear maps
\begin{align}
\Delta_{01}(\bw):=\sum_{l=1}^L\sigma_1^{2(L-l)/L}\bW^T_{l-1:1}\bW_{l-1:1}\in\RB^{(d_0-1)\times (d_0-1)}
\end{align}
\begin{align}
\Delta_{10}(\bw):=\sum_{l=1}^L\sigma_1^{2(l-1)/L}\bW_{L:l+1}\bW^T_{L:l+1}\in \RB^{(d_L-1)\times (d_L-1)}
\end{align}
\begin{align}
\Delta_{11}(\bw):=\sum_{l=1}^L\bW_{L:l+1}\bW_{L:l+1}^T\otimes \bW_{l-1:1}^T\bW_{l-1:1}:\RB^{(d_L-1)\times (d_0-1)}\rightarrow\RB^{(d_L-1)\times (d_0-1)}
\end{align}
Denote by $\bM$ the intersection of the manifold \eqref{eq:submanifold} with the open set $\{\tw:\max\{\lambda_1(\Delta_{01}(\bw)),\lambda_1(\Delta_{10}(\bw)),\lambda_1(\Delta_{11}(\bw))\}<L\sigma_1^{2-2/L}\}$; as the intersection of a submanifold with an open set, $\bM$ is itself a manifold, and will be the typical fibre of the fibre bundle $F$. To construct the bundle itself, letting $O(d)$ denote the orthogonal group in $d$ dimensions, observe that $\bM$ carries an action of the group $O:=\prod_{l=1}^{L-1}O(d_l-1)$ defined by
\begin{align}\label{eq:oaction}
(Q_1,\dots,Q_{L-1})\cdot(\bW_1,\dots,\bW_L):=(Q_1\bW_1,Q_2\bW_2Q_1^T,\dots,\bW_LQ_{L-1}^T).
\end{align}
Letting $S^{d-1}\subset\RB^d$ denote the sphere, consider the principal $O$-bundle $P\rightarrow \prod_{l=1}^{L-1}S^{d_l-1}$ whose fibre over $(u_1,\dots,u_{L-1})$ is the space of tuples $((u_1,U_1),\dots,(u_{L-1},U_{L-1}))\in \prod_{l=1}^{L-1}S^{d_{l}-1}\times\RB^{d_l\times(d_l-1)}$ where each $U_l$ is an orthonormal frame for the orthogonal complement $u_l^{\perp}$ of $u_l$. Note that $P$ carries the canonical right action
\begin{align}
((u_1,U_1),\dots,(u_{L-1},U_{L-1}))\cdot(Q_1,\dots,Q_{L-1}):=((u_1,U_1Q_1),\dots,(u_{L-1},U_{L-1}Q_{L-1}))
\end{align}
of $O$. Finally, consider the associated bundle
\begin{align}
B:=P\times_{O}\bM\rightarrow \prod_{l=1}^{L-1}S^{d_l-1},
\end{align}
whose total space is the quotient of the product $P\times\bM$ by the group action of $O$ given by
\begin{align}
(Q_1,\dots,Q_{L-1})&\cdot ((u_1,U_1),\dots,(u_{L-1},U_{L-1});\bW_1,\dots,\bW_L)\\&:=((u_1,U_1Q_1),\dots,(u_{L-1},U_{L-1}Q_{L-1});Q_1^T\bW_1,Q_2^T\bW_2Q_1,\dots,\bW_LQ_{L-1}).
\end{align}
Since $P$ and $\bM$ are analytic, $O$ is compact and its action on $P\times\bM$ is free, $B\rightarrow\prod_{l=1}^{L-1}S^{d_l-1}$ is an analytic fibre bundle with typical fibre $\bM$.

\begin{proposition}\label{prop:fibrebundle}
    The map $\phi:B\rightarrow M$ defined by
    \begin{align}
    \phi\big([(u_1,U_1),\dots,(u_{L-1},U_{L-1});\bW_1,\dots,\bW_L]\big):=\bigg([u_l,U_l]\begin{pmatrix} \sigma_1^{1/L} & 0 \\ 0 & \bW_l\end{pmatrix}[u_{l-1},U_{l-1}]^T\bigg)_{l=1}^L,
    \end{align}
    where $[u_0,U_0] := I_{d_0}$ and $[u_L,U_L]:=I_{d_L}$, is an analytic embedding whose range coincides with $F$; consequently, $F$ is an analytic fibre bundle over $\prod_{l=1}^{L-1}S^{d_l-1}$ with typical fibre $\bM$. Moreover, any $w\in F$ may be mapped via an isometry which preserves $f$ to a point of the form
    \begin{align}\label{eq:wnormalform}
    \Bigg(\begin{pmatrix} \sigma_1^{1/L} & 0 \\ 0 & \bW_l\end{pmatrix}\Bigg)_{l=1}^L
    \end{align}
    for some $(\bW_1,\dots,\bW_L)\in \bM$.
\end{proposition}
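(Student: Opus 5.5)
The plan is to first pin down the minimal value of $\lambda_1$ on $M$. Then I would characterise exactly the points that attain it, and finally check that $\phi$ parametrises this set analytically and injectively.

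\textbf{Step 1: a lower bound for $\lambda_1$ on $M$.} Take $w\in M$. On $\RB^{d_L\times d_0}$ the nonzero spectrum of $\nabla^2\ell(w)=Df^TDf$ coincides with that of $Df\,Df^T$, which acts by
\begin{align}
Z\mapsto \sum_{l=1}^L W_{L:l+1}W_{L:l+1}^T\,Z\,W_{l-1:1}^TW_{l-1:1}.
\end{align}
\begin{itemize}[leftmargin=*]
\item I will test this operator on $Z=e_1e_1^T$, using $a=e_1$ and $b=e_1$, the top singular vectors of $\tau$. This gives $\lambda_1(w)\geq\sum_l\|W_{L:l+1}^Ta\|^2\|v_{l-1}\|^2$, where $v_l:=W_{l:1}b$.
\item Since $a^TW_{L:l+1}v_l=a^T\tau b=\sigma_1$, Cauchy--Schwarz gives $\|W_{L:l+1}^Ta\|\geq\sigma_1/\|v_l\|$. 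Hence the $l$th term is at least $\sigma_1^2\|v_{l-1}\|^2/\|v_l\|^2$.
\item AM--GM on these $L$ terms telescopes the product. Using $\|v_0\|=1$ and $v_L=\tau b=\sigma_1a$, this yields $\lambda_1(w)\geq L\sigma_1^{2}\sigma_1^{-2/L}=L\sigma_1^{2-2/L}$.
\end{itemize}

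\textbf{Step 2: the equality case.} Equality requires two things. First, each Cauchy--Schwarz step must be tight, so $W_{L:l+1}^Ta\parallel v_l$. Second, all AM--GM ratios must be equal, so $\|v_l\|=\sigma_1^{l/L}$.

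Writing $v_l=\sigma_1^{l/L}u_l$ with $u_l$ a unit vector, $u_0=e_1$ and $u_L=e_1$, these conditions should give the two-sided relations
\begin{align}
W_lu_{l-1}=\sigma_1^{1/L}u_l,\qquad W_l^Tu_l=\sigma_1^{1/L}u_{l-1}.
\end{align}
The second relation follows by pushing the alignment inductively down from $l=L$. Completing each $u_l$ to an orthonormal basis $[u_l,U_l]$ then puts every $W_l$ in the block form \eqref{eq:wnormalform}, and $(\bW_l)_l$ automatically factorises $\mathrm{diag}(\sigma_2,\dots,\sigma_{d_L})$. Conversely, at any point of this block form the operator $Df\,Df^T$ is block-diagonal with respect to the splitting of $Z$ into its $(1,1)$ entry, its top row, its first column, and its lower-right block. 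The corresponding eigenvalues are $L\sigma_1^{2-2/L}$, those of $\Delta_{01}$, those of $\Delta_{10}$, and those of $\Delta_{11}$ respectively.

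Consequently the point lies in $M\setminus S$ with $\lambda_1=L\sigma_1^{2-2/L}$ precisely when $\bw\in\bM$. Since this value is the global lower bound from Step~1, $F$ (the global minimisers of $\lambda_1$ on $M\setminus S$) is exactly the image of $\phi$. This also yields the normal form \eqref{eq:wnormalform}: the isometry $W_l\mapsto [u_l,U_l]^TW_l[u_{l-1},U_{l-1}]$ preserves $f$ because the outer frames are the identity.

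\textbf{Step 3: $\phi$ is an analytic embedding.}
\begin{itemize}[leftmargin=*]
\item $\phi$ is well defined on the quotient $B$, since the $O$-action cancels in the products $[u_l,U_lQ_l]\,\mathrm{diag}(\cdot,Q_l^T\bW_lQ_{l-1})\,[u_{l-1},U_{l-1}Q_{l-1}]^T$.
\item $\phi$ is analytic because it is polynomial in the entries.
\item $\phi$ is injective because each $u_l$ is recovered from $\phi(\cdot)$ as $\sigma_1^{-l/L}W_{l:1}e_1$, and then $\bW_l$ is recovered up to exactly the $O$-ambiguity.
\item The inverse is continuous, being given by these analytic formulas on the image, so $\phi$ is a homeomorphism onto $F$.
\item $\phi$ is an immersion: a tangent vector killed by $D\phi$ must fix every $u_l$ to first order, and hence is tangent to an $O$-orbit.
\end{itemize}

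I expect the main obstacle to be the equality analysis in Step~2, namely upgrading the one-sided alignment $W_{L:l+1}^Ta\parallel v_l$ into the two-sided block structure at every layer simultaneously. I would handle it by a downward induction on $l$ starting from $W_L^Ta\parallel v_{L-1}$ and using the exact norms $\|v_l\|=\sigma_1^{l/L}$. A secondary point is confirming that the blocks $\Delta_{01}$, $\Delta_{10}$ and $\Delta_{11}$ genuinely decouple from the $(1,1)$ direction, which should follow from a direct expansion of the operator in block coordinates.
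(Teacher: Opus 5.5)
Your proposal is correct and follows essentially the same route as the paper. Both use the test direction $e_1e_1^T$ with Cauchy--Schwarz and AM--GM to get $\lambda_1\geq L\sigma_1^{2-2/L}$, the equality analysis giving the two-sided chain $W_lu_{l-1}=\sigma_1^{1/L}u_l$, $W_l^Tu_l=\sigma_1^{1/L}u_{l-1}$ and hence the block form, and the same well-definedness and injectivity checks for $\phi$. Your explicit block-diagonalisation of $Df\,Df^T$ into the $(1,1)$, $\Delta_{01}$, $\Delta_{10}$, $\Delta_{11}$ pieces, and your immersion/homeomorphism argument, make rigorous steps the paper only calls ``clear''. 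One small point neither you nor the paper states is that $\bM\neq\emptyset$ (e.g.\ via a balanced diagonal factorisation of $\mathrm{diag}(\sigma_2,\dots,\sigma_{d_L})$), which is needed so the bound is actually attained on $M\setminus S$.
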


\begin{proof}
    It must first be demonstrated that $\phi$ is well-defined. Given $(Q_1,\dots,Q_{L-1})\in O$ and $((u_1,U_1),\dots,(u_{L-1},U_{L-1});\bW_1,\dots,\bW_L)\in P\times\bM$, define
    \begin{align}
    ((u_1,U_1'),&\dots,(u_{L-1},U_{L-1}');\bW_1',\dots,\bW_L')\\&:=(Q_1,\dots,Q_{L-1})\cdot ((u_1,U_1),\dots,(u_{L-1},U_{L-1});\bW_1,\dots,\bW_L).
    \end{align}
    Then simply observe that for any $l\in\{1,\dots,L\}$, setting $Q_0 = I_{d_0-1}$ and $Q_L:=I_{d_L-1}$,
    \begin{align}
    [u_l,U_l']\begin{pmatrix}\sigma_1^{1/L} & 0 \\ 0 & \bW_l'\end{pmatrix}[u_{l-1},U_{l-1}'] &= [u_l,U_lQ_l]\begin{pmatrix} \sigma_1^{1/L} & 0 \\ 0 & Q_l^T\bW_l Q_{l-1}\end{pmatrix}[u_{l-1},U_{l-1}Q_{l-1}]^T\\&=[u_l,U_l]\begin{pmatrix} \sigma_1^{1/L} & 0 \\ 0 & \bW_l\end{pmatrix}[u_{l-1},U_{l-1}]^T,
    \end{align}
    so that $\phi$ is well-defined. It is also easy to see that $\phi$ is injective; supposing that $((u_1,U_1),\dots,(u_{L-1},U_{L-1});\bW_1,\dots,\bW_L),((u_1',U_1'),\dots,(u_{L-1}',U_{L-1}');\bW_1',\dots,\bW_L')\in P\times\bM$ satisfy
    \begin{align}
    [u_l,U_l]\begin{pmatrix} \sigma_1^{1/L} & 0 \\ 0 & \bW_l\end{pmatrix}[u_{l-1},U_{l-1}]^T = [u_l',U_l']\begin{pmatrix} \sigma_1^{1/L} & 0 \\ 0 & \bW_l'\end{pmatrix}[u_{l-1}',U_{l-1}']^T,
    \end{align}
    then $u_l = u_l'$ for all $l$,
    \begin{align}
    U_l\bW_lU_{l-1}^T  = U_l'\bW_l'U_{l-1}'^T\Rightarrow \bW_l = (U_l'^TU_l)^T\bW_l'(U_{l-1}'^TU_{l-1})
    \end{align}
    and clearly $U_l = U_l'(U_l'^TU_l)$ for all $l$; thus $\big[(u_1,U_1),\dots,(u_{L-1},U_{L-1});\bW_1,\dots,\bW_L] = [(u_1',U_1'),\dots,(u_{L-1}',U_{L-1}');\bW_1',\dots,\bW_L']$ in $B$. Finally, that $\phi$ is an analytic embedding is clear from its definition via matrix multiplication. 

    It remains to be shown that the range of $\phi$ coincides with $F$. We argue in a similar fashion to \cite{mulayoff2}. We first derive a lower-bound for $\lambda_1\equiv\lambda_1(\nabla^2\ell|_M)\equiv\lambda_1(DfDf^T|_M)$; we then prove that this lower-bound is met on the desired manifold $F$ and prove the normal form.
    
    First, the lower-bound. For any $w = (W_1,\dots,W_L)\in M$, one has 
    \begin{align}
    Df\,Df^T(w) = \sum_{l=1}^LW_{L:l+1}W_{L:l+1}^T\otimes W_{l-1:1}^TW_{l-1:1},
    \end{align}
    so that 
    \begin{align}
    \lambda_1(w) = \sup_{\|Z\|_F = 1}\langle Z,Df\,Df^T(w)[Z]\rangle = \sup_{\|Z\|_F=1}
\sum_{l=1}^L\|W_{L:l+1}^TZW_{l-1:1}^T\|_F^2.
    \end{align}
    Consider in particular $Z:=e_1\,e_1^T$, where the former $e_1$ is the first standard basis vector in $\RB^{d_L}$ and the latter is the first standard basis vector in $\RB^{d_0}$ (these being the top left and right singular vectors of the diagonal $\tau$ respectively). One has
    \begin{align}
    \lambda_1(w)&\geq \sum_{l=1}^L\|W_{L:l+1}^Te_1\|^2\|W_{l-1:1}e_1\|^2\nonumber\\ &=\sum_{l=1}^L\|W_{L:l+1}^Te_1\|^2\,\|W_{l:1}e_1\|^2\frac{\|W_{l-1:1}e_1\|^2}{\|W_{l:1}e_1\|^2}\nonumber\\&\geq \sum_{l=1}^L\langle W_{L:l+1}^Te_1,W_{l:1}e_1\rangle^2\frac{\|W_{l-1:1}e_1\|^2}{\|W_{l:1}e_1\|^2}\nonumber\\& = \sum_{l=1}^L\langle e_1,W_{L:1}e_1\rangle^2\frac{\|W_{l-1:1}e_1\|^2}{\|W_{l:1}e_1\|^2}\nonumber\\&=\sigma_1^2\sum_{l=1}^L\frac{\|W_{l-1:1}e_1\|^2}{\|W_{l:1}e_1\|^2}\nonumber\\&\geq L\sigma_1^2\bigg(\prod_{l=1}^L\frac{\|W_{l-1:1}e_1\|}{\|W_{l:1}e_1\|}\bigg)^{\frac{2}{L}}\\& = L\sigma_1^{2-2/L},\label{eq:lowerbound}
    \end{align}
    where the third line follows from Cauchy-Schwarz and the sixth follows from the inequality between arithmetic and geometric means. We next consider when these estimates are equalities, so that the lower-bound is achieved.

    The Cauchy-Schwarz estimate is an equality precisely when $W_{L:l+1}^Te_1$ is a scalar multiple of $W_{l:1}e_1$ for all $l=1,\dots,L$. Setting $u_0:=e_1\in\RB^{d_0}$ and $u_l:=W_{l:1}u_0/\|W_{l:1}u_0\|$ for $l=1,\dots, L$ so that $u_L = e_1\in\RB^{d_L}$, it follows that $(u_L,u_l)$ is a singular vector pair for $W_{L:l+1}$ with singular value $\sigma_1/\|W_{l:1}u_0\|$, for all $l=1,\dots, L$. Indeed, for any $l=1,\dots,L$, equality in the Cauchy-Schwarz estimate implies that there is a scalar $c_l$ such that $W_{L:l+1}^Tu_L= c_lu_l$. Taking the inner product of both sides with $W_{l:1}u_0$ then gives
    \begin{align}
    c_l\|W_{l:1}u_0\| = u_L^TW_{L:l+1}W_{l:1}u_0 = \sigma_1\Rightarrow c_l = \frac{\sigma_1}{\|W_{l:1}u_0\|}
    \end{align}
    so that
    \begin{equation}\label{eq:align1}
        W_{L:l+1}^Tu_L= \frac{\sigma_1}{\|W_{l:1}u_0\|}u_l,\quad \text{and}\quad  W_{L:l+1}u_l = \frac{1}{\|W_{l:1}u_0\|}W_{L:1}u_0 = \frac{\sigma_1}{\|W_{l:1}u_0\|}u_L.
    \end{equation}
    This further implies that the $u_l$ are a chain of common singular vectors for the $W_l$; indeed, for any $l$ one straightforwardly has
    \begin{align}
    W_lu_{l-1} = W_l\frac{W_{l-1:1}u_0}{\|W_{l-1:1}u_0\|} = \frac{1}{\|W_{l-1:1}u_0\|}W_{l:1}u_0 = \frac{\|W_{l:1}u_0\|}{\|W_{l-1:1}u_0\|}u_l,
    \end{align}
    while \eqref{eq:align1} implies that
    \begin{align}
    W_l^Tu_l = W_l^T\frac{\|W_{l:1}u_0\|}{\sigma_1}W_{L:l+1}^Tu_L = \frac{\|W_{l:1}u_0\|}{\sigma_1}W_{L:l}^Tu_L = \frac{\|W_{l:1}u_0\|}{\|W_{l-1:1}u_0\|}u_{l-1}.
    \end{align}
    
    Setting $s_l:=\|W_{l:1}u_0\|/\|W_{l-1:1}u_0\|$ for these singular values, equality between the arithmetic and geometric means above occurs if and only if the $s_l$ are all equal, with
    \begin{align}
    \prod_{l=1}^Ls_l = \sigma_1\Rightarrow s_l = \sigma_1^{1/L},\quad\forall l\in\{1,\dots,L\}.
    \end{align}
    It follows that for any $(W_1,\dots,W_L)$ achieving the lower-bound \eqref{eq:lowerbound}, with $u_0,\dots,u_L$ as above there must exist orthonormal frames $(U_1,\dots,U_{L-1})$ in the fibre of the principal bundle $P$ over $(u_1,\dots,u_{L-1})$ and, since $w\notin S$, $(\bW_1,\dots,\bW_L)\in\bM$ such that
    \begin{align}
    W_l = [u_l,U_l]\begin{pmatrix} \sigma_1^{1/L} & 0 \\ 0 & \bW_l\end{pmatrix}[u_{l-1},U_{l-1}]^T,\qquad \forall l\in[L].
    \end{align}
    This proves that $F$ is in the range of $\phi$; that every element in the range of $\phi$ meets the lower-bound \eqref{eq:lowerbound} is then clear, thus proving the first claim.

    For the second claim, fix $w\in\mathrm{range}(\phi)$ of the form
    \begin{align}
    \Bigg([u_l,U_l]\begin{pmatrix}\sigma_1^{1/L} & 0 \\ 0 & \bW_l\end{pmatrix}[u_{l-1},U_{l-1}]^T\Bigg)_{l=1}^L,
    \end{align}
    with $[u_0,U_0] = I_{d_0}$ and $[u_L,U_L] = I_{d_L}$. This $w$ may then be transformed into a point of the desired form \eqref{eq:wnormalform} by acting on it via the element $([u_1,U_1]^T,\dots,[u_{L-1},U_{L-1}]^T)$ of $\prod_{l=1}^{L-1}O(d_l)$ according to the 1-higher-dimensional version of \eqref{eq:oaction}.
\end{proof}

With $F$ identified as the fibre bundle $B\rightarrow\prod_{l=1}^{L-1}S^{d_l-1}$, it is relatively easy to characterise the directions normal and tangent to $F$, and in this way to prove that $\lambda_1$ is Morse-Bott along $F$. By the final claim of Proposition \ref{prop:fibrebundle}, it suffices to work at a point of the form \eqref{eq:wnormalform}.

\begin{proposition}\label{prop:tangents}
    At a point $w\in F$ of the form
    \begin{align}
    \Bigg(\begin{pmatrix} \sigma_1^{1/L} & 0 \\ 0 & \bW_l\end{pmatrix}\Bigg)_{l=1}^L
    \end{align}
    as in \eqref{eq:wnormalform}, with $\bar{w}=(\bW_1,\dots,\bW_L)\in\overline{M}$, given $a:=(a_l)_{l=1}^{L-1}\in \RB^{L-1}$, $b:=(b_l)_{l=1}^{L-1},c:=(c_l)_{l=1}^{L-1}\in\prod_{l=1}^{L-1}\RB^{d_l-1}$ and $D:=(D_l)_{l=1}^{L}\in T_{\bw}\bM$, denote
    \begin{align}
        v_1(w;a):=\Bigg(\begin{pmatrix} \sigma_1^{1/L}(a_l-a_{l-1}) & 0 \\ 0 & 0\end{pmatrix}\Bigg)_{l=1}^{L},
    \end{align}
    \begin{align}
        v_2(w;b,c):=\Bigg(\begin{pmatrix} 0 & \big(\bW_l^Tb_l - \sigma_1^{1/L}b_{l-1}\big)^T \\ \sigma_1^{1/L}c_l-\bW_lc_{l-1} & 0\end{pmatrix}\Bigg)_{l=1}^{L}
    \end{align}
    and
    \begin{align}
        v_3(w;D):=\bigg(\begin{pmatrix} 0 & 0 \\ 0 & D_l\end{pmatrix}\bigg)_{l=1}^{L},
    \end{align}
    where $a_0,b_0,c_0$ and $a_L,b_L,c_L$ are taken to be zero. Then all such $v_1(w;a),v_2(w;b,c),v_3(w;D)$ span the tangent space $T_wM$. Moreover, the tangent space to $F$ at $w$ is
    \begin{equation}
        T_wF = \mathrm{span}\bigg\{v_2(w,\xi,-\xi),v_3(w,D):\xi\in\prod_{l=1}^{L-1}\RB^{d_l-1},\,D\in T_{\bw}\bM\bigg\}
    \end{equation}
    and its orthogonal complement $\nu_wF$ in $T_wM$ is
    \begin{equation}
        \nu_wF = \mathrm{span}\bigg\{v_1(w,a),v_2(w,b,c):a\in\RB^{L-1},\,b,c\in\prod_{l=1}^{L-1}\RB^{d_l-1}\text{ s.t. }K_1(w)[b] = K_2(w)[c]\bigg\},
    \end{equation}
    where
    \begin{equation}
        K_1(w)[b]:=\bigg(\big(\sigma_1^{2/L}I_{d_l-1}+\bW_l\bW_l^T\big)b_l - \sigma_1^{1/L}\bW_lb_{l-1} - \sigma_1^{1/L}\bW_{l+1}^Tb_{l+1}\bigg)_{l=1}^{L-1}
    \end{equation}
    and
    \begin{equation}
        K_2(w)[c]:=\bigg(\big(\sigma_1^{2/L}I_{d_l-1}+\bW_{l+1}^T\bW_{l+1}\big)c_l-\sigma_1^{1/L}\bW_lc_{l-1}-\sigma_1^{1/L}\bW_{l+1}^Tc_{l+1}\bigg)_{l=1}^{L-1}.
    \end{equation}
\end{proposition}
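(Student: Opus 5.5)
We verify everything directly in block coordinates at the block-diagonal point $w$. First we characterise $T_wM$, then identify $T_wF$ by differentiating the embedding $\phi$ of Proposition \ref{prop:fibrebundle}, and finally take orthogonal complements.

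\textbf{Step 1: the three families lie in $T_wM$.} By Propositions \ref{prop:decomposition} and \ref{prop:regularvalue}, $T_wM=\ker Df(w)$, with $Df(w)[\xi]=\sum_l W_{L:l+1}\xi_lW_{l-1:1}$. At $w$ all partial products are block diagonal, $W_{L:l+1}=\mathrm{diag}(\sigma_1^{(L-l)/L},\bW_{L:l+1})$ and $W_{l-1:1}=\mathrm{diag}(\sigma_1^{(l-1)/L},\bW_{l-1:1})$. Hence $Df(w)[\xi]$ splits into four blocks, each involving only the matching block of every $\xi_l$.

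For $v_1(w;a)$, the top-left block is $\sigma_1\sum_l(a_l-a_{l-1})$. This telescopes to $a_L-a_0=0$.

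For $v_2(w;b,c)$, the top-right block is $\sum_l\sigma_1^{(L-l)/L}(\bW_l^Tb_l-\sigma_1^{1/L}b_{l-1})^T\bW_{l-1:1}$. Writing $b_l^T\bW_{l:1}$ for $(\bW_l^Tb_l)^T\bW_{l-1:1}$, consecutive terms cancel and the sum telescopes to zero. The bottom-left block telescopes in the same way.

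For $v_3(w;D)$, the bottom-right block is $D\bar f(\bw)[D]=0$, where $\bar f$ denotes the lower-dimensional factorisation map of \eqref{eq:submanifold}.

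\textbf{Step 2: the three families span $T_wM$.} Let $\xi\in\ker Df(w)$ and split each $\xi_l$ into a scalar $s_l$, a row $r_l$, a column $k_l$ and a lower block $D_l$. The kernel condition decouples into four independent block equations.

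The scalar equation $\sum_l s_l=0$ is solved exactly by $s_l=\sigma_1^{1/L}(a_l-a_{l-1})$, taking $a_l$ as partial sums.

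For the row equation, the map $b\mapsto r(b)$ is injective. Indeed, from $r_L=-\sigma_1^{1/L}b_{L-1}^T$ one reads off $b_{L-1}$, and then recursively $b_{L-2},\dots,b_1$, using $\sigma_1\neq 0$. The solution space of the single linear constraint on the rows has dimension $\sum_{l=1}^L(d_{l-1}-1)-(d_0-1)=\sum_{l=1}^{L-1}(d_l-1)$, once we know the constraint is surjective. Surjectivity follows from the $l=1$ term, since $\bW_{L:2}$ has full row rank. This dimension equals the number of parameters in $b$, so the $v_2$ rows fill the whole solution space. The column equation is handled identically using $c$.

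The lower block equation is exactly $D\in\ker D\bar f(\bw)=T_{\bw}\bM$.

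\textbf{Step 3: $T_wF$.} Differentiate $\phi$ at the point with identity frames. Moving the spheres by $u_l(t)=\cos(t)e_1+\sin(t)(0,\xi_l)$, with the frames $U_l$ rotated along, gives skew generators $\Omega_l=\begin{pmatrix}0&-\xi_l^T\\ \xi_l&0\end{pmatrix}$. The corresponding velocity is $\Omega_lW_l-W_l\Omega_{l-1}$, and a two-line computation identifies it with $v_2(w;-\xi,\xi)$, which is proportional to $v_2(w;\xi,-\xi)$. Rotations inside the $U_l$ are absorbed by the $O$-quotient. Moving along the fibre gives the $v_3(w;D)$.

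Since $\phi$ is an embedding, these vectors span $T_wF$. As a consistency check, the dimension count $\sum_{l=1}^{L-1}(d_l-1)+\dim\bM$ matches base plus fibre.

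\textbf{Step 4: $\nu_wF$.} By block supports, $v_1$ is orthogonal to both $v_2$ and $v_3$, and $v_2$ is orthogonal to $v_3$. So a general tangent vector $v_1(a)+v_2(b,c)+v_3(D)$ is orthogonal to all of $T_wF$ if and only if two conditions hold. First, $D\perp T_{\bw}\bM$, which forces $D=0$. Second, $\langle v_2(b,c),v_2(\xi,-\xi)\rangle=0$ for all $\xi$.

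Expanding the second condition gives
\[
\sum_l\langle \bW_l^Tb_l-\sigma_1^{1/L}b_{l-1},\,\bW_l^T\xi_l-\sigma_1^{1/L}\xi_{l-1}\rangle-\sum_l\langle\sigma_1^{1/L}c_l-\bW_lc_{l-1},\,\sigma_1^{1/L}\xi_l-\bW_l\xi_{l-1}\rangle .
\]
Regrouping by $\xi_l$ (the adjoint computation) turns this into $\langle K_1(w)[b]-K_2(w)[c],\xi\rangle$. This vanishes for all $\xi$ if and only if $K_1(w)[b]=K_2(w)[c]$, which is the stated description of $\nu_wF$.

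The main obstacle is the index bookkeeping in this last regrouping, namely obtaining exactly the diagonal terms $\sigma_1^{2/L}I+\bW_l\bW_l^T$ versus $\sigma_1^{2/L}I+\bW_{l+1}^T\bW_{l+1}$ and the boundary conventions at $l=1$ and $l=L-1$. The surjectivity and dimension count in Step 2 also need care.
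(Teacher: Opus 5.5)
Your proposal is correct and follows essentially the paper's route: the three families span $T_wM$, $T_wF$ is read off from the bundle map $\phi$ as base directions $v_2(w;\xi,-\xi)$ plus fibre directions $v_3(w;D)$, and $\nu_wF$ comes from the adjoint computation that produces $K_1,K_2$. The differences are minor: you check membership in $T_wM$ by telescoping $Df(w)$ blockwise rather than by differentiating the paper's $f$-preserving curves $W_l\mapsto e^{tA_l}W_le^{-tA_{l-1}}$, and you carry out the dimension count (block decoupling plus injectivity) that the paper only asserts.

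One small slip in Step 2: the row constraint is surjective simply because its $l=1$ term is $\sigma_1^{(L-1)/L}r_1$ (as $\bW_{0:1}$ is the identity), so the full row rank of $\bW_{L:2}$ is not the relevant fact there. For the column constraint, the $l=L$ term plays the same role, since $\bW_{L:L+1}$ is the identity.
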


\begin{proof}
    The $v_3(w,D)$ are elements of $T_wM$ since they are tangents to the fibre in the bundle $F$. That the $v_1(w;a),v_2(w;b,c)$ are elements of $T_wM$ follows from the fact that $(a,b,c)$ parametrise the family
    \begin{align}
    A:=\Bigg(A_l:=\begin{pmatrix} a_l & b_l^T \\ c_l & 0\end{pmatrix}\Bigg)_{l=1}^{L-1}\in\prod_{l=1}^{L-1}\RB^{d_l\times d_l},
    \end{align}
    which maps injectively into $M$ via plots about $w = (W_1,\dots,W_L)$ defined by
    \begin{align}
    \varphi(A_1,\dots,A_{L-1}):=\big(\exp(A_1)W_1,\exp(A_2)W_2\exp(-A_1),\dots,W_L\exp(-A_{L-1})\big).
    \end{align}
    Thus the derivatives at zero of the curves
    \begin{align}
    t\mapsto \big(\exp(tA_1)W_1,\exp(tA_2)W_2\exp(-tA_1),\dots,W_L\exp(-tA_{L-1})\big)
    \end{align}
    parametrised by $\prod_{l=1}^{L-1}\RB^{d_l\times d_l}$ form the tangent space $T_wM$. Fixing $(a,b,c)$ defining $A\in\prod_{l=1}^{L-1}\RB^{d_l\times d_l}$, this tangent vector is given by
    \begin{align}
    \Bigg(\begin{pmatrix} \sigma_1^{1/L}(a_l-a_{l-1}) & \big(\bW_l^Tb_l-\sigma_1^{1/L}b_{l-1}\big)^T \\ \sigma_1^{1/L}c_l-\bW_lc_{l-1} & 0\end{pmatrix}\Bigg)_{l=1}^{L},
    \end{align}
    with $a_0,b_0,c_0$ equal to zero. The family of all $v_1(w;a), v_2(w;b,c),v_3(w;D)$ determine $T_wM$ by dimension count.

    We now turn to the tangent and normal spaces to $F$, exploiting the fibre bundle structure thereof worked out in Proposition \ref{prop:fibrebundle}. At $w\in F$, the tangent directions are either tangent to the base $\prod_{l=1}^{L-1}S^{d_l-1}$ of the bundle $B$, or tangent to its fibre. At $w$ in the normal form \eqref{eq:wnormalform}, tangents to the base correspond to rotating the first standard basis vector into any of the others in each of the $L-1$ factors; these tangents are precisely of the form $v_2(w;\xi,-\xi)$ for any $\xi\in\prod_{l=1}^{L-1}\RB^{d_l-1}$. On the other hand, tangents to the fibre are precisely those which act solely on the bottom-right block of each of the $L$ factors, and are thus precisely of the form $v_3(w;D)$ for any $D\in T_{\bw}\bM$.

    Finally, $\nu_wF$ consists precisely of those elements of $T_wM$ that are orthogonal to $T_wF$ with respect to the factor-wise Frobenius inner product. Since $v_3(w;D)\in T_wF$ for all $D\in T_{\bw}\bM$, $\nu_wF$ must be spanned only by some subset of $v_1(w;a)$ and $v_2(w;b,c)$ for $a\in\RB^{L-1}$ and $b,c\in\prod_{l=1}^{L-1}\RB^{d_l-1}$. Blockwise orthogonality of $v_1(w;a)$ to $T_wF$ imply that every such vector is in $\nu_wF$; these directions correspond to changes of the top singular values of the factors away from $\sigma_1^{1/L}$. To determine which of the $v_2(w;b,c)$ lie in $\nu_wF$, fix $b,c,\xi\in\prod_{l=1}^{L-1}\RB^{d_l-1}$; orthogonality of $v_2(w;b,c)$ to $v_2(w;\xi,-\xi)$ requires
    \begin{align}
        0 &= \sum_{l=1}^{L}\tr\big(v_2(w;\xi,-\xi)^T_lv_2(w;b,c)_l\big)\\&=\sum_{l=1}^{L}\tr\Bigg(\begin{pmatrix}0 & \sigma_1^{1/L}\xi_l^T - \xi_{l-1}^T\bW_l^T \\ -\bW_l^T\xi_l + \sigma_1^{1/L}\xi_{l-1}\end{pmatrix}\begin{pmatrix} 0 & b_l^T\bW_l - \sigma_1^{1/L}b_{l-1}^T \\ \sigma_1^{1/L}c_l-\bW_lc_{l-1} & 0\end{pmatrix}\Bigg)\\&=\sum_{l=1}^{L-1}\xi_l^T\big(K_2(w)[c] - K_1(w)[b]\big)
    \end{align}
    where $\xi_L,b_L,c_L = 0$. Since this must hold for all $\xi\in\prod_{l=1}^{L-1}\RB^{d_l-1}$, it follows that $K_1(w;b) = K_2(w;c)$. This concludes the proof.
\end{proof}

Having determined $\nu_wF$, we may now move to proving that $\lambda_1$ is Morse-Bott along $F$, i.e. that the restriction of $\nabla^2_M\lambda_1$ to $\nu F$ is positive-definite. We first recall the following lemma.

\begin{lemma}\label{lem:secondorderperturbation}[\cite[3.2.2]{eigenvalue_perturbation}]
    Let $t\mapsto A(t)$ be a $C^2$ family of symmetric matrices. Assume that the top eigenvalue function $\lambda_1:t\mapsto \lambda_1(A(t))$ has $\lambda_1(0)$ being simple with corresponding eigenvector $u_1(0)$. Then:
    \begin{align}
    (\lambda_1\circ A)''(0) = u_1(0)^TA''(0)u_1(0) + 2u_1(0)^TA'(0)^T\big(\lambda_1(0)I-A(0)\big)|_{u_1(0)^{\perp}}^{-1}A'(0)u_1(0),
    \end{align}
    where $(\lambda_1(0)I-A(0))|_{u_1(0)^{\perp}}$ refers to the restriction of $\lambda_1(0)I-A(0)$ to the subspace $u_1(0)^{\perp}$, on which the inverse makes sense.
\end{lemma}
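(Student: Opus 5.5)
The plan is standard analytic perturbation theory for a simple eigenvalue. Since $\lambda_1(0)$ is simple, the implicit function theorem applied to
\begin{align}
G(t,\lambda,u) := \big(A(t)u-\lambda u,\ \tfrac{1}{2}(\|u\|^2-1)\big)
\end{align}
yields $C^2$ branches $\lambda_1(t)$ and $u_1(t)$ with $A(t)u_1(t)=\lambda_1(t)u_1(t)$ and $\|u_1(t)\|=1$ near $t=0$. The Jacobian in $(\lambda,u)$ at $t=0$ is invertible exactly because $\lambda_1(0)I-A(0)$ is invertible on $u_1(0)^{\perp}$, which is where simplicity is used. Continuity of the eigenvalues and the spectral gap ensure that this branch remains the top eigenvalue for small $t$.

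Next I differentiate the eigen-equation once at $t=0$:
\begin{align}
A'u_1 + Au_1' = \lambda_1'u_1 + \lambda_1u_1'.
\end{align}
Pairing with $u_1$ and using symmetry of $A$ gives $\lambda_1'(0)=u_1^TA'u_1$. Differentiating the normalisation shows $u_1^Tu_1'=0$, so $u_1'\in u_1^{\perp}$. Projecting the equation onto $u_1^{\perp}$ then gives
\begin{align}
u_1'(0)=(\lambda_1 I-A)|_{u_1^{\perp}}^{-1}P_{u_1^{\perp}}A'u_1,
\end{align}
where $P_{u_1^{\perp}}$ is the orthogonal projection onto $u_1^{\perp}$.

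For the second derivative, I differentiate $\lambda_1(t)=u_1(t)^TA(t)u_1(t)$ twice:
\begin{align}
\lambda_1'' = u_1^TA''u_1 + 4u_1'^TA'u_1 + 2u_1''^TAu_1 + 2u_1'^TAu_1'.
\end{align}
Since $Au_1=\lambda_1u_1$, the third term equals $2\lambda_1u_1^Tu_1''$. Differentiating the normalisation twice gives $u_1^Tu_1''=-\|u_1'\|^2$, so the last two terms combine to
\begin{align}
-2u_1'^T(\lambda_1I-A)u_1'.
\end{align}
Substituting the expression for $u_1'$ shows that $u_1'^T(\lambda_1 I-A)u_1'$ and $u_1'^TA'u_1$ are both equal to the quadratic form
\begin{align}
Q:=(A'u_1)^T(\lambda_1I-A)|_{u_1^{\perp}}^{-1}(A'u_1),
\end{align}
where the projection can be dropped because the inverse is understood on $u_1^{\perp}$. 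The total is therefore $u_1^TA''u_1+4Q-2Q=u_1^TA''u_1+2Q$, as claimed. The transpose on $A'(0)^T$ in the statement is harmless because $A'(0)$ is symmetric.

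The main obstacle is the regularity step: justifying that $u_1$ is $C^2$ when $A$ is only $C^2$. The implicit function theorem directly gives $C^2$ branches, since $G$ is $C^2$ in $t$ and smooth in $(\lambda,u)$. If needed, I would alternatively write $\lambda_1(t)$ through the Riesz projector $\frac{1}{2\pi i}\oint (z-A(t))^{-1}\,dz$, which is manifestly $C^2$. Everything else is routine bookkeeping.
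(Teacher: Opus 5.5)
Your proof is correct. The paper gives no argument for this lemma; it imports it from the cited perturbation-theory reference, so your derivation is a self-contained replacement rather than an alternative to a proof in the text. You obtain a $C^2$ eigenpair branch from the implicit function theorem, then differentiate $Au_1=\lambda_1u_1$ once and $\lambda_1=u_1^TAu_1$ twice.

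The Jacobian check is right: pairing with $u_1$ kills $\delta\lambda$, and simplicity then kills $\delta u\in u_1^{\perp}$. The bookkeeping also checks out. With $v:=P_{u_1^{\perp}}A'u_1$ and $R$ the inverse of $\lambda_1I-A$ on $u_1^{\perp}$, both $(u_1')^TA'u_1$ and $(u_1')^T(\lambda_1I-A)u_1'$ equal $v^TRv$, giving $u_1^TA''u_1+4Q-2Q$.

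Your approach also makes explicit two points the statement leaves implicit:
\begin{itemize}
\item The restricted inverse must be read as $(\lambda_1I-A)|_{u_1^{\perp}}^{-1}P_{u_1^{\perp}}$, which is how you interpret it.
\item The formula holds only for $\|u_1(0)\|=1$. The right-hand side scales with $\|u_1(0)\|^2$ while the left does not. Your normalisation constraint builds this in, and it is met in the paper's application, where $u_1(0)=E_{00}$ has unit Frobenius norm.
\end{itemize}
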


We also require some additional notation. Given $w\in F$ of the form \eqref{eq:wnormalform}, define $\Gamma_{01}(w):\prod_{l=1}^{L-1}\RB^{d_l-1}\rightarrow\RB^{d_0-1}$ and $\Gamma_{10}(w):\prod_{l=1}^{L-1}\RB^{d_l-1}\rightarrow\RB^{d_L-1}$ by
\begin{align}
\Gamma_{01}(w)[z]:=\sum_{l=1}^{L-1}\sigma_1^{(L-l-1)/L}\bW_{l:1}^Tz_l,\qquad \Gamma_{10}(w)[z]:=\sum_{l=1}^{L-1}\sigma_1^{(l-1)/L}\bW_{L:l+1}z_l.
\end{align}

\begin{proposition}\label{prop:morsebott}
    Let $w\in F$ be of the form \eqref{eq:wnormalform}. With $\lambda_1:=\lambda_1(Df^TDf)$, the subspaces
    \begin{align}
    V_1(w):=\mathrm{span}\bigg\{v_1(w;a):a\in\RB^{L-1}\bigg\},
    \end{align}
    \begin{align}
    V_2(w):=\mathrm{span}\bigg\{v_2(w;b,c):b,c\in\prod_{l=1}^{L-1}\RB^{d_l-1}\text{ s.t. }K_1(w)[b] = K_2(w)[c]\bigg\}
    \end{align}
    of $\nu_wF$ are invariant under $\nabla^2_M\lambda_1(w)$, with
    \begin{align}
    \mathrm{spec}(\nabla^2_M\lambda_1(w)|_{V_1(w)}) = 4\sigma_1^{2-4/L},
    \end{align}
    \begin{align}
    \mathrm{spec}(\nabla^2_M\lambda_1(w)|_{V_2(w)}) = \text{solutions $\mu(w)$ of generalised eigenproblem }H(w)v = \mu(w) G(w)v,
    \end{align}
    where, suppressing evaluation at $w$,
    \begin{align}
    H:=2\sigma_1^{2-2/L}\big(I_{\prod_{l=1}^{L-1}\RB^{d_l-1}} + \Gamma_{01}^T(\lambda_1I_{d_0-1}-\Delta_{01})^{-1}\Gamma_{01} + \Gamma_{10}^T(\lambda_1I_{d_L-1}-\Delta_{10})^{-1}\Gamma_{10}\big)
    \end{align}
    and
    \begin{align}
    G:=(K_1^{-1}+K_2^{-1})^{-1}
    \end{align}
    are positive-definite. Consequently, $\lambda_1$ is Morse-Bott along $F$.
\end{proposition}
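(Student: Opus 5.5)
We will compute $\nabla^2_M\lambda_1(w)$ on $\nu_wF$ by differentiating $\lambda_1$ twice along curves in $M$ through $w$, namely the exponential plots $\varphi$ from the proof of Proposition \ref{prop:tangents}. Since $w\in F$ is a critical point of $\lambda_1|_M$, the Riemannian Hessian agrees with the ordinary second derivative along any $C^2$ curve in $M$ through $w$; no connection terms appear. Along the curve $t\mapsto\varphi(tA)$ we will apply Lemma \ref{lem:secondorderperturbation} to $A(t)=Df\,Df^T(\varphi(tA))=\sum_l W_{L:l+1}W_{L:l+1}^T\otimes W_{l-1:1}^TW_{l-1:1}$. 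At $w$ of the form \eqref{eq:wnormalform}, the top eigenvector is $u_1=e_1e_1^T$ with eigenvalue $L\sigma_1^{2-2/L}$. On the complement of $u_1$, the block structure of \eqref{eq:wnormalform} splits $\RB^{d_L\times d_0}$ into the $(1,0)$, $(0,1)$ and $(1,1)$ blocks. On these blocks $DfDf^T$ acts as $\sigma_1^{2-2/L}$-weighted copies of $\Delta_{10}$, $\Delta_{01}$ and $\Delta_{11}$, so the defining inequality of $\bM$ makes $\lambda_1I-A(0)$ invertible there.

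\textbf{$V_1$ directions.} For $A_l=\mathrm{diag}(a_l,0)$ the curve only rescales the top singular values of the factors, to $s_l=\sigma_1^{1/L}e^{t(a_l-a_{l-1})}$. Their product stays equal to $\sigma_1$, and the factors stay block diagonal. Hence $A'(0)u_1\parallel u_1$, the resolvent term vanishes, and $\lambda_1(t)=\sigma_1^2\sum_l s_l^{-2}\prod_{k<l}\dots$, which is exactly the expression appearing in the AM–GM step \eqref{eq:lowerbound}. Differentiating twice gives a quadratic form in $a$. We expect it to equal $4\sigma_1^{2-4/L}$ times the Frobenius norm squared of $v_1(w;a)$, whose squared norm is $\sigma_1^{2/L}\sum_l(a_l-a_{l-1})^2$. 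This gives the constant eigenvalue $4\sigma_1^{2-4/L}$. Because the curve preserves the block structure, the mixed second derivative against $V_2$ vanishes, which gives invariance of $V_1$ and $V_2$.

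\textbf{$V_2$ directions.} For off-diagonal $A_l$ (parameters $b,c$), we have $u_1^TA''(0)u_1=\partial_t^2\sum_l\|W_{L:l+1}^Te_1\|^2\|W_{l-1:1}e_1\|^2$. This yields the leading term $2\sigma_1^{2-2/L}\|\cdot\|^2$. Meanwhile $A'(0)u_1$ lands in the $(1,0)$ and $(0,1)$ blocks, with components $\Gamma_{10}$ and $\Gamma_{01}$ applied to the combination $b+c$ (or $b$, $c$ separately). The resolvent term of Lemma \ref{lem:secondorderperturbation} then produces $\Gamma^T(\lambda_1-\Delta)^{-1}\Gamma$. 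Together these give $H$.

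\textbf{Metric and Morse–Bott conclusion.} The Hessian is a quadratic form on the parameters, whereas the eigenvalues are taken with respect to the metric induced from $T_wM$. So we need the Gram form of $v_2(w;b,c)$ restricted to the constraint $K_1b=K_2c$. Writing $z$ for the natural parameter and eliminating $b,c$ via $K_1b=K_2c=:\cdot$, we expect a computation like the one in the orthogonality calculation of Proposition \ref{prop:tangents} to give the norm form $z^T(K_1^{-1}+K_2^{-1})^{-1}z=z^TGz$. Positivity then follows termwise. $K_1$ and $K_2$ are block tridiagonal Gram-type operators, which one can write as $B^TB$ with $B$ injective, so they are positive definite and hence so is $G$. $H\succeq 2\sigma_1^{2-2/L}I\succ0$ because the resolvents are positive on $\bM$. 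Therefore every generalised eigenvalue is positive, and $\nabla^2_M\lambda_1|_{\nu F}\succ0$.

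The main obstacle is the bookkeeping in the $V_2$ second variation. We must correctly identify the parameter on which $A'(0)u_1$ depends, check that the $A''$ term together with the curvature of the $\exp$-plot gives exactly the identity part of $H$, and match this with the metric $G$ rather than the naive parameter norm.
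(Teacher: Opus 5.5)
Your computational route is the paper's. It uses the same exponential plots $t\mapsto(\exp(tA_l)W_l\exp(-tA_{l-1}))_l$, the same application of Lemma~\ref{lem:secondorderperturbation} at $u_1=e_1e_1^T$ with the resolvents $(\lambda_1I-\Delta_{01})^{-1}$ and $(\lambda_1I-\Delta_{10})^{-1}$, and the same reduction of the metric on $V_2$ to $\langle b,K_1b\rangle+\langle c,K_2c\rangle=\langle z,Gz\rangle$ under $K_1b=K_2c$.

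Two small corrections. On $u_1^\perp$, $Df\,Df^T(w)$ acts blockwise as exactly $\Delta_{01}$, $\Delta_{10}$, $\Delta_{11}$, since the powers of $\sigma_1$ are already built into their definitions; that is why the defining inequality of $\bM$ applies verbatim. Along the $V_1$ curve, $\lambda_1$ is simply $\sigma_1^2\sum_l s_l^{-2}$.

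\textbf{Invariance of $V_1$ and $V_2$.} This is the step you argue differently.
\begin{itemize}
\item The paper uses the involutive isometry $\Theta(W)_l=J_lW_lJ_{l-1}$ with $J_l=\mathrm{diag}(1,-I_{d_l-1})$. It fixes $w$ and preserves $f$ and $\tau$, hence $\lambda_1$. So $\nabla^2_M\lambda_1(w)$ commutes with $D\Theta(w)$, whose $\pm1$ eigenspaces in $\nu_wF$ are $V_1$ and $V_2$. This needs no computation, and it is reused later to get $D\alpha_\eta(w)|_{V_2(w)}=0$.
\item Your claim that the mixed Hessian on $V_1\times V_2$ vanishes ``because the curve preserves the block structure'' is true, but you only assert it. To prove it, take $\psi(s,t)=\exp(tA^{(2)})\cdot W(s)$, where $W(s)$ is the block-diagonal $V_1$ curve. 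At each $W(s)$, $e_1e_1^T$ is still the simple top eigenvector. First-order perturbation then expresses $\partial_t(\lambda_1\circ\psi)(s,0)$ through $e_1^T(A^{(2)}_l+A^{(2)T}_l)e_1=0$. Hence $\partial_t(\lambda_1\circ\psi)(s,0)\equiv0$, and so $\partial_s\partial_t(\lambda_1\circ\psi)(0,0)=0$. This is the infinitesimal form of the $\Theta$-parity, since every block-diagonal point is $\Theta$-fixed.
\item You also need that $\nabla^2_M\lambda_1(w)$ maps $\nu_wF$ into itself. This holds because the Hessian annihilates $T_wF$, as $F$ consists of critical points.
\end{itemize}

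\textbf{Positivity.} Here your proposal is more complete than the paper, which only asserts $H,G\succ0$.
\begin{itemize}
\item $\langle b,K_1b\rangle=\sum_l\|\bW_l^Tb_l-\sigma_1^{1/L}b_{l-1}\|^2$ vanishes only at $b=0$, by peeling off $l=L,L-1,\dots$.
\item The same holds for $K_2$, peeling from $l=1$ upward.
\item $H\succeq 2\sigma_1^{2-2/L}I$.
\end{itemize}

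\textbf{The point left open.} Your hedge ``(or $b$, $c$ separately)'' must be settled, because $H$ and $G$ act on $z$-space. The answer is $z=b+c$.
\begin{itemize}
\item The first-order terms of $W_{l-1:1}(t)^TW_{l-1:1}(t)$ and $W_{L:l+1}(t)W_{L:l+1}(t)^T$ involve only $A_l+A_l^T$, whose off-diagonal blocks are $z_l$ and $z_l^T$. This gives $A'(0)u_1$ with blocks $\sigma_1^{1-1/L}\Gamma_{01}[z]^T$ and $-\sigma_1^{1-1/L}\Gamma_{10}[z]$.
\item The $(1,1)$ second-order entries are $b_{l-1}^Tc_{l-1}+\|c_{l-1}\|^2$ and $b_l^Tc_l+\|b_l\|^2$, each weighted by $\sigma_1^{2-2/L}$. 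Individually they depend on $b$ and $c$ separately. After reindexing they sum to $\sigma_1^{2-2/L}\|z\|^2$, so $u_1^TA''(0)u_1=2\sigma_1^{2-2/L}\|z\|^2$.
\item A cleaner a priori reason: $v_2(w;\xi,-\xi)\in T_wF$ lies in the kernel of the Hessian. So $(b,c)\mapsto\nabla^2_M\lambda_1(w)[v_2(w;b,c),v_2(w;b,c)]$ is invariant under $(b,c)\mapsto(b+\xi,c-\xi)$.
\end{itemize}
Finally, there is no separate ``curvature of the exp-plot'' term to reconcile. The $\tfrac12t^2A_l^2$ terms of the exponentials already sit inside $A''(0)$, and the curve's acceleration drops out at a critical point, as you yourself noted.
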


\begin{proof}
    We first demonstrate invariance of the subspaces $V_1$ and $V_2$. Consider the tuple
    \begin{align}
    J:=(J_l)_{l=0}^L:=\Bigg(\begin{pmatrix} 1 & 0 \\ 0 & -I_{d_l-1}\end{pmatrix}\Bigg)_{l=0}^L\in\prod_{l=0}^L\RB^{d_l\times d_l}.
    \end{align}
    This $J$ acts as an involutive isometry
    \begin{align}\label{eq:Theta}
    \Theta: (W_1,\dots,W_L) \mapsto \big(J_1W_1J_0,\dots,J_LW_LJ_{L-1}\big)
    \end{align}
    satisfying $f(\Theta(w)) = J_Lf(w)J_0$ with $J_L\tau J_0= \tau$, and consequently acts as an isometry of $M$ preserving $\ell$ and $\lambda_1$. Hence $\nabla^2_M\lambda_1$ is equivariant with respect to $\Theta$ in the sense that
    \begin{align}
    D\Theta(w)\nabla^2_M\lambda_1(w)[v] = \nabla^2_M\lambda_1(\Theta(w))[D\Theta(w)[v]].
    \end{align}
    Since $w$ of the form \eqref{eq:wnormalform} is fixed by $\Theta$, this equivariance reduces to commutativity:
    \begin{align}
    D\Theta(w)\cdot \nabla^2_M\lambda_1(w) = \nabla^2_M\lambda_1(w)\cdot D\Theta(w).
    \end{align}
    It is moreover easily checked that $D\Theta(w)$ preserves $\nu_wF$. It thus follows that $\nabla^2_M\lambda_1(w)$ preserves the $+1$ and $-1$ eigenspaces of $D\Theta(w)|_{\nu_wF}$, which are precisely $V_1(w)$ and $V_2(w)$ respectively.

    Having demonstrated invariance of $V_1$ and $V_2$, we now compute the spectrum of $\nabla^2_M\lambda_1(w)$ restricted to each of these subspaces. We first consider $V_1(w)$. Fix $a\in\RB^{L-1}$, and consider the curve
    \begin{align}
    \gamma:t&\mapsto \Bigg(\begin{pmatrix}\exp(t(a_l-a_{l-1}))\sigma_1^{1/L} & 0 \\ 0 & \bW_l\end{pmatrix}\Bigg)_{l=1}^L,
    \end{align}
    where $a_L = a_0 = 0$. One then computes
    \begin{align}
        \lambda_1(\gamma(t)) &=\lambda_1(Df\,Df^T(\gamma(t)))\\&=\sum_{l=1}^L\prod_{m\neq l}\sigma_1^{2/L}\exp(2t(a_m-a_{m-1})).
    \end{align}
    Since $\gamma(t)\in M$ for all $t$, one has $\prod_{l=1}^L\sigma_1^{1/L}\exp(t(a_l-a_{l-1})) = \sigma_1$, implying that $\prod_{m\neq l}\sigma_1^{1/L}\exp(t(a_m-a_{m-1})) = \sigma_1/\big(\sigma_1^{1/L}\exp(t(a_l-a_{l-1}))\big)$, so that
    \begin{align}
    \lambda_1(\gamma(t)) &= \sum_{l=1}^L\frac{\sigma_1^2}{\sigma_1^{2/L}\exp(2t(a_l-a_{l-1}))} = \sigma_1^{2-2/L}\sum_{l=1}^L\exp\big(-2t(a_l-a_{l-1})\big).
    \end{align}
    Differentiating twice and evaluating at $t=0$ then gives
    \begin{align}
    (\lambda_1\circ\gamma)''(0)= 4\sigma_1^{2-2/L}\sum_{l=1}^L(a_l-a_{l-1})^2.
    \end{align}
    Finally, noting that $\|v_1(w;a)\|^2 =\sigma_1^{2/L}\sum_{l=1}^L(a_l-a_{l-1})^2$, one deduces that
    \begin{align}
    \frac{\langle v_1(w;a),\nabla^2_M\lambda_1(w)v_1(w;a)\rangle}{\|v_1(w;a)\|^2} = \frac{1}{\|v_1(w,a)\|^2}(\lambda_1\circ\gamma)''(0) = 4\sigma_1^{2-4/L},
    \end{align}
    so that $\mathrm{spec}(\nabla^2_M\lambda_1(w)|_{V_1(w)}) = 4\sigma_1^{2-4/L}$ as claimed.

    We now turn to computing the spectrum of $\nabla^2_M\lambda_1(w)|_{V_2(w)}$. Fix $b,c\in\prod_{l=1}^{L-1}\RB^{d_l-1}$ with $K_1(w)[b]=K_2(w)[c]$, and set
    $b_0=c_0=b_L=c_L=0$. Define
    \[
    A_l:=\begin{pmatrix}0&b_l^T\\ c_l&0\end{pmatrix}\in\RB^{d_l\times d_l},
    \qquad l=1,\dots,L-1,
    \]
    and set $A_0:=0\in\RB^{d_0\times d_0}$ and $A_L:=0\in\RB^{d_L\times d_L}$.
    Consider the curve
    \begin{align}
    t\mapsto w(t):=\big(W_l(t)\big)_{l=1}^L
    :=\big(\exp(tA_l)W_l\exp(-tA_{l-1})\big)_{l=1}^L.
    \end{align}
    Then $w(0)=w$ and
    \begin{align}
    \dot W_l(0)=A_lW_l-W_lA_{l-1}
    =
    \begin{pmatrix}
    0 & \big(\bW_l^Tb_l-\sigma_1^{1/L}b_{l-1}\big)^T \\
    \sigma_1^{1/L}c_l-\bW_lc_{l-1} & 0
    \end{pmatrix},
    \end{align}
    so that $\dot w(0)=v_2(w;b,c)$ as required.
    
    Since the intermediate exponentials telescope, one has
    \begin{align}
    W_{L:l+1}(t)=W_{L:l+1}\exp(-tA_l),\qquad
    W_{l-1:1}(t)=\exp(tA_{l-1})W_{l-1:1}.
    \end{align}
    Writing $z_l:=b_l+c_l$ for $l=0,\dots,L$, note that
    \begin{align}
    A_l+A_l^T
    =
    \begin{pmatrix}
    0&z_l^T\\ z_l&0
    \end{pmatrix},
    \end{align}
    and
    \begin{align}
    A_l^TA_l+\frac12\big(A_l^2+(A_l^T)^2\big)
    &=
    \begin{pmatrix}
    b_l^Tc_l+\|c_l\|^2 & 0\\
    0 & \frac12\big(c_lb_l^T+b_lc_l^T+2b_lb_l^T\big)
    \end{pmatrix},
    \\
    A_lA_l^T+\frac12\big(A_l^2+(A_l^T)^2\big)
    &=
    \begin{pmatrix}
    b_l^Tc_l+\|b_l\|^2 & 0\\
    0 & \frac12\big(c_lb_l^T+b_lc_l^T+2c_lc_l^T\big)
    \end{pmatrix}.
    \end{align}
    Using the expansions $\exp(tA)=I+tA+\frac12t^2A^2+O(t^3)$ and
    $\exp(-tA)=I-tA+\frac12t^2A^2+O(t^3)$, it follows that
    \begin{align}
    \exp(tA_{l-1}^T)\exp(tA_{l-1})
    &=
    I+t(A_{l-1}+A_{l-1}^T)
    +t^2\Big(A_{l-1}^TA_{l-1}+\frac12(A_{l-1}^2+(A_{l-1}^T)^2)\Big)
    +O(t^3),
    \\
    \exp(-tA_l)\exp(-tA_l^T)
    &=
    I-t(A_l+A_l^T)
    +t^2\Big(A_lA_l^T+\frac12(A_l^2+(A_l^T)^2)\Big)
    +O(t^3).
    \end{align}
    Hence, defining
    \begin{align}
    P_l&:=
    \begin{pmatrix}
    0 & \sigma_1^{(l-1)/L}z_{l-1}^T\bW_{l-1:1}\\
    \sigma_1^{(l-1)/L}\bW_{l-1:1}^Tz_{l-1} & 0
    \end{pmatrix},
    \\
    Q_l&:=
    \begin{pmatrix}
    0 & \sigma_1^{(L-l)/L}z_l^T\bW_{L:l+1}^T\\
    \sigma_1^{(L-l)/L}\bW_{L:l+1}z_l & 0
    \end{pmatrix},
    \\
    R_l&:=
    \begin{pmatrix}
    \sigma_1^{2(l-1)/L}\big(b_{l-1}^Tc_{l-1}+\|c_{l-1}\|^2\big) & 0\\
    0 & \frac12\bW_{l-1:1}^T\big(b_{l-1}c_{l-1}^T+c_{l-1}b_{l-1}^T+2b_{l-1}b_{l-1}^T\big)\bW_{l-1:1}
    \end{pmatrix},
    \\
    S_l&:=
    \begin{pmatrix}
    \sigma_1^{2(L-l)/L}\big(b_l^Tc_l+\|b_l\|^2\big) & 0\\
    0 & \frac12\bW_{L:l+1}\big(b_lc_l^T+c_lb_l^T+2c_lc_l^T\big)\bW_{L:l+1}^T
    \end{pmatrix},
    \end{align}
    one obtains
    \begin{align}
    W_{l-1:1}(t)^TW_{l-1:1}(t)
    &=
    W_{l-1:1}^TW_{l-1:1}
    +tP_l+t^2R_l+O(t^3),
    \\
    W_{L:l+1}(t)W_{L:l+1}(t)^T
    &=
    W_{L:l+1}W_{L:l+1}^T
    -tQ_l+t^2S_l+O(t^3).
    \end{align}
    Therefore
    \begin{align}
    Df\,Df^T(w(t))
    &=
    \sum_{l=1}^L W_{L:l+1}(t)W_{L:l+1}(t)^T\otimes W_{l-1:1}(t)^TW_{l-1:1}(t)
    \\
    &=
    \sum_{l=1}^L W_{L:l+1}W_{L:l+1}^T\otimes W_{l-1:1}^TW_{l-1:1}
    \\
    &\quad
    +t\sum_{l=1}^L\Big(
    W_{L:l+1}W_{L:l+1}^T\otimes P_l
    -
    Q_l\otimes W_{l-1:1}^TW_{l-1:1}
    \Big)
    \\
    &\quad
    +t^2\sum_{l=1}^L\Big(
    W_{L:l+1}W_{L:l+1}^T\otimes R_l
    +
    S_l\otimes W_{l-1:1}^TW_{l-1:1}
    -
    Q_l\otimes P_l
    \Big)
    +O(t^3).
    \end{align}
    
    Let $E_{00}:=e_1e_1^T\in\RB^{d_L\times d_0}$, where the first $e_1$ is the first standard
    basis vector in $\RB^{d_L}$ and the second is the first standard basis vector in $\RB^{d_0}$.
    Then
    \begin{align}
    \big\langle E_{00},(Df\,Df^T\circ w)''(0)E_{00}\big\rangle
    &=
    2\sum_{l=1}^L
    \sigma_1^{2(L-l)/L}\sigma_1^{2(l-1)/L}
    \big(
    b_{l-1}^Tc_{l-1}+\|c_{l-1}\|^2+b_l^Tc_l+\|b_l\|^2
    \big)
    \\
    &=
    2\sigma_1^{2-2/L}\sum_{l=1}^L
    \big(
    \|b_l\|^2+\|c_l\|^2+2b_l^Tc_l
    \big)
    \\
    &=
    2\sigma_1^{2-2/L}\sum_{l=1}^{L-1}\|z_l\|^2.
    \end{align}
    On the other hand,
    \begin{align}
    (Df\,Df^T\circ w)'(0)E_{00}
    &=
    \sum_{l=1}^L
    \sigma_1^{2(L-l)/L}E_{00}P_l
    -
    \sum_{l=1}^L
    Q_lE_{00}\sigma_1^{2(l-1)/L}
    \\
    &=
    \sigma_1^{1-1/L}
    \sum_{l=1}^L
    \begin{pmatrix}
    0 & \sigma_1^{(L-l)/L}z_{l-1}^T\bW_{l-1:1}\\
    -\sigma_1^{(l-1)/L}\bW_{L:l+1}z_l & 0
    \end{pmatrix}
    \\
    &=
    \sigma_1^{1-1/L}
    \begin{pmatrix}
    0 & \Gamma_{01}(w)[z]^T\\
    -\Gamma_{10}(w)[z] & 0
    \end{pmatrix},
    \end{align}
    where $z:=(z_l)_{l=1}^{L-1}=b+c$.
    
    Applying Lemma \ref{lem:secondorderperturbation}, and using that
    $\lambda_1(w)=L\sigma_1^{2-2/L}$ while the restriction of
    $\lambda_1(w)I-Df\,Df^T(w)$ to the top-right and bottom-left blocks is
    $\lambda_1(w)I-\Delta_{01}(w)$ and $\lambda_1(w)I-\Delta_{10}(w)$ respectively, gives
    \begin{align}
    (\lambda_1\circ w)''(0)
    &=
    2\sigma_1^{2-2/L}\Big(
    \|z\|^2
    +
    \big\langle \Gamma_{01}(w)[z],(\lambda_1(w)I-\Delta_{01}(w))^{-1}\Gamma_{01}(w)[z]\big\rangle
    \\
    &\hspace{4cm}
    +
    \big\langle \Gamma_{10}(w)[z],(\lambda_1(w)I-\Delta_{10}(w))^{-1}\Gamma_{10}(w)[z]\big\rangle
    \Big)
    \\
    &=
    \langle z,H(w)z\rangle,
    \end{align}
    where
    \begin{align}
    H(w):=
    2\sigma_1^{2-2/L}\Big(
    I
    +\Gamma_{01}(w)^T(\lambda_1(w)I-\Delta_{01}(w))^{-1}\Gamma_{01}(w)
    +\Gamma_{10}(w)^T(\lambda_1(w)I-\Delta_{10}(w))^{-1}\Gamma_{10}(w)
    \Big).
    \end{align}
    
    Finally, since $K_1(w)[b]=K_2(w)[c]$, setting
    \[
    \delta:=K_1(w)[b]=K_2(w)[c]
    \]
    gives
    \[
    z=b+c=(K_1(w)^{-1}+K_2(w)^{-1})\delta.
    \]
    Hence
    \[
    \delta=(K_1(w)^{-1}+K_2(w)^{-1})^{-1}z=G(w)z,
    \]
    so that
    \[
    b=K_1(w)^{-1}G(w)z,\qquad c=K_2(w)^{-1}G(w)z.
    \]
    Therefore
    \begin{align}
    \|\dot w(0)\|^2
    &=
    \langle b,K_1(w)b\rangle+\langle c,K_2(w)c\rangle
    =
    \langle b+c,G(w)z\rangle
    =
    \langle z,G(w)z\rangle.
    \end{align}
    Thus the spectrum of $\nabla^2_M\lambda_1(w)|_{V_2(w)}$ consists of the critical values of the
    Rayleigh quotient
    \[
    z\mapsto \frac{\langle z,H(w)z\rangle}{\langle z,G(w)z\rangle},
    \]
    equivalently the solutions of the generalised eigenvalue problem
    \[
    H(w)z=\mu\,G(w)z.
    \]
    Since $H(w)$ and $G(w)$ are positive-definite, all such eigenvalues are strictly positive.
\end{proof}

We next prove invariance of $\mathrm{span}(\nu_1)$ at points in $F$.

\begin{proposition}\label{prop:GDlineinvariance}
    For every $\eta\in\RB$ and every $w\in F$, the affine line
    \begin{align}
        w+\mathrm{span}\{\nu_1(w)\}
    \end{align}
    is invariant under $\GD(\eta,\cdot)$. Consequently, point 2 of Assumption \ref{ass:morsebott} holds.
\end{proposition}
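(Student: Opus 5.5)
By the final claim of Proposition \ref{prop:fibrebundle}, I would first reduce to a point $w\in F$ of the normal form \eqref{eq:wnormalform}. Let $\Psi$ denote the linear isometry $(W_l)_l\mapsto(Q_lW_lQ_{l-1}^T)_l$ with $Q_0=I_{d_0}$, $Q_L=I_{d_L}$ used there. It preserves $f$, hence $\ell$. Consequently $\GD(\eta,\Psi(\cdot))=\Psi(\GD(\eta,\cdot))$, because $\nabla\ell\circ\Psi=\Psi\circ\nabla\ell$ for an orthogonal linear map preserving $\ell$. Also $\nabla^2\ell(\Psi w)=\Psi\nabla^2\ell(w)\Psi^T$, so $\nu_1(\Psi w)=\pm\Psi\nu_1(w)$. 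Hence $\Psi$ maps the affine line $w+\mathrm{span}\{\nu_1(w)\}$ onto the affine line $\Psi w+\mathrm{span}\{\nu_1(\Psi w)\}$, and invariance transfers. It therefore suffices to treat $w=\big(\mathrm{diag}(\sigma_1^{1/L},\bW_l)\big)_{l=1}^L$ with $\bw\in\bM$.

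Next I would identify $\nu_1(w)$ explicitly. Since $w\in F$ attains the bound \eqref{eq:lowerbound} with $Z=E_{00}=e_1e_1^T$, the equality case in the Rayleigh quotient argument shows that $E_{00}$ is a top eigenvector of $Df\,Df^T(w)$. Because $w\notin S$, this eigenvector is the unique one up to sign. The top eigenvector of $Df^TDf(w)$ is then $Df(w)^TE_{00}$ normalised. Using the formula for $Df$ from Proposition \ref{prop:regularvalue}, I compute
\[
\big(Df(w)^TE_{00}\big)_l=W_{L:l+1}^Te_1e_1^TW_{l-1:1}^T=\sigma_1^{(L-l)/L}\sigma_1^{(l-1)/L}e_1e_1^T,
\]
so $\nu_1(w)\propto(e_1e_1^T)_{l=1}^L$, the same in every layer. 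Hence the line consists of the points $w_s:=\big(\mathrm{diag}(\sigma_1^{1/L}+s,\bW_l)\big)_l$ for $s\in\RB$.

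Finally I would compute $\nabla\ell(w_s)=Df(w_s)^T(f(w_s)-\tau)$. Here
\[
f(w_s)=\mathrm{diag}\big((\sigma_1^{1/L}+s)^L,\ \bW_L\cdots\bW_1\big)
\]
and $\bW_L\cdots\bW_1=\mathrm{diag}(\sigma_2,\dots)$, so $f(w_s)-\tau=c(s)E_{00}$ with $c(s)=(\sigma_1^{1/L}+s)^L-\sigma_1$. Repeating the block computation above gives
\[
\nabla\ell(w_s)_l=c(s)(\sigma_1^{1/L}+s)^{L-1}e_1e_1^T\quad\text{for all }l.
\]
This lies in $\mathrm{span}\{\nu_1(w)\}$, so $\GD(\eta,w_s)$ lies on the same line for every $\eta$. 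Point 2 of Assumption \ref{ass:morsebott} follows by taking any neighbourhood of zero in this span intersected with its preimage, or simply the whole line, which is invariant.

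The main obstacle I expect is the identification of $\nu_1(w)$. It needs the top eigenvector of $Df\,Df^T(w)$ to be exactly $E_{00}$, rather than merely $E_{00}$ attaining the value of the lower bound. This follows because $\lambda_1(w)=L\sigma_1^{2-2/L}=\langle E_{00},Df\,Df^T E_{00}\rangle$, so $E_{00}$ maximises the Rayleigh quotient. Simplicity, from $w\notin S$, then pins down the direction. The remaining steps are direct block-matrix computations.
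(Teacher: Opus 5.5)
Your argument is correct and is essentially the paper's proof: you reduce to the block-diagonal normal form via the $f$-preserving orthogonal maps, identify $\nu_1(w)=L^{-1/2}(e_1e_1^T)_{l=1}^L$, and observe that along the line $f-\tau$ is a multiple of $e_1e_1^T$, so $\nabla\ell$ stays in $\mathrm{span}\{\nu_1(w)\}$. You justify two steps the paper only asserts, namely the identification of $\nu_1(w)$ via the Rayleigh-quotient equality case and simplicity off $S$, and the equivariance transfer; however, your aside about intersecting a neighbourhood with its preimage does not by itself give an invariant set, so rely on the invariance of the whole line, as you also say.
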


\begin{proof}
    By the final claim of Proposition \ref{prop:fibrebundle}, it suffices to consider a point
    \begin{align}
        w
        =
        \Bigg(
        \begin{pmatrix}
            \sigma_1^{1/L}&0\\
            0&\bW_l
        \end{pmatrix}
        \Bigg)_{l=1}^L
    \end{align}
    of the form \eqref{eq:wnormalform}. At such a point,
    \begin{align}
        \nu_1(w)
        =
        \frac{1}{\sqrt L}
        \big(e_1(d_l)e_1(d_{l-1})^T\big)_{l=1}^L.
    \end{align}
    Fixing $y\in\RB$, set
    \begin{align}
        r:=\sigma_1^{1/L}+\frac{y}{\sqrt L}.
    \end{align}
    The factors of $\widetilde w:=w+y\nu_1(w)$ are then
    \begin{align}
        \widetilde W_l
        =
        \begin{pmatrix}
            r&0\\
            0&\bW_l
        \end{pmatrix},
    \end{align}
    so that
    \begin{align}
        f(\widetilde w)-\tau
        =
        (r^L-\sigma_1)e_1(d_L)e_1(d_0)^T.
    \end{align}
    It follows that, for every $l=1,\dots,L$,
    \begin{align}
        \nabla_{W_l}\ell(\widetilde w)
        &=
        \widetilde W_{L:l+1}^T
        \big(f(\widetilde w)-\tau\big)
        \widetilde W_{l-1:1}^T\\
        &=
        (r^L-\sigma_1)r^{L-1}
        e_1(d_l)e_1(d_{l-1})^T.
    \end{align}
    Therefore
    \begin{align}
        \nabla\ell(\widetilde w)
        =
        \sqrt L(r^L-\sigma_1)r^{L-1}\nu_1(w),
    \end{align}
    and hence
    \begin{align}
        \GD(\eta,w+y\nu_1(w))
        =
        \left(
        \eta,\,
        w+
        \left(
        y-\eta\sqrt L(r^L-\sigma_1)r^{L-1}
        \right)\nu_1(w)
        \right).
    \end{align}
    Thus the affine line through $w$ spanned by $\nu_1(w)$ is invariant. Finally, the internal orthogonal transformations used in Proposition \ref{prop:fibrebundle} are isometries preserving $f$, and gradient descent is equivariant under these transformations. The same conclusion therefore holds at every $w\in F$, proving the claim.
\end{proof}

We now recall the function $\alpha$ defined in Assumption \ref{ass:quality} by the formula
\begin{align}\label{eq:alpha}
\alpha_{\eta}(x):=-\bigg(D^3\ell[\nu_1^{\otimes 2},A_{\eta}\nabla^3\ell[\nu_1^{\otimes 2}]](x) + \frac{1}{3}D^4\ell[\nu_1^{\otimes 4}](x)\bigg)
\end{align}
for all $x\in M$ and $\eta$ in a neighbourhood of $2/\lambda_1(x)$, where $\nu_1$ is the top eigenfield of $\nabla^2\ell|_M$ and
\begin{align}
    A_{\eta}:=((2\lambda_1-\eta\lambda_1^2)I-\nabla^2\ell)|_{\nu_{2:q}M}^{-1}P_{2:q} + \lambda_1^{-1}(1-\eta\lambda_1)^{-1}P_1,
\end{align}
as in Assumption \ref{ass:quality}, with $P_1:T\RB^p|_M\rightarrow \nu_1M$ and $P_{2:q}:T\RB^p|_M\rightarrow \nu_{2:q}M$ being the orthogonal projections. In order to carry out computations of $\alpha_{\eta}$ in our matrix factorisation example, we introduce some notation.

Speaking generally, given symmetric multilinear maps $A$ and $B$ taking $k$ and $l$ inputs respectively and taking values in an inner product space $V$, we will denote
\begin{align}
\langle A\odot B\rangle[v_1,\dots,v_{k+l}]:=\frac{1}{(k+l)!}\sum_{\pi\in\Pi(k+l)}\langle A[v_{\pi(1)},\dots,v_{\pi(k)}],B[v_{\pi(k+1)},\dots,v_{\pi(k+l)}]\rangle
\end{align}
where $\Pi(k+l)$ denotes the set of all permutations of $k+l$ elements. Then in the general setup of Subsection \ref{subsec:problemsetting}, since $\ell = \frac{1}{2}\langle f-\tau,f-\tau\rangle$ one has
\begin{align}\label{eq:Dell}
    D\ell =\langle Df,f-\tau\rangle\Rightarrow D\ell|_M\equiv 0,
\end{align}
\begin{align}\label{eq:D2ell}
    D^2\ell = \langle Df\odot Df\rangle + \langle D^2f,f-\tau\rangle\Rightarrow D^2\ell|_M = \langle Df\odot Df\rangle,
\end{align}
\begin{align}\label{eq:D3ell}
    D^3\ell = 3\langle D^2f\odot Df\rangle + \langle D^3f,f-\tau\rangle\Rightarrow D^3\ell|_M = 3\langle D^2f\odot Df\rangle,
\end{align}
\begin{align}\label{eq:D4ell}
    D^4\ell = 4\langle D^3f\odot Df\rangle + 3\langle D^2f\odot D^2f\rangle + \langle D^4f,f-\tau\rangle\Rightarrow D^4\ell|_M = 4\langle D^3f\odot Df\rangle + 3\langle D^2f\odot D^2f\rangle.
\end{align}
For the matrix factorisation function $f:\prod_{l=1}^L\RB^{d_l\times d_{l-1}}\rightarrow\RB^{d_L\times d_0}$ of \eqref{eq:matrixfactorisationf}, at a point $w = (W_1,\dots,W_L)$ and for tangent vectors $\xi^i:=(\xi_1^i,\dots,\xi_L^i)$, $i=1,\dots,k$, one in particular has
\begin{align}\label{eq:Dkf}
    D^kf(w)[\xi^1,\dots,\xi^k] = \sum_{1\leq l_1<\dots<l_k\leq L}\sum_{\pi\in\Pi(k)}W_{L:l_k+1}\xi^{\pi(1)}_{l_k}W_{l_k-1:l_{k-1}+1}\cdots W_{l_2-1:l_1+1}\xi^{\pi(k)}_{l_1}W_{l_{1}-1:1}.
\end{align}
We now prove that $\alpha_{\eta}$ has the desired properties.

\begin{proposition}
    For the matrix factorisation model $f$ of \eqref{eq:matrixfactorisationf}, the function $\alpha_{\eta}$ of \eqref{eq:alpha} is constant on $F$, with
    \begin{align}
    \alpha_{\eta}|_{F} = -\bigg(\frac{9(L-1)^2}{(1-\eta\lambda_1|_{F})} + \frac{(L-1)(7L-11)}{3}\bigg)\sigma_1^{2-4/L},
    \end{align}
    for all $\eta$ in a neighbourhood of $2/\lambda_1|_{F}$, and in particular satisfies
    \begin{align}
    \alpha_{2/\lambda_1}|_{F} = (L-1)\sigma_1^{2-4/L}\bigg(\frac{20L-16}{3}\bigg) >0
    \end{align}
    for all $L\geq 2$. Moreover, $D\alpha_{\eta}|_{F} = 0$.
\end{proposition}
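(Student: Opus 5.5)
The plan is to reduce every computation to a point of the normal form \eqref{eq:wnormalform} and evaluate the four ingredients of $\alpha_\eta$ there, namely $\nu_1$, $\nabla^3\ell[\nu_1,\nu_1]$, $A_\eta$ and $D^4\ell[\nu_1^{\otimes 4}]$. By the final claim of Proposition \ref{prop:fibrebundle}, any $w\in F$ is carried to such a point by a linear isometry preserving $f$. Every quantity entering $\alpha_\eta$ is built from derivatives of $\ell$, the metric and spectral data of $\nabla^2\ell$, so $\alpha_\eta$ is invariant under these isometries, and it suffices to work at $w=\big(\mathrm{diag}(\sigma_1^{1/L},\bW_l)\big)_{l}$.

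\textbf{Step 1 (computing $\alpha_\eta$ at $w$).} As in Proposition \ref{prop:GDlineinvariance}, we have $\nu_1(w)=L^{-1/2}(e_1e_1^T)_{l=1}^L$ and $\lambda_1(w)=L\sigma_1^{2-2/L}$. Using \eqref{eq:D3ell} and \eqref{eq:Dkf}, I will compute the vector $g:=\nabla^3\ell(w)[\nu_1,\nu_1]=3\langle D^2f[\nu_1,\nu_1],Df[\cdot]\rangle+6\langle D^2f[\nu_1,\cdot],Df[\nu_1]\rangle$.
\begin{itemize}[leftmargin=*]
\item Since $\nu_1$ only touches the $(1,1)$ entries and every $W_l$ is block diagonal with top entry $\sigma_1^{1/L}$, both $D^2f[\nu_1,\nu_1]$ and $Df[\nu_1]$ are multiples of $E_{00}$, with coefficients depending only on $\sigma_1$ and $L$.
\item Pairing $E_{00}$ against $Df[\cdot]$ and against $D^2f[\nu_1,\cdot]$ selects only $(1,1)$ entries of the perturbation (the off-diagonal blocks of $\tau$ vanish). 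Hence $g$ is supported on the $(1,1)$ entries, i.e.\ $g=\big(g_l\,e_1e_1^T\big)_l$ with the $g_l$ equal by layer symmetry.
\item Decomposing $g$ into its $\nu_1$-component, its component in $\nu_{2:q}M$ and its $TM$-component, only the first two enter $A_\eta$. The $\nu_{2:q}$ part lives in the span of the $(1,1)$-entry vectors orthogonal to $\nu_1$. These vectors are tangent to $M$, since they are $v_1$-type directions $\sigma_1^{1/L}(a_l-a_{l-1})$ with $\sum(a_l-a_{l-1})=0$. So $P_{2:q}g=0$, and the resolvent term $((2\lambda_1-\eta\lambda_1^2)I-\nabla^2\ell)^{-1}$ contributes nothing.
\end{itemize}
Thus $A_\eta g=\lambda_1^{-1}(1-\eta\lambda_1)^{-1}\langle g,\nu_1\rangle\nu_1$, and the first term of $\alpha_\eta$ is $-\langle g,\nu_1\rangle^2/(\lambda_1(1-\eta\lambda_1))$. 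The term $D^4\ell[\nu_1^{\otimes 4}]$ reduces via \eqref{eq:D4ell} to the one-variable model $r\mapsto \tfrac12(r^L-\sigma_1)^2$ along the invariant line of Proposition \ref{prop:GDlineinvariance}. In fact $\langle g,\nu_1\rangle$ also comes from this model, namely $\tfrac{d^3}{dy^3}\tfrac12(r^L-\sigma_1)^2$ with $r=\sigma_1^{1/L}+y/\sqrt L$. Differentiating this at $y=0$ should give $\langle g,\nu_1\rangle^2/\lambda_1=9(L-1)^2\sigma_1^{2-4/L}$ and $\tfrac13D^4\ell=\tfrac{(L-1)(7L-11)}{3}\sigma_1^{2-4/L}$, which are independent of $\bW$. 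This yields both the claimed formula and constancy on $F$; substituting $1-\eta\lambda_1=-1$ gives the positive value at $\eta=2/\lambda_1|_F$.

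\textbf{Step 2 ($D\alpha_\eta|_F=0$).} Constancy on $F$ kills $D\alpha_\eta$ on $T_wF$, so it remains to treat $\nu_wF=V_1(w)\oplus V_2(w)$ from Propositions \ref{prop:tangents} and \ref{prop:morsebott}.
\begin{itemize}[leftmargin=*]
\item For $V_2$: the involution $\Theta$ of \eqref{eq:Theta} preserves $\ell$ and the eigendata, hence $\alpha_\eta$, and fixes $w$. Therefore $D\alpha_\eta(w)$ is $D\Theta(w)$-invariant and vanishes on the $-1$ eigenspace $V_2(w)$.
\item For $V_1$: along the curves $\gamma(t)$ of Proposition \ref{prop:morsebott}, the factors stay block diagonal with top entries $r_l(t)$ and $\prod_l r_l=\sigma_1$. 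The same Step 1 reasoning applies: $\nu_1$ stays a $(1,1)$-supported vector, and the resolvent term should vanish by the same tangency argument. Then $\alpha_\eta(\gamma(t))$ is an explicit function of $(r_1,\dots,r_L)$ computed from the scalar chain $r_L\cdots r_1$. This function is symmetric under permuting layers, since the scalar product and the $(1,1)$-blocks of the derivatives are permutation invariant. A symmetric function restricted to $\{\prod r_l=\sigma_1\}$ has vanishing gradient at the symmetric point $r_l=\sigma_1^{1/L}$, which gives $D\alpha_\eta(w)|_{V_1}=0$.
\end{itemize}

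\textbf{Main obstacle.} I expect the delicate step to be the claim that $P_{2:q}g=0$ off $F$ along $\gamma(t)$. When the $r_l$ are unequal, $\nu_1$ is a weighted $(1,1)$ vector rather than uniform, and the orthogonal $(1,1)$-directions must still be checked to be tangent to $M$ rather than normal. If this fails, I would instead expand the resolvent contribution to first order in $t$ and show it is $O(t^2)$, since $P_{2:q}g$ vanishes at $t=0$ and enters $\alpha_\eta$ quadratically. The routine part is the bookkeeping of the third and fourth derivative combinatorics in \eqref{eq:Dkf}.
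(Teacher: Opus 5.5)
Your proposal is correct and follows the paper's proof essentially step for step: reduction to a normal-form point, $(1,1)$-support of $\nabla^3\ell[\nu_1,\nu_1]$ so that only the $P_1$ part of $A_\eta$ contributes, constancy on $F$, the involution $\Theta$ on $V_2$, and the curves $\gamma(t)$ on $V_1$; your one-variable restriction $\tfrac12(r^L-\sigma_1)^2$ and permutation-symmetry argument are lighter substitutes for the paper's \eqref{eq:Dkf} combinatorics and its power sums with $\dot S_j(0)=0$, and the obstacle you flag does not arise, since at $\mathrm{diag}(r_l,\bW_l)$ one has $\nu_1\propto(r_l^{-1}e_1e_1^T)_l$, exactly the normal to $\{\prod_l r_l=\sigma_1\}$ within the $(1,1)$-entries, so every $(1,1)$-supported vector orthogonal to $\nu_1$ is tangent to $M$. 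One harmless slip: by \eqref{eq:D3ell}, $\nabla^3\ell[\nu_1,\nu_1]=\langle D^2f[\nu_1,\nu_1],Df[\cdot]\rangle+2\langle D^2f[\nu_1,\cdot],Df[\nu_1]\rangle$ (coefficients $1$ and $2$, not $3$ and $6$), but you use this only qualitatively and take $\langle g,\nu_1\rangle$ from the one-variable model, so nothing downstream changes.
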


\begin{proof}
    Since $\alpha_{\eta}$ is defined solely in terms of derivatives of $f$ and in terms of the metric, it suffices to do the computations at a point $w$ of the form \eqref{eq:wnormalform}, i.e.
    \begin{align}
    w  = \Bigg(\begin{pmatrix}\sigma_1^{1/L} & 0 \\ 0 & \bW_{l}\end{pmatrix}\Bigg)_{l=1}^L
    \end{align}
    for some $(\bW_1,\dots,\bW_L)\in \bM$. It is then clear that one has
    \begin{align}
    \nu_1(w) = \frac{1}{\sqrt{L}}\big(e_1(d_l)e_1(d_{l-1})^T\big)_{l=1}^L.
    \end{align}
    We begin by proving that $\alpha_{\eta}$ is constant on $F$ for all $\eta$ sufficiently close to $2/\lambda_1|_{F}$. Suppressing evaluation at $w$ for notational convenience, using \eqref{eq:Dkf} one computes
    \begin{align}
    Df[\nu_1] = \sqrt{L}\sigma_1^{1-1/L}e_1(d_L)e_1(d_0)^T,
    \end{align}
    \begin{align}
    D^2f[\nu_1^{\odot 2}] = (L-1)\sigma_1^{1-2/L}e_1(d_L)e_1(d_0)^T,
    \end{align}
    and
    \begin{align}
    D^3f[\nu_1^{\odot 3}] =\frac{(L-2)(L-1)}{\sqrt{L}}
\sigma_1^{1-3/L}e_1(d_L)e_1(d_0)^T.
    \end{align}
    Using \eqref{eq:D4ell}, these identities imply that
    \begin{align}
        D^4\ell[\nu_1^{\odot 4}] &= 4\langle D^3f[\nu_1^{\odot 3}],Df[\nu_1]\rangle + 3\langle D^2f[\nu_1^{\odot 2}],D^2f[\nu_1^{\odot 2}]\rangle\\&=(L-1)(7L-11)\sigma_1^{2-4/L}.
    \end{align}
    To compute the term $D^3\ell[\nu_1^{\odot 2},A\nabla^3\ell[\nu_1^{\odot 2}]]$, using \eqref{eq:Dkf} and fixing a tangent vector $h = (h_1,\dots,h_L)\in\prod_{l=1}^L\RB^{d_l\times (d_{l-1})}$, one computes
    \begin{align}
    D^2f[\nu_1,h] = \frac{\sigma_1^{1-2/L}}{\sqrt{L}}\bigg(\sum_{l_1=1}^L\sum_{l_2\neq l_1}e_1(d_{l_2})^Th_{l_2}e_1(d_{l_2-1})\bigg) \,e_1(d_L)e_1(d_0)^T + B,
    \end{align}
    and
    \begin{align}
    Df[h] = \sigma_1^{1-1/L}\bigg(\sum_{l=1}^Le_1(d_l)^Th_le_1(d_{l-1})\bigg)e_1(d_L)e_1(d_0)^T + B'
    \end{align}
    where $B,B'\in \big(e_1(d_L)e_1(d_0)^T\big)^{\perp}$. Then by \eqref{eq:D3ell} one has
    \begin{align}
    D^3\ell[\nu_1^{\odot 2},h] &= 2\langle D^2f[\nu_1,h],Df[\nu_1]\rangle + \langle D^2f[\nu_1^{\odot 2}],Df[h]\rangle\\&=3(L-1)\sigma_1^{2-3/L}\sum_{l=1}^Le_1(d_l)^Th_le_1(d_{l-1})\\&=3\sqrt{L}(L-1)\sigma_1^{2-3/L}\langle\nu_1,h\rangle.
    \end{align}
    It follows that $\nabla^3\ell[\nu_1^{\odot 2}] = 3\sqrt{L}(L-1)\sigma_1^{2-3/L}\nu_1$, hence that
    \begin{align}
        A_{\eta}\nabla^3\ell[\nu_1^{\odot2}] = \frac{3\sqrt{L}(L-1)\sigma_1^{2-3/L}}{\lambda_1(1-\eta\lambda_1)}\nu_1,
    \end{align}
    implying further that
    \begin{align}
        D^3\ell[\nu_1^{\odot 2},A_{\eta}\nabla^3\ell[\nu_1^{\odot 2}]] &=\frac{3\sqrt{L}(L-1)\sigma_1^{2-3/L}}{\lambda_1(1-\eta\lambda_1)}D^3\ell[\nu_1^{\odot 3}]\\&=\frac{9L(L-1)^2\sigma_1^{4-6/L}}{\lambda_1(1-\eta\lambda_1)}\\&=\frac{9(L-1)^2\sigma_1^{2-4/L}}{1-\eta\lambda_1},
    \end{align}
    with the final line following from $\lambda_1|_{F} = L\sigma_1^{2-2/L}$. Putting everything together, one arrives at the claimed formula for $\alpha_{\eta}$.

    We now move on to proving that $D\alpha_{\eta}(w) = 0$. Since $\alpha_{\eta}|_{F}$ is constant, it suffices to show that $D\alpha_{\eta}(w)$ kills the normal directions to $F$ as worked out in Proposition \ref{prop:tangents}. Recall from Proposition \ref{prop:morsebott} that $\nu_wF$ splits into subspaces $V_1(w)$ and $V_2(w)$ consisting of modifications to the top-left entries of the factors and of rotations of their corresponding singular vectors respectively.
    
    It is relatively easy to show that $D\alpha_{\eta}(w)|_{V_2(w)}\equiv 0$. Recall the isometry $\Theta$ from \eqref{eq:Theta}, under which $f$ is equivariant and for which $D\Theta(w)$ has $-1$ eigenspace precisely equal to $V_2(w)$. Since  $f$ is equivariant under $\Theta$, $\alpha_{\eta}$ is invariant under $\Theta$; and since $\Theta(w) = w$, $D\alpha_{\eta}(w)$ is also invariant under $\Theta$ so that for any $\xi\in V_2$ one has
    \begin{align}
    D\alpha_{\eta}(w)[\xi] = D\alpha_{\eta}(w)[D\Theta(w)\xi] = -D\alpha_{\eta}(w)[\xi],
    \end{align}
    implying that $D\alpha_{\eta}(w)[\xi] = 0$ and hence that $D\alpha_{\eta}(w)|_{V_2(w)}\equiv 0$.

    We now turn to demonstrating $D\alpha_{\eta}(w)|_{V_1(w)}\equiv 0$. Fix a tuple $a = (a_l)_{l=1}^{L-1}\in\RB^{L-1}$ and for notational convenience denote $r_l:=a_l-a_{l-1}$ for all $l=1,\dots,L$, where we set $a_0,a_L:=0$. As in Proposition \ref{prop:morsebott}, we consider the curve
    \begin{align}
    \gamma:t\mapsto \Bigg(\begin{pmatrix}\exp(tr_l)\sigma_1^{1/L} & 0 \\ 0 & \bW_l\end{pmatrix}\Bigg)_{l=1}^L,
    \end{align}
    with $\gamma(0) = w$, and evaluate $\alpha_{\eta}(\gamma(t))$. To this end, we denote
    \begin{align}
    S_j(t):=\sum_{l=1}^L\exp(-2jtr_l),\qquad j=1,2,3.
    \end{align}
    One has
    \begin{align}
    \nu_1(\gamma(t)) &= \frac{1}{\|Df(\gamma(t))^T[e_1(d_L)e_1(d_0)^T]\|}Df(\gamma(t))^T[e_1(d_L)e_1(d_0)^T]\\&=S_1(t)^{-1/2}\big(\exp(-tr_l)e_1(d_l)e_1(d_{l-1})^T\big)_{l=1}^L.
    \end{align}
    From now on, we suppress evaluation at $\gamma(t)$ for notational convenience. Using the fact that $\prod_{l=1}^L\exp(tr_l) = 1$, note that for any tuple $l_1<\dots<l_k$ one has
    \begin{align}\label{eq:prod}
        \prod_{m\neq l_1,\dots,l_k}\exp(tr_m) = \prod_{j=1}^k\exp(-tr_{l_j}).
    \end{align}
    We now turn to computing the quantities appearing in \eqref{eq:alpha}. Invoking \eqref{eq:Dkf} gives
    \begin{align}
        Df[\nu_1] &= \bigg(\sum_{l=1}^LS_1^{-1/2}\exp(-tr_l)\sigma_1^{1-1/L}\prod_{m\neq l}\exp(tr_m)\bigg)e_1(d_L)e_1(d_0)^T\\&=S_1^{-1/2}\sigma_1^{1-1/L}\bigg(\sum_{l=1}^L\exp(-2tr_l)\bigg)e_1(d_L)e_1(d_0)^T\\&=\sigma_1^{1-1/L}S_1^{1/2}e_1(d_L)e_1(d_0)^T
    \end{align}
    by invoking \eqref{eq:prod} for the second line. Similarly, invoking \eqref{eq:Dkf} and \eqref{eq:prod} gives
    \begin{align}
        D^2f[\nu_1^{\odot 2}] &= \sigma_1^{1-2/L}S_1^{-1}\bigg(\sum_{l_1=1}^L\sum_{l_2\neq l_1}\exp(-tr_{l_1})\exp(-tr_{l_2})\prod_{m\neq l_1,l_2}\exp(tr_m)\bigg)e_1(d_L)e_1(d_0)^T\\&=\sigma_1^{1-2/L}S_1^{-1}\bigg(\sum_{l_1=1}^L\exp(-2tr_{l_1})\sum_{l_2\neq l_1}\exp(-2tr_{l_2})\bigg)e_1(d_L)e_1(d_0)^T\\&=\sigma_1^{1-2/L}\frac{S_1^2-S_2}{S_1}e_1(d_L)e_1(d_0)^T
    \end{align}
    by using the identity $\sum_{l_1\neq l_2}p_{l_1}p_{l_2} = \big(\sum_lp_l\big)^2 - \sum_lp_l^2$ for the final line, and
    \begin{align}
        D^3f[\nu_1^{\odot 3}] &= S_1^{-3/2}\sigma_1^{1-3/L}\bigg(\sum_{l_1=1}^L\sum_{l_2\neq l_1}\sum_{l_3\neq l_2,l_1}\exp(-t(r_{l_1} + r_{l_2} + r_{l_3}))\prod_{m\neq l_1,l_2,l_3}\exp(tr_m)\bigg)e_1(d_L)e_1(d_0)^T\\&=S_1^{-3/2}\sigma_1^{1-3/L}\bigg(\sum_{l_1\neq l_2\neq l_3}\exp(-2t(r_{l_1}+r_{l_2}+r_{l_3}))\bigg)e_1(d_L)e_1(d_0)^T\\&=\sigma_1^{1-3/L}\frac{S_1^3-3S_2S_1 + 2S_3}{S_1^{3/2}}e_1(d_L)e_1(d_0)^T
    \end{align}
    by using the identity $\sum_{l_1\neq l_2\neq l_3}p_{l_1}p_{l_2}p_{l_3} = \big(\sum_lp_l\big)^3-3\big(\sum_lp_l\big)\big(\sum_lp_l^2\big) + 2\big(\sum_lp_l^3\big)$ for the final line. It follows from \eqref{eq:D4ell} that
    \begin{align}
        D^4\ell[\nu_1^{\odot 4}] = \sigma_1^{2-4/L}\bigg(4\frac{S_1^3-3S_1S_2+2S_3}{S_1} + 3\bigg(\frac{S_1^2-S_2}{S_1}\bigg)^2\bigg).
    \end{align}
    Finally, we come to computing the term $D^3\ell[\nu_1^{\odot 2},A_{\eta}\nabla^3\ell[\nu_1^{\odot 2}]]$. For arbitrary $h = (h_1,\dots,h_L)\in\prod_{l=1}^L\RB^{d_l\times d_{l-1}}$, using \eqref{eq:Dkf} and \eqref{eq:prod} one has
    \begin{align}
        Df[h] &= \sigma_1^{1-1/L}\bigg(\sum_{l=1}^L\prod_{m\neq l}\exp(tr_m)\,e_1(d_l)^Th_le_1(d_{l-1})\bigg)e_1(d_L)e_1(d_0)^T + C\\&=\sigma_1^{1-1/L}\bigg(\sum_{l=1}^L\exp(-tr_l)\,e_1(d_l)^Th_le_1(d_{l-1})\bigg)e_1(d_L)e_1(d_0)^T + C
    \end{align}
    for some $C\in \big(e_1(d_L)e_1(d_0)\big)^{\perp}$, and
    \begin{align}
        D^2f[\nu_1,h] &= \sigma_1^{1-2/L}S_1^{-1/2}\bigg(\sum_{l_1=1}^L\sum_{l_2\neq l_1}\exp(-tr_{l_1})\,e_1(d_{l_2})^Th_{l_2}e_1(d_{l_2-1})\prod_{m\neq l_1,l_2}\exp(tr_m)\bigg)e_1(d_L)e_1(d_0)^T + C'\\&=\sigma_1^{1-2/L}S_1^{-1/2}\bigg(\sum_{l_1=1}^L\sum_{l_2\neq l_1}\exp(-t(2r_{l_1} + r_{l_2}))e_1(d_{l_2})^Th_{l_2}e_1(d_{l_2-1})\bigg)e_1(d_L)e_1(d_0)^T + C'
    \end{align}
    for some $C'\in\big(e_1(d_L)e_1(d_0)^T\big)^{\perp}$. By \eqref{eq:D3ell}, it follows that
    \begin{align}
        D^3\ell[\nu_1^{\odot 2},h] &= 2\langle D^2f[\nu_1,h],Df[\nu_1]\rangle + \langle D^2f[\nu_1^{\odot 2}],Df[h]\rangle\\&=\sigma_1^{2-3/L}\sum_{l=1}^L\bigg(\exp(-tr_l)\bigg(\frac{S_1^2-S_2}{S_1} + 2(S_1-\exp(-2tr_l))\bigg)e_1(d_l)^Th_le_1(d_{l-1})\bigg),
    \end{align}
    so that
    \begin{align}
        \nabla^3\ell[\nu_1^{\odot 2}] = \bigg(\sigma_1^{2-3/L}\exp(-tr_l)\bigg(\frac{S_1^2-S_2}{S_1} + 2(S_1-\exp(-2tr_l))\bigg)e_1(d_l)e_1(d_{l-1})^T\bigg)_{l=1}^L.
    \end{align}
    Since each of the components of this tuple is nonzero only in the top-left entry, its projection into the span of $\nu_{2:q}$ is zero while its projection into the span of $\nu_1$ is given by
    \begin{align}
        \nu_1^T\nabla^3\ell[\nu_1^{\odot 2}] &= D^3\ell[\nu_1^{\odot 3}] = 3\sigma_1^{2-3/L}\frac{S_1^2-S_2}{S_1^{1/2}}
    \end{align}
    by \eqref{eq:D3ell}, hence
    \begin{align}
        D^3\ell[\nu_1^{\odot 2},A\nabla^3\ell[\nu_1^{\odot 2}]] &= \lambda_1^{-1}(1-\eta\lambda_1)^{-1}\big(D^3\ell[\nu_1^{\odot 3}]\big)^2\\&=\frac{9\sigma_1^{2-4/L}(S_1^2-S_2)^2}{(1-\eta\lambda_1)S_1^2}
    \end{align}
    after substituting $\lambda_1 = \sigma_1^{2-2/L}S_1$. Thus, along the curve $\gamma(t)$ one has
    \begin{align}
        \alpha_{\eta} = -\sigma_1^{2-4/L}\bigg(\frac{9(S_1^2-S_2)^2}{(1-\eta\lambda_1)S_1^2} + \frac{4(S_1^3-3S_2S_1+2S_3)}{3S_1} + \frac{(S_1^2-S_2)^2}{S_1^2}\bigg).
    \end{align}
    Finally, note that for any $j=1,2,3$ one has $\dot{S}_j(0) = -2j\sum_{l=1}^Lr_l = -2j\sum_{l=1}^L(a_l-a_{l-1}) = 0$ since $a_0 = a_L = 0$, and similarly $\dot{\lambda_1}(0) = \sigma_1^{2-2/L}\dot{S}_1(0) = 0$. It follows that $\dot{\alpha}_{\eta}(0) = 0$, thus completing the proof.
\end{proof}




\end{document}